\newtheorem{theorem}{Theorem}[section]
\newtheorem{lemma}[theorem]{Lemma}
\newtheorem{definition}[theorem]{Definition}
\newcommand{\eg}{{\em e.g.,~}}     
\newcommand{\ie}{{\em i.e.,~}}      
\title{Revisiting Self-Supervised Heterogeneous Graph Learning from Spectral Clustering Perspective}
\author{\textbf{Yujie Mo$^{1,2}$} \quad  \textbf{Zhihe Lu$^2$} \quad \textbf{Runpeng Yu$^2$} \quad \textbf{Xiaofeng Zhu$^{1*}$} \quad
\textbf{Xinchao Wang$^{2}$}\thanks{Corresponding authors}\\
$^1$School of Computer Science and Engineering, \\
University of Electronic Science and Technology of China\\
$^2$National University of Singapore\\
}
\begin{document}

\maketitle

\begin{abstract}
Self-supervised heterogeneous graph learning (SHGL) has shown promising potential in diverse scenarios.
However,  while existing SHGL methods share a similar essential with clustering approaches, they encounter two significant limitations: (i) noise in graph structures is often introduced during the message-passing process to weaken node representations, and (ii) cluster-level information may be inadequately captured and leveraged, diminishing the performance in downstream tasks. 
In this paper, we address these limitations by theoretically revisiting SHGL from the spectral clustering perspective and introducing a novel framework enhanced by rank and dual consistency constraints.
Specifically, our framework incorporates a rank-constrained spectral clustering method that refines the affinity matrix to exclude noise effectively. 
Additionally, we integrate node-level and cluster-level consistency constraints that concurrently capture invariant and clustering information to facilitate learning in downstream tasks. 
We theoretically demonstrate that the learned representations are divided into distinct partitions based on the number of classes and exhibit enhanced generalization ability across tasks. 
Experimental results affirm the superiority of our method, showcasing remarkable improvements in several downstream tasks compared to existing methods.

\end{abstract}


\section{Introduction}

Self-supervised heterogeneous graph learning (SHGL) aims to effectively process diverse types of nodes and edges in the heterogeneous graph, producing low-dimensional representations without the need for human annotations \cite{CKD00010YCLF022, ZhaoWSHSY21, liu2022revisiting}. 
Thanks to its remarkable capabilities, SHGL has attracted significant interest and has been utilized in a broad array of applications, including recommendation systems \cite{shi2018heterogeneous, hu2018leveraging}, social network analysis \cite{shi2016survey, he2022analyzing}, and molecule design \cite{yu2022molecular, wu2023molformer}.



Existing SHGL methods can be broadly classified into two groups, \ie meta-path-based methods and adaptive-graph-based methods. 
Meta-path-based methods typically utilize pre-defined meta-paths to explore relationships among nodes that may share the same label in the heterogeneous graph \cite{jing2021hdmi, zhu2022structure}. 
However, building meta-paths requires extensive prior knowledge and incurs additional computation costs \cite{zhang2022simple}.
To address these drawbacks, adaptive-graph-based methods dynamically assign significant weights to node pairs likely to share the same label, using the adaptive graph structure rather than traditional meta-paths \cite{mo2024selfsupervised}.
Both groups of SHGL methods facilitate message-passing among nodes within the same class, either through meta-path-based graphs or adaptive graph structures. 
As a result, this process minimizes intra-class differences and promotes a clustered pattern in the learned representations, aligning these methods closely with conventional clustering techniques.

\begin{figure*}[!t]
\centering
		\subfigure{{\includegraphics[scale=0.79]{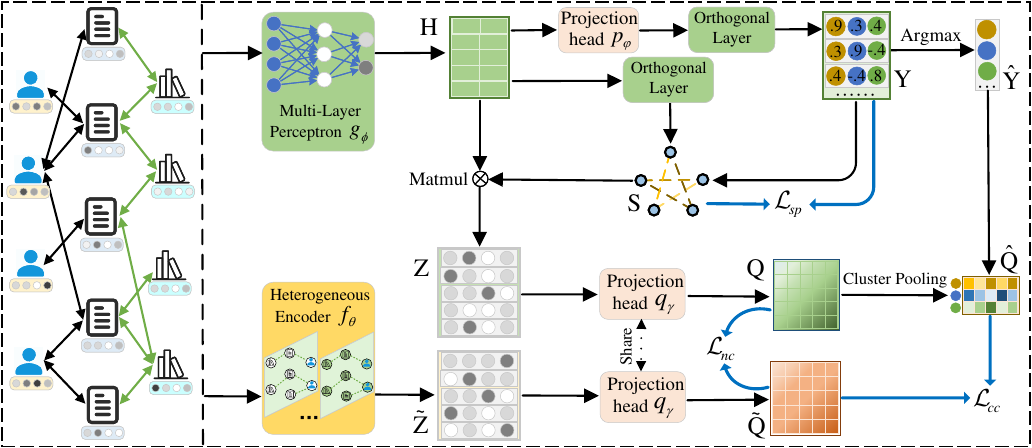
  }}}
		\caption{
 The flowchart of SCHOOL, which first employs the Multi-Layer Perception $g_{\phi}$ to derive semantic representations $\mathbf{H}$, followed by obtaining orthogonal cluster assignment matrix $\mathbf{Y}$ and orthogonal  ${\mathbf{H}}$. Subsequently, SCHOOL filters noisy connections by deriving the rank-constrained affinity matrix $\mathbf{S}$, which is further used to multiply with $\mathbf{H}$ and then obtain node representations $\mathbf{Z}$. Meanwhile, SCHOOL employs a heterogeneous encoder $f_{\theta}$ to aggregate information across node types, yielding heterogeneous representations $\tilde{\mathbf{Z}}$. 
 Finally, SCHOOL incorporates spectral loss $\mathcal{L}_{sp}$ to optimize $\mathbf{Y}$ to fit eigenvectors of the Laplacian matrix of $\mathbf{S}$. Moreover, SCHOOL designs node-level (\ie $\mathcal{L}_{nc}$) and cluster-level (\ie $\mathcal{L}_{cc}$) consistency constraints on projected representations (\ie $\mathbf{Q}$ and $\tilde{\mathbf{Q}}$) and cluster representations $\hat{\mathbf{Q}}$ to capture the invariant and clustering information, respectively.
         }
		\label{frame}
\end{figure*}

Despite the effectiveness of previous SHGL methods, they encounter two limitations. 
First, previous methods conduct message-passing relying on meta-path-based graphs and adaptive graph structures, which inevitably include noise, \ie connections among nodes from different classes \cite{jing2021hdmi, wang2023heterogeneous}. Consequently, such noise compromises the identifiability of node representations after the message-passing process. 
Second, while previous methods exhibit clustering characteristics, they typically emphasize the node-level consistency only, neglecting to capture and leverage the cluster-level information effectively \cite{WangLHS21, mo2024selfsupervised}. 
This may not fully exploit the potential benefits of clustering for representation learning, thereby diminishing the performance of downstream tasks.


Based on the above analysis, it is feasible to analyze previous SHGL methods from a clustering perspective thanks to their close connection to clustering techniques and further optimize the graph structures to mitigate noisy connections as well as harness the cluster-level information to enhance previous SHGL. 
To achieve this, there are three key challenges, 
\ie (i) How to formally understand previous SHGL methods from the clustering perspective? 
(ii) With insights from the clustering analysis, how to learn an adaptive graph structure that effectively captures intra-class connections while filtering out inter-class noise?
(iii) How to enable the effective incorporation of cluster-level information in the heterogeneous graph to boost the performance of downstream tasks?




In this paper, we address the outlined challenges by first theoretically revisiting previous SHGL methods from a clustering perspective and then introducing a novel framework, termed \underline{S}pectral \underline{C}lustering-inspired \underline{H}eter\underline{O}gene\underline{O}us graph \underline{L}earning (SCHOOL for short), that incorporates rank-constrained spectral clustering and dual consistency constraints, as depicted in Figure \ref{frame}.
Specifically, we start by proving that existing SHGL can be reformulated as spectral clustering with an additional regularization term under the assumption of orthogonality, thus addressing \textbf{challenge (i)} and laying the foundational theory for our approach. 
Next, to tackle \textbf{challenge (ii)}, we propose an efficient spectral clustering technique that includes a rank constraint on the affinity matrix, aiming to effectively mitigate noisy connections among different classes.
Furthermore, to resolve \textbf{challenge (iii)}, we design dual consistency constraints at both node and cluster levels to capture invariant and clustering information, respectively, which reduces the intra-cluster differences and enhances the performance of downstream tasks. 
Finally, theoretical analysis indicates that the learned representations are divided into distinct partitions corresponding to the number of classes, and are demonstrated superior generalization ability compared to those derived from previous SHGL methods.

 Compared to previous SHGL works, our contributions can be summarized as follows:
\begin{itemize}
    \item To the best of our knowledge, we make the first attempt to theoretically revisit previous SHGL methods from the spectral clustering perspective in a unified manner.

    \item We adaptively learn a rank-constrained affinity matrix to mitigate noisy connections inherent in previous SHGL methods. Moreover, we introduce dual consistency constraints to capture both invariant and clustering information to enhance the effectiveness of our method.


    \item We theoretically demonstrate that the proposed method divides the learned representations into distinct partitions based on the number of classes, instead of dimensions in previous SHGL methods. Furthermore, the representations obtained by this method exhibit enhanced generalization ability compared to those derived from previous SHGL methods.

    \item We experimentally manifest the effectiveness of the proposed method across a variety of downstream tasks, using both heterogeneous and homogeneous graph datasets, compared to numerous state-of-the-art methods.
\end{itemize}

\section{Method}
\label{method}
$\textbf{Notations.}$ Let $\mathbf{G}=(\mathcal{V}, \mathcal{E}, \mathbf{X}, \mathcal{T}, \mathcal{R})$ represent a heterogeneous graph, where $\mathcal{V}$ and $\mathcal{E}$ indicate set of nodes and set of edges, respectively. $\mathbf{X}= \left\{\mathbf{x}_{i}\right\}_{i=1}^{n}$ denotes the matrix of node features, where $n$ indicates the number of nodes. Moreover, $\mathcal{T}$ and $\mathcal{R}$ indicate set of node types and set of edge types, respectively. Given the heterogeneous graph $\mathbf{G}$, most existing SHGL methods utilize meta-paths or adaptive graph structures to explore connections among nodes within the same class, thus exhibiting the characteristics of clustering and obtaining discriminative representations. 
To gain a deeper insight of previous SHGL methods, we first propose to revisit them from a clustering perspective as follows.

\subsection{Revisiting Previous SHGL Methods from Spectral Clustering}


As mentioned above, previous SHGL methods tend to conduct clustering implicitly, relying on meta-path-based graphs or adaptive graph structures. For example, given an academic heterogeneous graph with several node types (\ie paper, author, and subject), for the meta-path-based methods, if two papers belong to the same class, there may exist a meta-path “paper-subject-paper” to connect them (\ie two papers are grouped into the same subject). Similarly, for the adaptive-graph-based methods, when two papers belong to the same class, the adaptive graph structures likely assign large weights to connect them. Therefore, representations of nodes within the same class will be close to each other after the message-passing process, thus implicitly presenting a clustered pattern. 


Based on the above observation, actually, we can further theoretically understand previous SHGL methods from the clustering perspective. To do this, we first give the following definition.
\begin{definition} (Spectral Clustering)
    Given the Laplacian matrix $\mathbf{L}$, the optimization problem of the spectral clustering can be described as follows:
    \begin{equation}
        \min _{\mathbf{H}} \operatorname{Tr}\left(\mathbf{H}^{T} \mathbf{L} \mathbf{H}\right) \operatorname{\text { s.t., }} \mathbf{H}^{T} \mathbf{H}=\mathbf{I},
    \end{equation}
    where $\mathbf{L} = \mathbf{D}-\mathbf{W}$,  $\mathbf{W} \in \mathbb{R}^{n \times n}$ is a  data similarity matrix, $\mathbf{D}$ is  a diagonal matrix whose entries are column sums of $\mathbf{W}$, $\mathbf{H} \in \mathbb{R}^{n \times d}$ is  a representations matrix, $\operatorname{Tr}(\cdot)$ indicates the matrix trace, and $\mathbf{I}$ indicates the identity matrix.
\label{def21}
\end{definition}
According to Definition \ref{def21}, for both meta-path-based methods \cite{CPIM,WangLHS21, wang2023heterogeneous} and adaptive-graph-based SHGL methods \cite{mo2024selfsupervised}, we then have  Theorem \ref{thm1}, whose proof can be found in Appendix \ref{proof22}.
\begin{theorem}
Assume the learned representations $\mathbf{H}$ are orthogonal, optimizing previous meta-path-based  and adaptive-graph-based SHGL methods is equivalent to performing spectral clustering with additional regularization, i.e.,
\begin{equation}
    \min _{\mathbf{H}}\mathcal{L}_{SHGL}\cong \min _{\mathbf{H}} \operatorname{Tr}(\mathbf{H}^{T} \hat{\mathbf{L}} \mathbf{H}) + R(\mathbf{H}) \text { s.t., } \mathbf{H}^{T} \mathbf{H}=\mathbf{I} \text {, }
\end{equation}
where $R(\cdot)$ indicates the regularization term, $\hat{\mathbf{L}}$ indicates the Laplacian matrix of the meta-path-based graph or the adaptive graph structure.
\label{thm1}
\end{theorem}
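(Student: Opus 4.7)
The plan is to prove Theorem \ref{thm1} by rewriting the canonical SHGL objectives into a trace form that matches the spectral clustering program of Definition \ref{def21}, while absorbing the remaining non-spectral terms into a catch-all regularizer $R(\mathbf{H})$. The key technical lever is the orthogonality constraint $\mathbf{H}^{T}\mathbf{H}=\mathbf{I}$, which pins down the column norms of $\mathbf{H}$ and thus lets degree-like quadratic forms either collapse into constants or be absorbed into $R(\mathbf{H})$. Before the main calculation, I would unify the two families: both meta-path-based and adaptive-graph-based SHGL methods effectively rely on a weight matrix $\mathbf{W}$ (a meta-path adjacency in the former case, a learned adaptive affinity in the latter) together with a contrastive- or reconstruction-style objective whose skeleton is $-\sum_{i,j}W_{ij}\,\mathbf{h}_i^{T}\mathbf{h}_j$ plus a repulsion/normalizer term $\Omega(\mathbf{H})$. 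I would write this out for each representative reference (\cite{CPIM, WangLHS21, wang2023heterogeneous} for meta-path and \cite{mo2024selfsupervised} for adaptive) and confirm the shared template.

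Next, I would invoke the standard Laplacian identity
\begin{equation}
\operatorname{Tr}(\mathbf{H}^{T}\hat{\mathbf{L}}\mathbf{H}) = \operatorname{Tr}(\mathbf{H}^{T}\hat{\mathbf{D}}\mathbf{H}) - \operatorname{Tr}(\mathbf{H}^{T}\mathbf{W}\mathbf{H}),
\end{equation}
where $\hat{\mathbf{D}}$ is the diagonal degree matrix of $\mathbf{W}$. Under $\mathbf{H}^{T}\mathbf{H}=\mathbf{I}$, the term $\operatorname{Tr}(\mathbf{H}^{T}\hat{\mathbf{D}}\mathbf{H})$ reduces to a weighted sum of squared row norms of $\mathbf{H}$ with weights $\hat{D}_{ii}$, which depends only on $\mathbf{H}$ and therefore can be folded into $R(\mathbf{H})$. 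Thus $-\operatorname{Tr}(\mathbf{H}^{T}\mathbf{W}\mathbf{H})$ matches $\operatorname{Tr}(\mathbf{H}^{T}\hat{\mathbf{L}}\mathbf{H})$ up to a purely $\mathbf{H}$-dependent remainder. Finally, I would verify that the repulsion term $\Omega(\mathbf{H})$---whether arising from the InfoNCE denominator, a cross-view uniformity penalty, or a Frobenius reconstruction penalty---depends only on $\mathbf{H}$ and is therefore a legitimate component of $R(\mathbf{H})$. Composing these steps yields $\min_{\mathbf{H}}\mathcal{L}_{SHGL}\cong\min_{\mathbf{H}}\operatorname{Tr}(\mathbf{H}^{T}\hat{\mathbf{L}}\mathbf{H})+R(\mathbf{H})$ subject to orthogonality.

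The main obstacle is step one, the unification itself. The cited meta-path methods use distinct contrastive formulations (cross-view InfoNCE, multi-relation contrast, and so on), and the adaptive-graph method uses a differently structured alignment loss, so casting them into a common $-\operatorname{Tr}(\mathbf{H}^{T}\mathbf{W}\mathbf{H})+\Omega(\mathbf{H})$ template is not purely mechanical. Concretely, one must (i) linearize exponential similarities via a Taylor expansion whose tightness relies on $\mathbf{h}_i^{T}\mathbf{h}_j$ being small---a regime enforced by the orthogonality hypothesis but which needs explicit justification---and (ii) identify, within each method, the effective affinity $\mathbf{W}$ whose Laplacian plays the role of $\hat{\mathbf{L}}$. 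I would therefore organize the proof as a brief preparatory lemma per family that instantiates the template, followed by a unified trace-identity argument, rather than a single monolithic calculation.
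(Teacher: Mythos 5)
Your plan hinges on the unification step you yourself flag as the obstacle, and that step is where the proposal breaks: you posit that each SHGL objective already carries the graph weights as coefficients of pairwise similarities, i.e.\ a skeleton $-\sum_{i,j}W_{ij}\,\mathbf{h}_i^{T}\mathbf{h}_j+\Omega(\mathbf{H})$, from which the Laplacian would follow by adding and absorbing a degree term. But in the cited meta-path methods (HeCo, HDMI, CPIM, HGCML) the meta-path graphs enter through the \emph{encoder} (message passing), while the contrastive or alignment loss is computed between the two views' \emph{aggregated} representations with positives indexed by node identity; the loss terms are not weighted by $W_{ij}$ at all. Consequently no Taylor linearization of an InfoNCE denominator will surface the meta-path Laplacian, because $\mathbf{W}$ simply is not present in the loss --- it has to be extracted from the aggregation. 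The paper's proof does exactly this: it writes $\mathbf{z}_i^{(r)}=\mathbf{h}_i+\sum_{v_j\in\mathcal{N}(v_i)^{(r)}}\mathbf{h}_j$, takes a representative cross-view alignment loss $\sum_i\|\mathbf{z}_i^{(1)}-\mathbf{z}_i^{(2)}\|_2^2$, and expands it so that the squared terms become $\sum_{i,j}\mathcal{G}_{ij}\|\mathbf{h}_i-\mathbf{h}_j\|_2^2=2\operatorname{Tr}(\mathbf{H}^{T}\hat{\mathbf{L}}\mathbf{H})$ via the standard identity $\operatorname{Tr}(\mathbf{H}^{T}\mathbf{L}\mathbf{H})=\tfrac{1}{2}\sum_{j,k}\mathcal{G}_{jk}\|\mathbf{h}_j-\mathbf{h}_k\|_2^2$, while the leftover cross-view inner-product terms are precisely what becomes $R(\mathbf{H})$. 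The Laplacian is generated by the message-passing structure, not read off the loss, which is also why the paper's related-work discussion stresses that augmentation-graph arguments for contrastive learning do not transfer directly to SHGL.

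The adaptive-graph case exposes the same mismatch: the objective of the adaptive method is $\|\mathbf{H}-\mathbf{S}\mathbf{H}\|_F^2$ plus penalties, which is quadratic in $(\mathbf{I}-\mathbf{S})$ rather than linear in an affinity, so it does not fit your template either. The paper instead uses the self-expressive relation $\mathbf{h}_i=\sum_{j\neq i}\mathbf{s}_{ij}\mathbf{h}_j$ together with the row-sum constraint $\mathbf{s}_i^{T}\mathbf{1}=1$ to derive $\sum_{i,j}\mathbf{s}_{ij}\|\mathbf{h}_i-\mathbf{h}_j\|_2^2=2\operatorname{Tr}(\mathbf{H}^{T}\mathbf{L}_{\mathbf{S}}\mathbf{H})$, absorbing the remaining distance and $\ell_2$ penalties into $R(\mathbf{H})$. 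Your degree-absorption observation is harmless but does no real work; until the effective affinity is extracted from the aggregation (meta-path case) or from the self-expressive constraint (adaptive case), the proposal does not establish the theorem.
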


Theorem \ref{thm1} reveals the connection between previous SHGL and the spectral clustering as well as indicates that previous SHGL heavily relies on the Laplacian matrix of meta-path-based graph or adaptive graph structure. 
Moreover, based on Theorem \ref{thm1}, we can further bridge previous SHGL and the graph-cut algorithm \cite{wang2003image} as follows, whose proof can be found in Appendix \ref{proof23}.

\begin{theorem}
Under the same assumption in Theorem \ref{thm1}, optimizing previous meta-path-based  and adaptive-graph-based SHGL methods is approximate to performing the $\operatorname{RatioCut}\left(V_{1}, \ldots, V_{d}\right)$  algorithm that divides the learned representations into $d$ partitions $\{V_{1}, \ldots, V_{d}\}$, i.e.,
\begin{equation}
        \min _{\mathbf{H}}\mathcal{L}_{SHGL}\cong \min _{\mathbf{H}} \operatorname{RatioCut}\left(V_{1}, \ldots, V_{d}\right),
\end{equation}
where $d$ indicates the dimension of representations $\mathbf{H}$.
\label{thm2}
\end{theorem}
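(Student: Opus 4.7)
My plan is to chain Theorem \ref{thm1} with the classical relaxation argument that underlies unnormalized spectral clustering, and read off the number of partitions from the column dimension of $\mathbf{H}$.

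First, I would invoke Theorem \ref{thm1} to rewrite the SHGL objective as
\begin{equation}
    \min_{\mathbf{H}^{T}\mathbf{H}=\mathbf{I}} \operatorname{Tr}(\mathbf{H}^{T}\hat{\mathbf{L}}\mathbf{H}) + R(\mathbf{H}),
\end{equation}
so that the only piece needing new work is showing that the trace objective (up to the regularizer) is the continuous relaxation of $\operatorname{RatioCut}$. For this I would follow the standard reformulation: given a partition $\{V_{1},\ldots,V_{d}\}$ of the $n$ nodes, define the indicator matrix $\mathbf{H}\in\mathbb{R}^{n\times d}$ by $H_{ij}=1/\sqrt{|V_{j}|}$ if $v_{i}\in V_{j}$ and $0$ otherwise. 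A short calculation would then give $\mathbf{H}^{T}\mathbf{H}=\mathbf{I}$ and, for each column $h_{j}$, $h_{j}^{T}\hat{\mathbf{L}}h_{j}=\operatorname{cut}(V_{j},\bar{V_{j}})/|V_{j}|$, so that $\operatorname{Tr}(\mathbf{H}^{T}\hat{\mathbf{L}}\mathbf{H})=2\operatorname{RatioCut}(V_{1},\ldots,V_{d})$.

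Next, I would observe that the original $\operatorname{RatioCut}$ problem is NP-hard precisely because it restricts $\mathbf{H}$ to this discrete indicator form, while Theorem \ref{thm1} already gave us the orthogonality constraint $\mathbf{H}^{T}\mathbf{H}=\mathbf{I}$ with arbitrary real entries. Hence the SHGL objective is exactly the Ky Fan relaxation of $\operatorname{RatioCut}$: they share the same feasible set of orthonormal indicators and the same objective, so the real-valued minimum lower-bounds $2\operatorname{RatioCut}$. This justifies writing $\min_{\mathbf{H}}\mathcal{L}_{SHGL}\cong \min_{\mathbf{H}}\operatorname{RatioCut}(V_{1},\ldots,V_{d})$ with $d$ partitions matching the number of columns of $\mathbf{H}$, i.e. the representation dimension.

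The main obstacle, I expect, is handling the residual regularizer $R(\mathbf{H})$: since the equivalence in Theorem \ref{thm1} is only up to this term, one needs to argue it does not perturb the partition structure. My plan is to treat $R(\mathbf{H})$ as subdominant on the Stiefel manifold $\{\mathbf{H}^{T}\mathbf{H}=\mathbf{I}\}$ — for instance, by noting that on this manifold $R(\mathbf{H})$ is bounded or only rescales the objective — so that the minimizer still tracks the bottom-$d$ eigenspace of $\hat{\mathbf{L}}$, which is precisely the spectral-clustering solution whose rounding recovers $\operatorname{RatioCut}$. A secondary subtlety is that $\operatorname{RatioCut}$ is defined on a hard partition whereas $\mathbf{H}$ is continuous; I would address this by appealing to the standard ``approximate'' ($\cong$) reading already used in the statement, which is consistent with the relaxation being tight up to a discretization step.
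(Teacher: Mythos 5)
Your proposal is correct and follows essentially the same route as the paper's proof: both chain Theorem \ref{thm1} with the classical RatioCut relaxation, constructing the indicator matrix with entries $1/\sqrt{|V_j|}$, verifying $\mathbf{H}^{T}\mathbf{H}=\mathbf{I}$ and $\operatorname{Tr}(\mathbf{H}^{T}\hat{\mathbf{L}}\mathbf{H})=\operatorname{RatioCut}(V_{1},\ldots,V_{d})$ (your factor of $2$ is a harmless constant), and then relaxing to arbitrary real orthonormal $\mathbf{H}$ to identify the objective with standard spectral clustering into $d$ partitions. Your extra remarks on the regularizer $R(\mathbf{H})$ only make explicit what the paper silently absorbs into the approximate equivalence $\cong$.
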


Theorem \ref{thm2} further indicates that previous SHGL methods divide the learned representations into $d$ partitions, where $d$ is generally much larger than the number of classes. Therefore, Theorem \ref{thm2} connects the traditional graph-cut algorithm with existing SHGL methods, which requires custom analysis. As a result, we theoretically revisit previous SHGL methods from spectral clustering as well as graph-cut perspectives and build the connections between them, thus solving the challenge (i).

\subsection{Rank-Constrained Spectral Clustering}
Based on the connections between previous SHGL methods and the spectral clustering as well as the graph-cut algorithm, we have the observations as follows. First, according to Theorem \ref{thm1}, previous SHGL methods conduct  spectral clustering based on the Laplacian matrix of meta-path-based graph or adaptive graph structure, which may not guarantee optimality and could potentially contain noisy connections, thus affecting the spectral clustering. Second, according to Theorem \ref{thm2}, previous SHGL methods conduct the graph-cut to divide the learned representations into $d$ partitions, which are generally not equal to the number of classes $c$. As a result, optimizing previous SHGL methods becomes a hard or even error problem, and the learned representations can not be clustered well.
Therefore, it is intuitive to mitigate noise in the adaptive graph structure as well as divide the learned representations into exactly $c$ partitions to improve existing SHGL methods. 

Specifically, in this paper, we propose to learn an adaptive affinity matrix with the rank constraint to mitigate noisy connections as much as possible. To do this, we first employ the  Multi-Layer Perceptron (MLP) as the encoder $g_{\phi} \in \mathbb{R}^{f \times d_1}$  to obtain the semantic representations $\mathbf{H}$  by: 
\begin{equation}
    \mathbf{H} =\sigma (g_\phi(\mathbf{X})),
    \label{eq1}
\end{equation}
where $f$ and $d_1$ are the dimensions of node features and representations, respectively, and $\sigma$ is the activation function. After that, we propose to learn an adaptive affinity matrix $\mathbf{S} \in \mathbb{R}^{n \times n}$ based on the semantic representations. Intuitively, in a uncorrelated representations subspace,  a smaller distance $\|\mathbf{h}_i-\mathbf{h}_j\|_2^2$ between semantic representations should be assigned a larger probability $\mathbf{s}_{ij}$. Therefore, it is a natural approach to learn the affinity matrix $\mathbf{S}$ by:
\begin{equation}
    \begin{array}{l}
\min_\mathbf{S} \sum_{i, j=1}^{n}(\|\mathbf{h}_{i}-\mathbf{h}_{j}\|_{2}^{2} \mathbf{s}_{i j}+\alpha\mathbf{s}_{i j}^{2}) ~~
\text {s.t.,} \forall i, \mathbf{s}_{i}^{T} \mathbf{1}=1,0 \leq \mathbf{s}_{i} \leq 1,
\end{array}
\label{eq5}
\end{equation}
where $\alpha$ is a non-negative parameter. 
In Eq. (\ref{eq5}), the first term encourages the affinity matrix to assign large weights to node pairs with small distances. Moreover, the second term avoids the trivial solution that only the nearest node can be the neighbor of $v_i$ with probability 1.
However, similar to previous SHGL methods, usually the affinity matrix learned by Eq. (\ref{eq5}) can not reach the ideal case (\ie having no noisy connections among different classes and containing exactly $c$ connected components). As a result, the noisy connections in the affinity matrix may induce a negative interference during the message-passing process. To solve this issue, we first introduce  the following lemma in \cite{mohar1991laplacian}.
\begin{lemma}
The multiplicity $c$ of the eigenvalue 0 of the
Laplacian matrix $\mathbf{L}_\mathbf{S}$ is equal to the number of connected components in the affinity matrix $\mathbf{S}$.
\label{lem1}
\end{lemma}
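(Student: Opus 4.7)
The plan is to establish the equality $\dim \ker(\mathbf{L}_\mathbf{S}) = c$ by bounding the kernel dimension from both sides. The foundation of both bounds is the quadratic form identity $\mathbf{x}^{T}\mathbf{L}_\mathbf{S}\mathbf{x} = \frac{1}{2}\sum_{i,j}\mathbf{s}_{ij}(x_i - x_j)^2$, which shows in particular that $\mathbf{L}_\mathbf{S}$ is symmetric and positive semi-definite, so the algebraic multiplicity of the eigenvalue $0$ agrees with the dimension of the null space. It thus suffices to compute $\dim \ker(\mathbf{L}_\mathbf{S})$.

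For the lower bound, I would enumerate the connected components of $\mathbf{S}$ as $C_1, \ldots, C_c$ and consider their indicator vectors $\mathbf{1}_{C_1}, \ldots, \mathbf{1}_{C_c} \in \mathbb{R}^n$. Since $\mathbf{s}_{ij} = 0$ whenever $i$ and $j$ lie in different components, a direct row-by-row computation verifies $\mathbf{L}_\mathbf{S} \mathbf{1}_{C_k} = 0$ for each $k$. These indicator vectors have pairwise disjoint supports and are therefore linearly independent, yielding $\dim \ker(\mathbf{L}_\mathbf{S}) \geq c$.

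For the upper bound, I would start from an arbitrary $\mathbf{x} \in \ker(\mathbf{L}_\mathbf{S})$ and invoke $\mathbf{x}^{T}\mathbf{L}_\mathbf{S}\mathbf{x} = 0$ to force $\mathbf{s}_{ij}(x_i - x_j)^2 = 0$ for every pair $(i,j)$. Consequently $x_i = x_j$ whenever $\mathbf{s}_{ij} > 0$, and propagating this equality along paths of positive-weight edges shows that $\mathbf{x}$ is constant on each connected component. Hence $\mathbf{x} \in \operatorname{span}\{\mathbf{1}_{C_1}, \ldots, \mathbf{1}_{C_c}\}$, giving $\dim \ker(\mathbf{L}_\mathbf{S}) \leq c$, and combining the two bounds closes the proof.

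The most delicate step is the upper-bound argument: the pointwise condition $x_i = x_j$ whenever $\mathbf{s}_{ij} > 0$ only constrains directly adjacent coordinates, so I must propagate this equality along paths of positive-weight edges to conclude constancy on an entire component. This is where the graph-theoretic definition of a connected component (any two vertices joined by such a path) enters via a straightforward induction on path length. Once this propagation step is justified, the rest of the argument reduces to standard linear algebra on positive semi-definite matrices.
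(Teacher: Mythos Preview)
Your argument is correct and is the standard proof of this classical fact: the quadratic-form identity gives positive semi-definiteness, the indicator vectors of the components furnish $c$ independent null vectors, and the vanishing of the quadratic form forces any null vector to be constant on each component. The paper does not supply its own proof of this lemma at all; it simply imports the statement from \cite{mohar1991laplacian}, so there is nothing to compare against beyond noting that your write-up is exactly the textbook route. One small point worth aligning with the paper's conventions: since the paper defines $\mathbf{L}_{\mathbf{S}} = \mathbf{D} - \tfrac{\mathbf{S}+\mathbf{S}^{T}}{2}$, your identity $\mathbf{x}^{T}\mathbf{L}_\mathbf{S}\mathbf{x} = \tfrac{1}{2}\sum_{i,j}\mathbf{s}_{ij}(x_i - x_j)^2$ should be read with the symmetrized weights $\tfrac{1}{2}(\mathbf{s}_{ij}+\mathbf{s}_{ji})$ in place of $\mathbf{s}_{ij}$, and the non-negativity you rely on is guaranteed by the constraint $0 \le \mathbf{s}_i \le 1$ imposed in Eq.~(\ref{eq5}).
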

Lemma \ref{lem1} indicates that if the rank 
of $\mathbf{L}_{\mathbf{S}}$ equals to $n-c$, then the affinity matrix $\mathbf{S}$ contains exactly $c$ connected components to achieve the ideal scenario, where $\mathbf{L}_{\mathbf{S}} = \mathbf{D} - \frac{\mathbf{S}+\mathbf{S}^T}{2}$, and $\mathbf{D}$ is the degree matrix of $\frac{\mathbf{S}+\mathbf{S}^T}{2}$. Based on Lemma \ref{lem1}, we can solve the above issue by adding the rank constraint on the affinity matrix, \ie enforcing the smallest $c$ eigenvalues of $\mathbf{L}_{\mathbf{S}}$ to be 0: 
\begin{equation}
    \operatorname{rank}(\mathbf{L}_{\mathbf{S}})=n-c \Rightarrow   \min \sum_{i=1}^{c} \tau _{i}\left(\mathbf{L}_\mathbf{S}\right), 
    \label{eq6}
\end{equation}
where $\tau _{i}(\mathbf{L}_\mathbf{S})$ is the $i$-th smallest eigenvalue of $\mathbf{L}_\mathbf{S}$ and  $\tau _{i}(\mathbf{L}_\mathbf{S}) \ge 0$ because $\mathbf{L}_\mathbf{S}$ is positive semi-definite. 
Moreover, according to Ky Fan’s Theorem \cite{fan1949theorem}, the constraint in Eq. (\ref{eq6}) can be rewritten in the spectral clustering form as follows (derivation listed in Appendix \ref{derivationeq7}).
\begin{equation}
       \sum_{i=1}^{c} \tau _{i}\left(\mathbf{L}_\mathbf{S}\right) 
    = \min _{\mathbf{F}^{T} \mathbf{F}=\mathbf{I}} \operatorname{Tr}(\mathbf{F}^{T} \mathbf{L}_\mathbf{S} \mathbf{F})
    = \min _{\mathbf{F}^{T} \mathbf{F}=\mathbf{I}} \frac{1}{2} \sum_{i, j} \mathbf{s}_{i j}\left\|\mathbf{f}_{i}-\mathbf{f}_{j}\right\|_{2}^{2},
    \label{eq7}
\end{equation}
where $\mathbf{F} \in \mathbb{R}^{n \times c}$ is formed by part eigenvectors of $\mathbf{L}_\mathbf{S}$ after the eigendecomposition. Therefore, based on Eq. (\ref{eq7}), we can  add the rank constraint on the affinity matrix and reformulate Eq. (\ref{eq5}) as:
\begin{equation}
        \begin{array}{l}\min_{\mathbf{S},\mathbf{F}} \sum_{i, j=1}^{n}(\|\mathbf{h}_{i}-\mathbf{h}_{j}\|_{2}^{2} \mathbf{s}_{i j}+\alpha\mathbf{s}_{i j}^{2}+\beta\left\|\mathbf{f}_{i}-\mathbf{f}_{j}\right\|_{2}^{2}\mathbf{s}_{i j}) \\
\text { s.t., } \forall i, \mathbf{s}_{i}^{T} \mathbf{1}=1,0 \leq \mathbf{s}_{i} \leq 1, \mathbf{F}^T\mathbf{F}=\mathbf{I},
\end{array}
\label{eq8}
\end{equation}
where $\beta$ is a non-negative parameter. Eq. (\ref{eq8}) can be solved by applying the alternating optimization approach. Specifically, when $\mathbf{S}$ is fixed, Eq. (\ref{eq8}) becomes $\min_{\mathbf{F}^T\mathbf{F}=\mathbf{I}}\sum_{i, j=1}^{n}\left\|\mathbf{f}_{i}-\mathbf{f}_{j}\right\|_{2}^{2}\mathbf{s}_{i j}=2\min_{\mathbf{F}^{T} \mathbf{F}=\mathbf{I}} \operatorname{Tr}(\mathbf{F}^{T} \mathbf{L}_\mathbf{S} \mathbf{F})$, whose optimal solution $\mathbf{F}$ is the eigenvectors of $\mathbf{L}_\mathbf{S}$ corresponding to the $c$ smallest eigenvalues.  When $\mathbf{F}$ is fixed, denote $\mathbf{d}_{ij} = \|\mathbf{h}_{i}-\mathbf{h}_{j}\|_{2}^{2} + \beta\left\|\mathbf{f}_{i}-\mathbf{f}_{j}\right\|_{2}^{2}$, Eq. (\ref{eq8})  can be written in the vector form, \ie
\begin{equation}
    \min _{\mathbf{s}_{i}^{T} \mathbf{1}=1,0 \leq \mathbf{s}_{i} \leq 1,} \|\mathbf{s}_{i}+\frac{1}{2 {\alpha}} \mathbf{d}_{i}\|_{2}^{2}.
    \label{eq9}
\end{equation}
According to the Lagrangian function of Eq. (\ref{eq9}) and  the KKT condition \cite{boyd2006convex}, we can obtain the closed-form solution of the element $\mathbf{s}_{ij}$ in the affinity matrix, \ie
\begin{equation}
    \mathbf{s}_{ij} = (-\frac{1}{2 {\alpha}} \mathbf{d}_{ij} + \lambda )_+,
    \label{eq10}
\end{equation}
where $\lambda$ is a non-negative parameter, and $(\cdot)_+$ indicates $\max \{\cdot, 0\}$. In practice, a sparse affinity matrix usually obtains better performance and reduces computation costs. Therefore, we only calculate $\mathbf{s}_{ij}$ between node $v_i$ and its $k$ nearest neighbors.
Then parameters $\alpha$ and $\lambda$ can be further determined, details listed in Appendix \ref{closedform}.

However, when fixing $\mathbf{S}$ and optimizing $\mathbf{F}$ in Eq. (\ref{eq8}), computation costs of obtaining eigenvectors $\mathbf{F}$ remain prohibitive due to the cubic time complexity of the eigendecomposition.
To address this issue, we propose to replace the eigendecomposition with a projection head and orthogonal layer \cite{shaham2018spectralnet}. Specifically, we first employ the projection head $p_\varphi  \in \mathbb{R}^{d_1 \times c}$ to map semantic representations to the cluster assignment space $\mathbf{P} \in \mathbb{R}^{n \times c}$, \ie
\begin{equation}
    \mathbf{P} =\sigma (p_\varphi(\mathbf{H})),
    \label{eq11}
\end{equation}
where $\sigma$ is  the activation function. After that, we employ an orthogonal layer to derive the orthogonal cluster assignment matrix $\mathbf{Y} \in \mathbb{R}^{n \times c}$ (orthogonal derivation listed in Appendix \ref{orthgo}), \ie
\begin{equation}
    \mathbf{Y}=\sqrt{n} \mathbf{P}\left(\mathbf{R}^{-1}\right),
    \label{eq12}
\end{equation}
where $\mathbf{R}$ is an upper triangular matrix obtained from the QR decomposition \cite{gander1980algorithms} (\ie $\mathbf{P} = \mathbf{E}\mathbf{R}$ and $\mathbf{E}^T\mathbf{E}=\mathbf{I}$) on the full rank $\mathbf{P}$. Similarly, we can also implement the orthogonal layer to achieve the uncorrelation (\ie $\mathbf{H}^T\mathbf{H}=\mathbf{I}$) on the representations subspace. 

As a result, the projection head and the orthogonal layer act as a linear transformation to achieve the orthogonality constraint on $\mathbf{Y}$. To replace the eigendecomposition, the cluster assignment matrix $\mathbf{Y}$ should further fit the eigenvectors $\mathbf{F}$ in Eq. (\ref{eq8}). 
To do this, we design a spectral loss $\mathcal{L}_{sp}$ to optimize the parameters in $p_\varphi$ to simulate the spectral clustering of the third term in Eq. (\ref{eq8}), \ie
\begin{equation}
    \mathcal{L}_{sp}=\frac{1}{n^{2}} \textstyle\sum_{i, j=1}^{n} \mathbf{s}_{i j}\left\|{\mathbf{y}}_{i}-{\mathbf{y}}_{j}\right\|^{2}_2 - \gamma H(\mathbf{Y}),
    \label{eq13}
\end{equation}
where $\gamma$ is a non-negative parameter, $H(\mathbf{Y}) = - {\textstyle \sum_{i=1}^{c}} P(\mathbf{y}^i)\text{log}P(\mathbf{y}^i)$ is the entropy of cluster assignment probabilities  $P(\mathbf{y}^i) =\frac{1}{n} {\textstyle \sum_{j=1}^{n}} \mathbf{y}_j^i$, and $\mathbf{y}_j^i$ indicates the $i$-th column and $j$-th row of $\mathbf{y}$.
According to Eq. (\ref{eq7}), the first term in Eq. (\ref{eq13}) simulates the spectral clustering to enforce the learned cluster assignment matrix $\mathbf{Y}$ to approximate eigenvectors $\mathbf{F}$, and the second term is a widely used regularization term \cite{bai2020sparse, huang2020deep} to avoid the trivial solution that most nodes are assigned to the same cluster. Therefore, when $\mathbf{S}$ is fixed, we optimizing $\mathbf{Y}$ to approach the optimal $\mathbf{F}$ by achieving orthogonality with Eq. (\ref{eq12}), and fitting eigenvectors $\mathbf{F}$ with Eq. (\ref{eq13}). As a result, this reduces the cubic time complexity of eigendecomposition to $\mathcal{O}(nd^2+ nc^2+ nkd + nkc + c^3 + d^3)$, where $d^2, c^2<n$, thus is linear to the sample size, details are shown in Appendix \ref{complexity}.
Finally, the proposed method still optimizes the affinity matrix $\mathbf{S}$ and eigenvectors $\mathbf{F}$ in Eq. (\ref{eq8}) in an alternating approach, \ie fix $\mathbf{F}$ and then obtain the closed-form solution of $\mathbf{S}$, and fix $\mathbf{S}$ and then optimize parameters (\ie $g_\phi$ and $p_\varphi$) to update $\mathbf{Y}$ to approach the optimal $\mathbf{F}$.

Therefore, the proposed method obtains the affinity matrix with exactly $c$ connected components to mitigate noisy connections in an effective and efficient way. Then we can obtain the node representations $\mathbf{Z} = \mathbf{S}\mathbf{H}$, which is expected to conduct message-passing among nodes within the same class. Moreover, we can bridge the connections between the proposed method and the spectral clustering as well as the graph-cut algorithm as follows, whose proof can be found in Appendix \ref{proof25}.
\begin{theorem}
Optimizing the spectral loss $\mathcal{L}_{sp}$ leads to performing the spectral clustering  based on the affinity matrix $\mathbf{S}$ with $c$ connected components and conducting RatioCut ($V_1,\ldots, V_{c}$) algorithm to divide the learned representations into $c$ partitions, i.e.,
\begin{equation}
\min\mathcal{L}_{sp}\Rightarrow\min\operatorname{Tr} (\mathbf{Y}^T\mathbf{L}_\mathbf{S}\mathbf{Y})\Rightarrow\min \operatorname{RatioCut}(V_{1},\ldots,V_{c}).
\end{equation}
\label{thm3}
\end{theorem}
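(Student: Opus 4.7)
The plan is to prove the two implications in sequence. For the first implication $\min \mathcal{L}_{sp} \Rightarrow \min \operatorname{Tr}(\mathbf{Y}^T \mathbf{L}_\mathbf{S} \mathbf{Y})$, I would invoke the standard spectral graph identity that transforms a weighted pairwise squared-distance sum into a Laplacian quadratic form. Concretely, since $\mathbf{L}_\mathbf{S} = \mathbf{D} - \tfrac{1}{2}(\mathbf{S}+\mathbf{S}^T)$ with $\mathbf{D}$ the degree matrix of the symmetrized affinity, one obtains
\begin{equation}
\tfrac{1}{2}\sum_{i,j=1}^{n} \mathbf{s}_{ij}\|\mathbf{y}_i-\mathbf{y}_j\|_2^{2} \;=\; \operatorname{Tr}(\mathbf{Y}^T \mathbf{L}_\mathbf{S} \mathbf{Y}),
\end{equation}
so up to the constant factor $1/n^{2}$ the first term of $\mathcal{L}_{sp}$ is exactly the desired trace. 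The remaining entropy term $-\gamma H(\mathbf{Y})$ acts only as a balance-promoting regularizer (preventing the degenerate single-cluster assignment), so it does not alter the spectral-clustering objective at the minimum; I would state this explicitly and cite the same role played by the entropy regularizer in prior clustering works referenced in the paper.

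For the second implication $\min \operatorname{Tr}(\mathbf{Y}^T \mathbf{L}_\mathbf{S} \mathbf{Y}) \Rightarrow \min \operatorname{RatioCut}(V_1,\ldots,V_c)$, I would follow the classical relaxation argument (the same machinery used in the appendix proof of Theorem~\ref{thm2}), but now with the partition count set to $c$ rather than $d$. Given any partition $\{V_1,\ldots,V_c\}$ of $\mathcal{V}$, define the normalized indicator matrix $\hat{\mathbf{Y}} \in \mathbb{R}^{n\times c}$ by
\begin{equation}
\hat{\mathbf{y}}_i^{\,j} \;=\; \frac{1}{\sqrt{|V_j|}}\,\mathbbm{1}[v_i \in V_j].
\end{equation}
A direct computation then shows $\hat{\mathbf{Y}}^T \hat{\mathbf{Y}} = \mathbf{I}$ and $\operatorname{Tr}(\hat{\mathbf{Y}}^T \mathbf{L}_\mathbf{S} \hat{\mathbf{Y}}) = \operatorname{RatioCut}(V_1,\ldots,V_c)$. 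Hence the discrete RatioCut problem is exactly the constrained trace-minimization problem restricted to such indicator matrices, and dropping the discreteness constraint yields the continuous relaxation solved by our $\mathbf{Y}$, which is orthogonal by construction via the QR-based orthogonal layer in Eq.~(\ref{eq12}).

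The key observation that makes the relaxation \emph{tight} (rather than just an upper bound) is that $\mathbf{S}$ has exactly $c$ connected components by the rank constraint enforced in Eq.~(\ref{eq6}) together with Lemma~\ref{lem1}. Consequently, $\mathbf{L}_\mathbf{S}$ has exactly $c$ zero eigenvalues, and the optimal $\mathbf{Y}$ of the continuous problem spans the null space of $\mathbf{L}_\mathbf{S}$, which is itself spanned by the indicator vectors of the $c$ connected components. Thus the continuous optimum coincides (up to an orthogonal transformation of columns) with a valid $c$-way indicator matrix, and the minimum value equals $\operatorname{RatioCut}(V_1,\ldots,V_c)$ for the partition induced by the connected components.

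I expect the main obstacle to be the rigorous justification that the entropy term does not perturb the equivalence: it is easy to assert informally, but a careful argument requires showing that at optima the balance regularizer is inactive or contributes only lower-order terms to the minimization landscape. I would handle this by arguing that $H(\mathbf{Y})$ is bounded above by $\log c$, achieved precisely by balanced cluster-size configurations, which the $c$ connected components of $\mathbf{S}$ naturally approximate when the affinity learning is faithful; thus the two objectives are compatible rather than competing. The remaining steps (the trace identity and the indicator-vector bookkeeping) are routine, so the proof reduces essentially to this compatibility argument plus the citation of standard spectral-clustering relaxation results.
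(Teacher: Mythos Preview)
Your proposal is correct and follows essentially the same route as the paper: the first implication is obtained via the identity $\sum_{i,j}\mathbf{s}_{ij}\|\mathbf{y}_i-\mathbf{y}_j\|_2^2 = 2\operatorname{Tr}(\mathbf{Y}^T\mathbf{L}_\mathbf{S}\mathbf{Y})$ (the paper phrases this as an application of Ky Fan's Theorem) together with the remark that the entropy term is a regularizer, and the second implication is exactly the indicator-matrix argument of Theorem~\ref{thm2} specialized to $c$ partitions. Your discussion of tightness via the $c$ connected components and your explicit worry about the entropy term go somewhat beyond what the paper actually proves, but the underlying strategy is identical.
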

Theorem \ref{thm3} indicates that the proposed method conducts the spectral clustering as previous SHGL methods, but is performed on an affinity matrix with exactly $c$ connected components (verified in Section \ref{visualization}), thus mitigating noisy connections from different classes and solving the challenge (ii). Moreover, the proposed method divides the learned representations into $c$ partitions, which is a better optimization goal than previous SHGL methods to obtain discriminative representations.

\subsection{Dual Consistency Constraints}
The message-passing among nodes within the same class reduces intra-class differences and enhances node representations $\mathbf{Z}$. Meanwhile, the message-passing among nodes from different types also contributes to obtaining task-related contents and benefits downstream tasks \cite{zhang2022simple}. To do this, we propose to aggregate the information of nodes from different types in the heterogeneous graph with a heterogeneous encoder $f_\theta  \in \mathbb{R}^{f \times d_1}$.

Specifically, for the node $v_i$, we concatenate the information of itself and its relevant one-hop neighbors (\ie nodes of other types) based on edge types in $\mathcal{R}$, and then derive the heterogeneous representations $\tilde{\mathbf{Z}}$ by:
\begin{equation}
    \tilde{\mathbf{z}}_i = \frac{1}{|\mathcal{R}|} \sum_{r \in \mathcal{R}}^{} \sigma(f_\theta(\mathbf{x}_i)|| \sum_{v_j \in \mathcal{N}_{i,r}}^{}f_\theta(\mathbf{x}_j) ),
\end{equation}
where  $\sigma$ is the activation function, $|\mathcal{R}|$ indicates the number of edge types, $\mathcal{N}_{i,r}$ indicates the set of one-hop neighbors of node $v_i$ based on the edge type $r \in \mathcal{R}$, $f_\theta(\cdot)$ indicates the linear transformation, and $\cdot || \cdot$ indicates the concatenation operation. Therefore, the
heterogeneous representations $\tilde{\mathbf{Z}}$ aggregate the information of nodes from different types to introduce more task-related contents.


Given node representations $\mathbf{Z}$ and heterogeneous representations $\tilde{\mathbf{Z}}$, most previous SHGL methods  utilize the node-level consistency constraint (\eg Info-NCE loss \cite{oord2018representation}) to capture the invariant information between them and enhance the effectiveness \cite{WangLHS21, mo2024selfsupervised}. In addition, according to Theorem \ref{thm1}, previous SHGL methods actually perform spectral clustering to learn node representations. However, previous SHGL methods fail to utilize the cluster-level information outputted by the spectral clustering, thus weakening the downstream task performance. To solve this issue, we design dual consistency constraints to capture the invariant information as well as the clustering information between $\mathbf{Z}$ and $\tilde{\mathbf{Z}}$.

Specifically, we first employ a projection head $q_\gamma  \in \mathbb{R}^{d_1 \times d_2}$ to map both  $\mathbf{Z}$ and $\tilde{\mathbf{Z}}$ into the same latent space, \ie  $\mathbf{Q} = q_\gamma(\mathbf{Z})$ and $\tilde{\mathbf{Q}} = q_\gamma(\tilde{\mathbf{Z}})$, where $d_2$ is the projected dimension. Then we follow previous works \cite{WangLHS21, wang2023heterogeneous} to design a node-level consistency constraint to capture the invariant information between $\mathbf{Q}$ and $\tilde{\mathbf{Q}}$, \ie
\begin{equation}
    \mathcal{L}_{nc} = \| \mathbf{Q}-\tilde{\mathbf{Q}}  \|^2_F + \eta \operatorname{log}\textstyle \sum_{i, j=1}^{d}e^{\mathbf{c}_{ij}},
    \label{eq15}
\end{equation}
where $\mathbf{C} = \mathbf{Q}^T\mathbf{Q} + \tilde{\mathbf{Q}}^T\tilde{\mathbf{Q}}$, and $\eta$ is a non-negative parameter. Similar to previous works,  the first term in Eq. (\ref{eq15}) enforces representations in $\tilde{\mathbf{Q}}$ agree with the corresponding representations in  $\mathbf{Q}$, thus capturing the invariant information between them. The second term enables different dimensions of $\mathbf{Q}$ and $\tilde{\mathbf{Q}}$ to uniformly distribute over the latent space to avoid collapse. 

In addition to the node-level consistency constraint, we further design the cluster-level consistency constraint to capture the clustering information from the cluster assignment matrix $\mathbf{Y}$. To do this, we first obtain the cluster indicator matrix $\hat{\mathbf{Y}}$ based on the cluster assignment matrix $\mathbf{Y}$, \ie $\hat{\mathbf{Y}} = \operatorname{argmax(\mathbf{Y})}$. After that, we conduct average pooling on node representations that possess the same cluster indicator to obtain  the $j$-th cluster representation $\hat{\mathbf{q}}_j$, \ie
\begin{equation}
    \hat{\mathbf{q}}_j = \frac{1}{|V_j|} \textstyle\sum_{v_i \in V_j}^{} \mathbf{q}_i,
    \label{eq16}
\end{equation}
where $V_j$ indicates the  set of nodes whose cluster indicators equal to $j$, and $|V_j|$ indicates the number of nodes in $V_j$. Then we design a cluster-level consistency constraint on cluster representations $\hat{\mathbf{Q}}$ and projected representations $\tilde{\mathbf{Q}}$ to capture the clustering information, \ie 
\begin{equation}
    \mathcal{L}_{cc} =\textstyle \sum_{i=1}^{n}\| \tilde{\mathbf{q}}_i -\hat{\mathbf{q}}_{\mathbf{y}_i} \|_2^2,
    \label{eq17}
\end{equation}
where $\hat{\mathbf{q}}_{\mathbf{y}_i}$ indicates the cluster representation whose label equals to $\mathbf{y}_i$.
Eq. (\ref{eq17}) enables the projected representation $\tilde{\mathbf{q}}_i$ and the cluster representation $\hat{\mathbf{q}}_{\mathbf{y}_i}$ to align each other.  As a result, representations capture the clustering information based on cluster indicators  and reduce intra-cluster differences to improve the performance of downstream tasks, thus solving challenge (iii).

We integrate the spectral loss in Eq. (\ref{eq13}), the node-level consistency constraint in Eq. (\ref{eq15}), with the cluster-level consistency constraint in Eq. (\ref{eq17}) to have the objective function as:
\begin{equation}
    \mathcal{J} = \mathcal{L}_{sp} + \mu \mathcal{L}_{nc} + \delta \mathcal{L}_{cc},
    \label{eq18}
\end{equation}
where $\mu$ and $\delta$ are non-negative parameters. Finally, we concatenate node representations $\mathbf{Z}$ with heterogeneous representations $\tilde{\mathbf{Z}}$ to obtain representations for downstream tasks. Actually, for the learned representations, we have the following Theorem, whose proof can be found in Appendix \ref{proof26}.

\begin{theorem}
The proposed method with dual  consistency constraints achieves a lower boundary of the model complexity $C$ and a higher generalization ability boundary $G$ than previous SHGL with the node-level consistency constraint only, i.e.,
\begin{equation}
\inf(C_{SCHOOL}) < \inf(C_{SHGL}),\quad
\sup(G_{SCHOOL}) > \sup(G_{SHGL}),
\end{equation}
    where $\inf(\cdot)$ and $\sup(\cdot)$ indicates lower bound and upper bound, respectively.
    \label{thm4}
\end{theorem}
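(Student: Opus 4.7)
The plan is to establish both inequalities through a standard learning-theoretic argument built on hypothesis-class inclusion and Rademacher complexity. First I would make precise what $C_{SHGL}$ and $C_{SCHOOL}$ mean by identifying each method with its induced hypothesis class: $\mathcal{H}_{SHGL}$ consists of encoders trained under $\mathcal{L}_{sp}+\mu\mathcal{L}_{nc}$, while $\mathcal{H}_{SCHOOL}$ is the further-restricted subclass obtained by additionally imposing $\mathcal{L}_{cc}$ from Eq.~(\ref{eq17}). Because $\mathcal{L}_{cc}$ forces each projected representation $\tilde{\mathbf{q}}_i$ to contract toward its cluster centroid $\hat{\mathbf{q}}_{\mathbf{y}_i}$, any admissible hypothesis in $\mathcal{H}_{SCHOOL}$ must satisfy an intra-cluster alignment condition that is absent in $\mathcal{H}_{SHGL}$, yielding $\mathcal{H}_{SCHOOL}\subseteq\mathcal{H}_{SHGL}$.

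For the complexity bound, I would exploit the monotonicity of Rademacher complexity under class inclusion, giving $\hat{\mathfrak{R}}_n(\mathcal{H}_{SCHOOL})\leq \hat{\mathfrak{R}}_n(\mathcal{H}_{SHGL})$, and thus $\inf(C_{SCHOOL})\leq \inf(C_{SHGL})$. To upgrade this to a strict inequality, I would use a covering-number argument: the cluster-alignment condition effectively replaces a $d$-dimensional representation ball with a union of small balls centered at the $c$ cluster representations $\{\hat{\mathbf{q}}_j\}_{j=1}^c$. Combined with Theorem~\ref{thm3}, which ensures that the learned representations occupy exactly $c$ partitions (typically $c\ll d$), this strictly shrinks the covering number and hence the complexity.

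For the generalization bound, I would invoke the standard Rademacher-based inequality, stating that with probability at least $1-\delta$ over the sample, the generalization ability $G(h)\geq \hat{G}(h)-2\hat{\mathfrak{R}}_n(\mathcal{H})-3\sqrt{\log(2/\delta)/(2n)}$ for all $h\in\mathcal{H}$. Taking supremum over each class and using the previous step yields $\sup(G_{SCHOOL})>\sup(G_{SHGL})$, provided that the empirical term $\hat{G}$ is not degraded by the added constraint. I would justify this last point by appealing to Theorem~\ref{thm3}, which guarantees the learned cluster partitions coincide (up to permutation) with the $c$ class partitions, so that forcing intra-cluster alignment is consistent with, rather than contrary to, the downstream objective.

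The hard part will be making the strictness of the complexity gap rigorous rather than heuristic. A naive inclusion argument only produces a weak inequality; to conclude $\inf(C_{SCHOOL})<\inf(C_{SHGL})$, I would need to exhibit an explicit hypothesis in $\mathcal{H}_{SHGL}\setminus\mathcal{H}_{SCHOOL}$ that contributes non-trivially to the Rademacher supremum, which in turn requires a mild non-degeneracy assumption on the data distribution (e.g., that not every encoder satisfying $\mathcal{L}_{sp}+\mu\mathcal{L}_{nc}$ automatically satisfies Eq.~(\ref{eq17})). A secondary subtlety is that the $\sup$ in the theorem is interpreted as the high-probability generalization-bound value rather than worst-case generalization ability; I would clarify this interpretation at the outset so that the concluding inequality follows directly from the tighter complexity term.
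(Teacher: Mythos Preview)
Your approach differs fundamentally from the paper's. The paper does \emph{not} treat $C$ as Rademacher complexity or argue via hypothesis-class inclusion. Instead, it adopts a concrete Davies--Bouldin--style complexity measure $C=\frac{1}{k}\sum_i\max_{j\neq i}\frac{S_i+S_j}{M_{i,j}}$, where $S_i$ is intra-class scatter of the learned representations and $M_{i,j}$ is inter-centroid distance, together with a generalization bound of the form $G\le \hat{\mathcal{R}}+\sqrt{C/n}+\mathcal{O}(\sqrt{\log(1/\delta)/n})$ from \cite{natekar2020representation}. Working in the binary case, it explicitly computes $S_0^2=P_0^2\sigma_0^2+(1-P_0)^2\sigma_j^2$ for the heterogeneous encoder, proves the elementary inequality $a^2b+(1-a)^2c\ge bc/(b+c)$ to lower-bound $S_i$, and then observes that the cluster-level loss $\mathcal{L}_{cc}$ directly shrinks the variance terms $\sigma_0^2,\sigma_1^2$. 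Because the lower bound of $C$ is monotone in these variances, $\inf(C_{SCHOOL})<\inf(C_{SHGL})$ follows, and the $G$ inequality is read off from the bound. So the paper's argument is computational and representation-geometric, not capacity-theoretic.

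Your Rademacher route has a genuine gap in the second inequality. From $\mathcal{H}_{SCHOOL}\subseteq\mathcal{H}_{SHGL}$ you correctly get $\hat{\mathfrak{R}}_n(\mathcal{H}_{SCHOOL})\le\hat{\mathfrak{R}}_n(\mathcal{H}_{SHGL})$, but that same inclusion gives $\sup_{h\in\mathcal{H}_{SCHOOL}}G(h)\le\sup_{h\in\mathcal{H}_{SHGL}}G(h)$ trivially, which is the \emph{wrong} direction. You try to sidestep this by reinterpreting $\sup(G)$ as ``the value of the generalization bound,'' but even then the bound reads $G(h)\ge \hat{G}(h)-2\hat{\mathfrak{R}}_n(\mathcal{H})-\cdots$, and the empirical term $\hat{G}$ is also taken over the smaller class, so $\sup_{h\in\mathcal{H}_{SCHOOL}}\hat{G}(h)\le\sup_{h\in\mathcal{H}_{SHGL}}\hat{G}(h)$. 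Your appeal to Theorem~\ref{thm3} to argue the empirical term is ``not degraded'' is exactly the substantive claim that needs proof, and nothing in Theorem~\ref{thm3} (which concerns the partition structure of $\mathbf{Y}$, not downstream empirical risk) delivers it. In short, hypothesis-class inclusion plus Rademacher monotonicity cannot by itself produce the stated inequality on $\sup(G)$; the paper avoids this trap by tying $C$ to an intra-class scatter quantity that $\mathcal{L}_{cc}$ provably decreases.
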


Theorem \ref{thm4} indicates that the representations learned by the dual consistency constraints can be theoretically proved to exhibit superior generalization ability than the representations learned by previous SHGL methods with the node-level consistency constraint only, thus are expected to perform better in different downstream tasks (verified in Section \ref{resultsanalysis}).

\begin{table*}[t]
\footnotesize
\centering
\setlength\tabcolsep{3.8pt}
\caption{Classification performance (\ie Macro-F1 and Micro-F1) on heterogeneous graph datasets.}
\begin{tabular}{lccccccccccccc}
\toprule
\multirow{2}{*}{\textbf{Method}}&\multicolumn{2}{c}{\textbf{ACM}}& \multicolumn{2}{c}{\textbf{Yelp}}& \multicolumn{2}{c}{\textbf{DBLP}}& \multicolumn{2}{c}{\textbf{Aminer}} \\
\cmidrule(r){2-3} \cmidrule(r){4-5} \cmidrule(r){6-7} \cmidrule(r){8-9}
&Macro-F1&Micro-F1&Macro-F1&Micro-F1 &Macro-F1&Micro-F1 &Macro-F1&Micro-F1\\
\midrule
DeepWalk& 73.9$\pm$0.3 & 74.1$\pm$0.1 & 68.7$\pm$1.1 &73.2$\pm$0.9  &88.1$\pm$0.2 &89.5$\pm$0.3 &54.7$\pm$0.8&59.7$\pm$0.7\\
GCN & 86.9$\pm$0.2 & 87.0$\pm$0.3 & 85.0$\pm$0.6 &87.4$\pm$0.8  &90.2$\pm$0.2 &90.9$\pm$0.5 &64.5$\pm$0.7&71.5$\pm$0.9\\
GAT & 85.0$\pm$0.4 & 84.9$\pm$0.3 & 86.4$\pm$0.5 &88.2$\pm$0.7  &91.0$\pm$0.4 &92.1$\pm$0.2 &63.8$\pm$0.4&70.6$\pm$0.7\\
\midrule
Mp2vec&  87.6$\pm$0.5 &  88.1$\pm$0.3 &  78.2$\pm$0.8 & 83.6$\pm$0.9  & 85.7$\pm$0.3 & 87.6$\pm$0.6 & 58.7$\pm$0.5& 65.3$\pm$0.6\\
HAN& 89.4$\pm$0.2 & 89.2$\pm$0.2 & 90.5$\pm$1.2 &90.7$\pm$1.4  &91.2$\pm$0.4 &92.0$\pm$0.5 &65.3$\pm$0.7&72.8$\pm$0.4\\
HGT& 91.5$\pm$0.7 & 91.6$\pm$0.6 & 89.9$\pm$0.5 &90.2$\pm$0.6  &90.9$\pm$0.6 &91.7$\pm$0.8 &64.5$\pm$0.5&71.0$\pm$0.7\\
DMGI& 89.8$\pm$0.1 & 89.8$\pm$0.1 & 82.9$\pm$0.8 &85.8$\pm$0.9  &92.1$\pm$0.2 &92.9$\pm$0.3&63.8$\pm$0.4&67.6$\pm$0.5\\
DMGIattn& 88.7$\pm$0.3 & 88.7$\pm$0.5 & 82.8$\pm$0.7 &85.4$\pm$0.5  &90.9$\pm$0.2 &91.8$\pm$0.3 &62.4$\pm$0.9&66.8$\pm$0.8\\
HDMI& 90.1$\pm$0.3 & 90.1$\pm$0.3 & 80.7$\pm$0.6 &84.0$\pm$0.9  &91.3$\pm$0.2 &92.2$\pm$0.5 &65.9$\pm$0.4&71.7$\pm$0.6\\
HeCo& 88.3$\pm$0.3 & 88.2$\pm$0.2 & 85.3$\pm$0.7 &87.9$\pm$0.6  &91.0$\pm$0.3 &91.6$\pm$0.2 &71.8$\pm$0.9&78.6$\pm$0.7\\
HGCML& 90.6$\pm$0.7 & 90.7$\pm$0.5 & 90.7$\pm$0.8 &91.0$\pm$0.7  &91.9$\pm$0.8 &93.2$\pm$0.7 &70.5$\pm$0.4&76.3$\pm$0.6\\
CPIM& 91.4$\pm$0.3 & 91.3$\pm$0.2 & 90.2$\pm$0.5 &90.3$\pm$0.4  &93.2$\pm$0.6 &93.8$\pm$0.8 &70.1$\pm$0.9 &75.8$\pm$1.1\\
HGMAE& 90.5$\pm$0.5 & 90.6$\pm$0.7 & 90.5$\pm$0.7 &90.7$\pm$0.5  &92.9$\pm$0.5 &93.4$\pm$0.6 &72.3$\pm$0.9 &80.3$\pm$1.2\\
HERO& 92.2$\pm$0.5 &92.1$\pm$0.7  & 92.4$\pm$0.7&92.3$\pm$0.6 & 93.8$\pm$0.6 &94.4$\pm$0.4 & 75.1$\pm$0.7 &84.5$\pm$0.9\\
SCHOOL& \textbf{92.7$\pm$0.6} & \textbf{92.6$\pm$0.5} & \textbf{93.0$\pm$0.7} &\textbf{92.8$\pm$0.4}  &\textbf{94.0$\pm$0.3} &\textbf{94.7$\pm$0.4} &\textbf{77.5$\pm$0.9} &\textbf{86.8$\pm$0.7}\\
\bottomrule
\end{tabular}
\label{tabnode}
\end{table*}

\section{Experiments}
\label{experiments}
In this section,  we conduct experiments on both heterogeneous and homogeneous  graph datasets to evaluate the proposed method in terms of different downstream tasks (\ie node classification and node clustering), compared to  both heterogeneous and homogeneous graph methods. Detailed  settings are shown in Appendix \ref{settings}, and additional  results are shown in Appendix \ref{additional_results}.



\subsection{Experimental Setup}
\subsubsection{Datasets}
The used datasets include four heterogeneous graph datasets and two homogeneous graph datasets. Heterogeneous graph datasets include three academic datasets (\ie ACM \cite{WangJSWYCY19}, DBLP \cite{WangJSWYCY19}, and Aminer \cite{kddHuFS19}), and one business dataset (\ie Yelp \cite{lu2019relation}). Homogeneous graph datasets include two sale datasets (\ie Photo and Computers \cite{shchur2018pitfalls}).

\subsubsection{Comparison Methods}
The comparison methods include eleven heterogeneous graph methods and twelve homogeneous graph methods. The former includes two semi-supervised methods (\ie HAN \cite{WangJSWYCY19} and HGT \cite{wwwHuDWS20}), one traditional unsupervised method (\ie Mp2vec \cite{metapath2veckddDongCS17}), and eight self-supervised methods (\ie DMGI \cite{DMGIParkK0Y20}, DMGIattn \cite{DMGIParkK0Y20}, HDMI \cite{jing2021hdmi}, HeCo \cite{WangLHS21}, HGCML \cite{wang2023heterogeneous}, CPIM \cite{CPIM}, HGMAE \cite{HGMAE}, and HERO \cite{mo2024selfsupervised}). The latter includes two semi-supervised methods (\ie GCN \cite{kipf2017semisupervised} and GAT \cite{velickovic2018graph}), one traditional unsupervised method (\ie DeepWalk \cite{perozzi2014deepwalk}),  and nine self-supervised methods, (\ie DGI \cite{VelickovicFHLBH19}, GMI \cite{peng2020graph}, MVGRL \cite{hassani2020contrastive}, GRACE \cite{zhu2020deep}, GCA \cite{zhu2021graph}, G-BT \cite{GBT_KBS}, COSTA \cite{zhang2022costa}, DSSL \cite{xiaodecoupled}, and LRD \cite{yang2024self}).

For a fair comparison, we follow \cite{metapath2veckddDongCS17,WangJSWYCY19,lu2019relation,lv2021we} to select meta-paths for previous meta-path-based SHGL methods.  Moreover, we follow  \cite{DMG_ICML} to implement homogeneous graph methods on heterogeneous graph datasets by separately learning the representations of each meta-path-based graph and further concatenating them for downstream tasks. 
In addition, we replace the heterogeneous encoder $f_\theta$ with GCN to implement the proposed method on homogeneous graph datasets because there is only one node type in the homogeneous graph. Moreover, we follow previous works \cite{zhu2021graph} to generate two different views for the homogeneous graph by removing edges and masking features. The code of the proposed method is released at \url{https://github.com/YujieMo/SCHOOL}.


\subsection{Results Analysis}
\label{resultsanalysis}
\subsubsection{Effectiveness on Heterogeneous and Homogeneous Graph}
We first evaluate the effectiveness of the proposed method on the heterogeneous graph datasets and report the results of node classification and node clustering in Table \ref{tabnode} and  Appendix \ref{additional_results}, respectively.  Obviously, the proposed method obtains better performance on both node classification and node clustering tasks than comparison methods.

Specifically, first, for the node classification task, the proposed method consistently outperforms the comparison methods by large margins. For example, the proposed method on average, improves by 1.1\%, compared to the best SHGL method (\ie HERO), on four heterogeneous graph datasets. The reason can be attributed to the fact that the proposed method adaptively learns a rank-constrained affinity matrix to mitigate noisy connections among different classes, thus reducing intra-class differences. 
Second, for the node clustering task, the proposed method also obtains promising improvements. For example, the proposed method on average, improves by 3.1\%, compared to the best SHGL method (\ie HGMAE), on four heterogeneous graph datasets. This demonstrates the superiority of the proposed method, which simulates the spectral clustering with the spectral loss and conducts the cluster-level consistency constraint to further utilize the clustering information. As a result, the effectiveness of the proposed method is verified on different downstream tasks.

We further evaluate the effectiveness of the proposed method on homogeneous graph datasets and report the results of node classification in Appendix \ref{additional_results}. 
We can observe that the proposed method also achieves competitive results on the homogeneous graph datasets compared to other homogeneous graph methods.
For example, the proposed method outperforms the best self-supervised method (\ie LRD),  on almost all homogeneous graph datasets. This indicates that the proposed method is also available to learn the noise-free affinity matrix for homogeneous graphs as well as capture invariant and clustering information to benefit downstream tasks. Therefore, the effectiveness of the proposed method is verified on both heterogeneous and homogeneous graph datasets.

\begin{table*}[t]
\footnotesize
\centering
\setlength\tabcolsep{3.2pt}
\caption{Classification performance (\ie Macro-F1 and Micro-F1) of each component in the objective function $\mathcal{J}$ on all heterogeneous graph datasets.}
\begin{tabular}{cccccccccccccc}
\toprule
\multirow{2}{*}{$\mathcal{L}_{sp}$}&\multirow{2}{*}{$\mathcal{L}_{nc}$}&\multirow{2}{*}{$\mathcal{L}_{cc}$}&\multicolumn{2}{c}{\textbf{ACM}}& \multicolumn{2}{c}{\textbf{Yelp}}& \multicolumn{2}{c}{\textbf{DBLP}}& \multicolumn{2}{c}{\textbf{Aminer}} \\
\cmidrule(r){4-5} \cmidrule(r){6-7} \cmidrule(r){8-9} \cmidrule(r){10-11}
&&&Macro-F1&Micro-F1&Macro-F1&Micro-F1 &Macro-F1&Micro-F1 &Macro-F1&Micro-F1\\
\midrule
$-$&$-$&$\checkmark$&  85.9$\pm$0.3 &85.8$\pm$0.6  &91.8$\pm$0.6 &91.3$\pm$0.5 &91.0$\pm$0.2 &92.1$\pm$0.4&66.8$\pm$0.7 &75.5$\pm$0.9\\
$-$&$\checkmark$&$-$&  88.8$\pm$0.6 &88.6$\pm$0.7  &92.5$\pm$0.7 &92.1$\pm$0.4 &91.7$\pm$0.4 &92.7$\pm$0.5&72.4$\pm$0.5 &80.3$\pm$0.7 \\
$\checkmark$&$-$&$-$&  87.6$\pm$0.3 &87.5$\pm$0.5  &92.3$\pm$0.8 &92.0$\pm$0.6 &90.7$\pm$0.6 &91.7$\pm$0.6&67.3$\pm$0.6 &74.7$\pm$0.5 \\
$-$&$\checkmark$&$\checkmark$& 86.9$\pm$0.7 &86.7$\pm$0.5  &92.1$\pm$0.3 &91.5$\pm$0.4 &93.4$\pm$0.8 &94.2$\pm$0.6&75.2$\pm$0.4 &83.9$\pm$0.7\\
$\checkmark$&$-$&$\checkmark$& 89.0$\pm$0.5 &88.9$\pm$0.4  &92.4$\pm$0.5 &92.0$\pm$0.3 &93.5$\pm$0.6 &94.2$\pm$0.4&76.2$\pm$0.5 &85.2$\pm$0.8\\
$\checkmark$&$\checkmark$&$-$&  88.9$\pm$0.7 &88.8$\pm$0.6  &92.6$\pm$0.6 &92.3$\pm$0.5 &91.9$\pm$0.7 &92.8$\pm$0.8&77.1$\pm$0.8 &86.0$\pm$0.6\\
$\checkmark$&$\checkmark$&$\checkmark$&  \textbf{92.7$\pm$0.6} & \textbf{92.6$\pm$0.5} & \textbf{93.0$\pm$0.7} &\textbf{92.8$\pm$0.4}  &\textbf{94.0$\pm$0.3} &\textbf{94.7$\pm$0.4} &\textbf{77.5$\pm$0.9} &\textbf{86.8$\pm$0.7}\\
\bottomrule
\end{tabular}
\label{tabablation}
\end{table*}

\subsubsection{Ablation Study}
The proposed method investigates the objective function $\mathcal{J}$ to learn the rank-constrained affinity matrix, as well as capture invariant and clustering information. To verify the effectiveness of each component of $\mathcal{J}$ (\ie $\mathcal{L}_{sp}$, $\mathcal{L}_{nc}$, and $\mathcal{L}_{cc}$), we investigate the performance of all variants on the node classification task and report the results in Table \ref{tabablation}.

From Table \ref{tabablation}, we have the observations as follows. First, the proposed method with the complete objective function obtains the best performance. For example, the proposed method on average improves by 1.8\%,  compared to the best variant (\ie without $\mathcal{L}_{nc}$), indicating that all components in the objective function are necessary for the proposed method. This is consistent with our claims, \ie it is essential to optimize the adaptive graph structure to mitigate noisy connections as well as utilize the cluster-level information to benefit downstream tasks. Second, the variant without $\mathcal{L}_{sp}$ achieves inferior results to the other two variants (\ie without $\mathcal{L}_{nc}$ and without $\mathcal{L}_{cc}$, respectively). This can be attributed to the fact that the spectral loss $\mathcal{L}_{sp}$ enforces the cluster assignment matrix to fit the eigenvectors, which is necessary for the closed-form solution of the affinity matrix.


\subsubsection{Visualization}
\label{visualization}
To verify the effectiveness of the learned affinity matrix and the representations for downstream tasks, we visualize the affinity matrix in the heatmap and visualize the representations with t-SNE \cite{van2008visualizing} on DBLP and Aminer datasets and report the results in Figure \ref{visandcase}.

Specifically, we randomly sample 50 nodes in each class and then visualize elements of the affinity matrix $\mathbf{S}$ among sampled nodes with the heatmap, where rows and columns are reordered by node labels. In the correlation map, the darker a pixel, the larger the value of the element of $\mathbf{S}$. In Figures \ref{heatmap_1a} and \ref{heatmap_1c}, the heatmaps exhibit that there are nearly $c$  (\ie the number of classes) components in the affinity matrix, and almost all elements with large values fall in the block diagonal structure. This indicates that the affinity matrix indeed contains $c$ connected components to mitigate noisy connections among different classes. Moreover, the t-SNE visualization in  Figures \ref{heatmap_1b} and \ref{aminer} further indicate that the learned representations can be well divided into $c$ partitions. This is consistent with the observation in Theorem \ref{thm3} and verifies the effectiveness of the learned representations.

\begin{figure*}[t]
\centering
        \subfigure[$\mathbf{S}$ of DBLP]{\scalebox{0.15}{\includegraphics{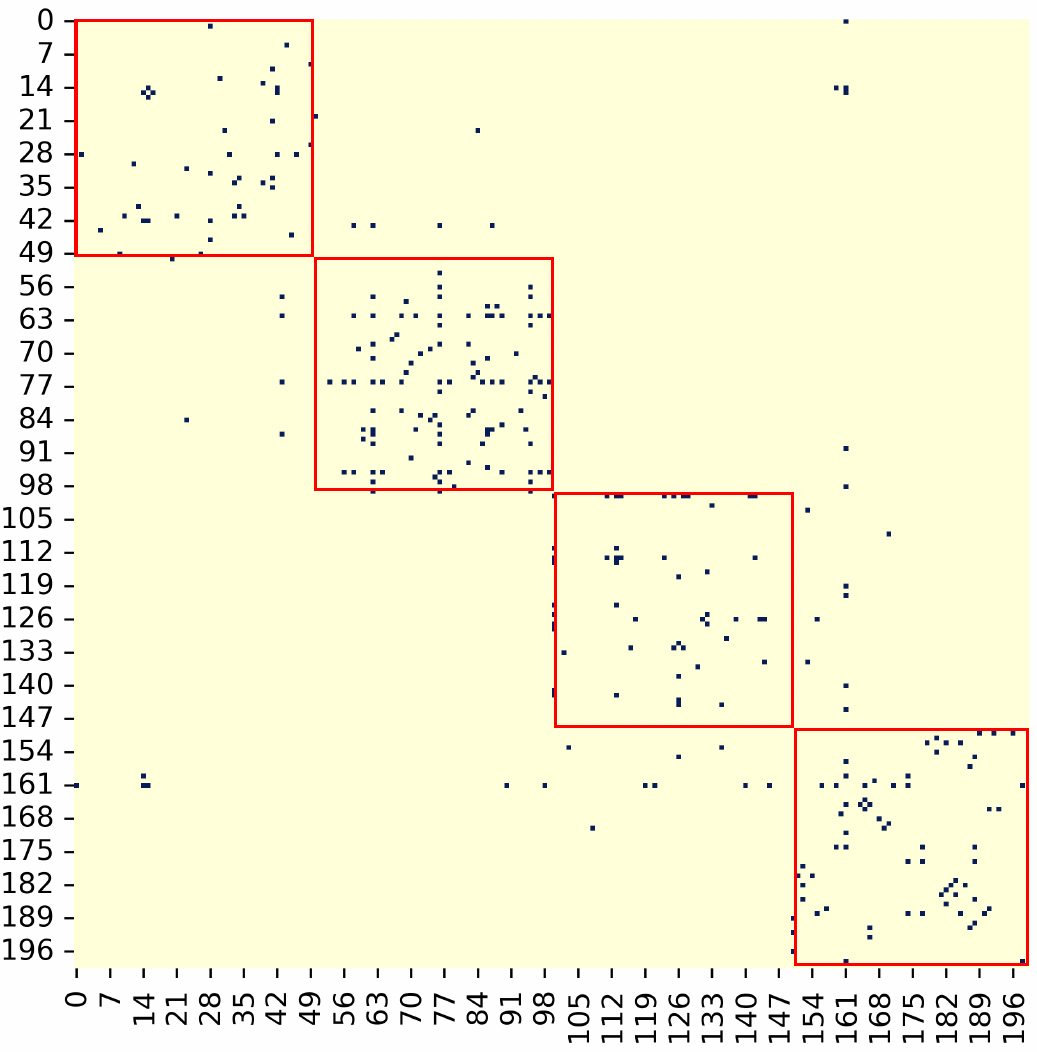}  
        }\label{heatmap_1a}}
        \subfigure[t-SNE of DBLP]{\scalebox{0.25}
{\includegraphics{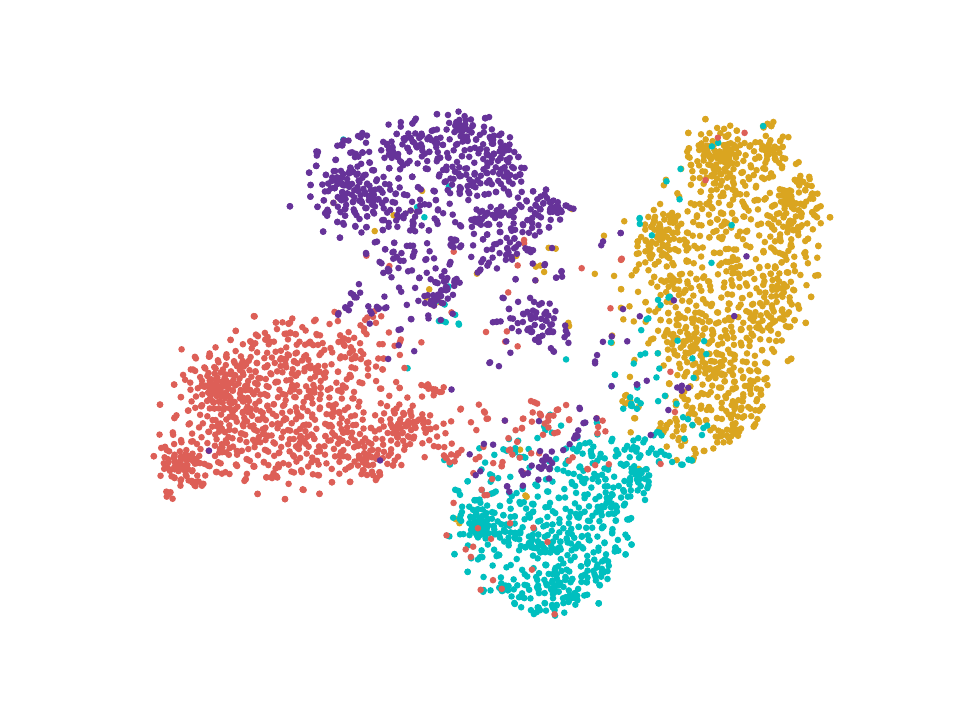}
        }\label{heatmap_1b}}
        \subfigure[$\mathbf{S}$ of Aminer]{\scalebox{0.15}
        {\includegraphics{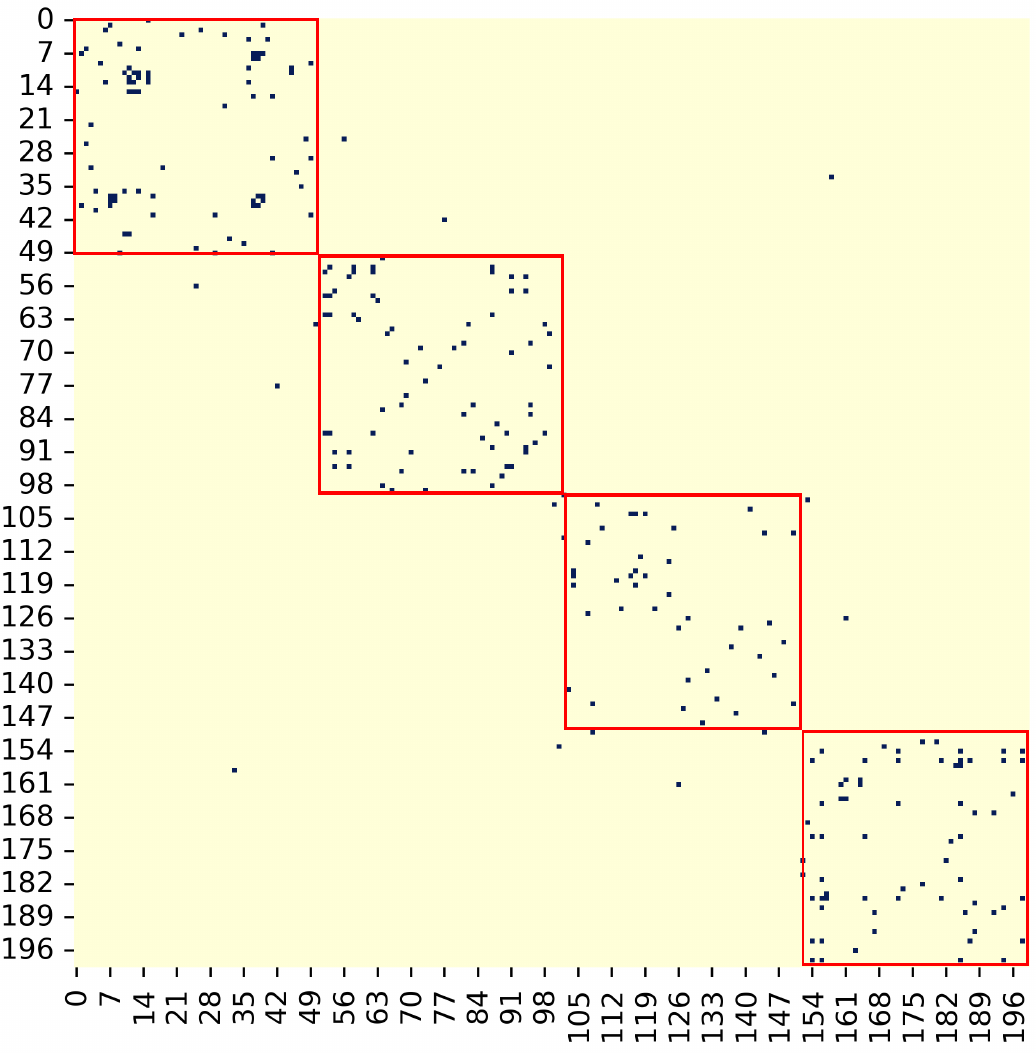}
        }\label{heatmap_1c}}
        \subfigure[t-SNE of Aminer]{\scalebox{0.25}
{\includegraphics{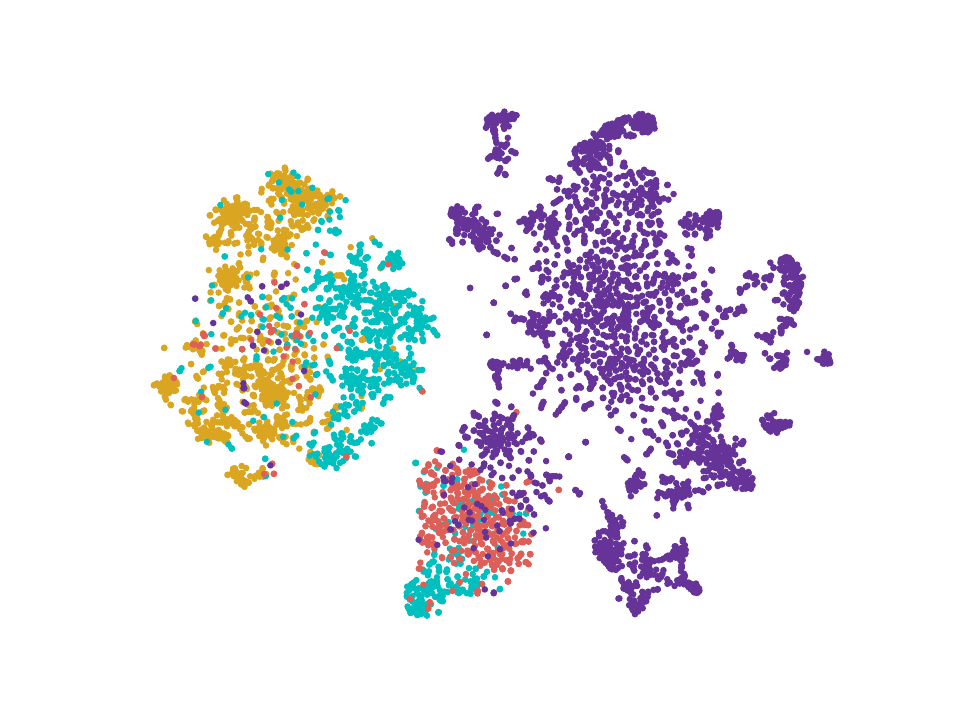}
        }\label{aminer}}
		\caption{Visualization of the affinity matrix $\mathbf{S}$ and t-SNE on DBLP and Aminer datasets. }
 \label{visandcase}
\end{figure*}

\section{Conclusion}
In this paper, we revisited previous SHGL methods from the perspective of spectral clustering and then introduced a novel framework to alleviate existing issues. Specifically, we first proved that optimizing previous SHGL methods is equivalent to performing spectral clustering with additional regularization under the orthogonalization assumption. Then we proposed an efficient spectral clustering method with the rank constraint to learn an adaptive affinity matrix and mitigate noisy connections in previous methods. Moreover, we designed node-level and cluster-level consistency constraints to capture invariant and clustering information, thus benefiting the performance of downstream tasks. Theoretical analysis indicates that the learned representations are divided into distinct partitions based on the number of classes, and are expected to achieve better generalization ability than representations of previous SHGL methods. Comprehensive experiments verify the effectiveness of the proposed method on both homogeneous and heterogeneous graph datasets on different downstream tasks. 

\textbf{Potential limitations and broader impact.}
\label{limitations}
Our potential limitation is that this work is designed based on node features. However, in heterogeneous graphs, instances arise where nodes are devoid of features. While one-hot vectors or structural embeddings can be designated as node features to tackle this problem, we recognize the necessity of devising dedicated techniques tailored for heterogeneous graphs with missing node features. In addition, the proposed method can also be used to deal with the homophily problem, which aims to explore the connections within the same class. We consider these aspects as potential directions for future research. Despite the great development of SHGL, some theoretical foundations are still lacking. Our work theoretically connects existing SHGL methods and spectral clustering and may open a new path to understanding and designing SHGL. Besides that, we do not foresee any direct negative impacts on the society.

\section*{Acknowledgments}
This project is supported by the National Key Research and Development Program of China under Grant No. 2022YFA1004100, the Natural Science Foundation of Guangdong Province of China under Grant No. 2024A1515011381, and the National Research Foundation, Singapore, under its AI Singapore Programme (AISG Award No: AISG2-RP-2021-023).


\bibliography{ref}{}
\bibliographystyle{plain}

\clearpage
\appendix

\section{Related Work}
\label{relatedwork}
This section briefly reviews topics related to this work, including self-supervised heterogeneous graph learning in Section \ref{related1}, and spectral clustering in Section \ref{related2}.

\subsection{Self-Supervised Heterogeneous Graph Learning}
\label{related1}

In recent years, self-supervised heterogeneous graph learning (SHGL) has emerged as a helpful technique to deal with the heterogeneous graph that consists of different types of entities without needing labeled data \cite{WangJSWYCY19,wu2021self,WangLHS21,wang2022survey,peng2023grlc, liu2024self, yu2024hgprompt}. As a result, SHGL captures meaningful representations of nodes and edges, enabling better performance in downstream tasks like node classification and node clustering. Due to its powerful capability, SHGL has been applied to various real applications, such as social network analysis \cite{Liangke_Survey, yang2022self, he2023contrastive}, and recommendation systems \cite{shi2018heterogeneous, jin2023dual, huangmultiplex}.

Existing SHGL methods can be broadly classified into two groups, \ie meta-path-based methods and adaptive-graph-based methods. In meta-path-based methods, several graphs are usually constructed based on different pre-defined meta-paths to examine diverse relationships among nodes that share similar labels \cite{jing2021hdmi, zhu2022structure}.
For example, STENCIL \cite{zhu2022structure} and HDMI \cite{jing2021hdmi} construct meth-path-based graphs and then conduct node-level consistency constraints (\eg contrastive loss) between node representations in different graphs. In addition, HGCML \cite{wang2023heterogeneous} and CPIM \cite{CPIM} propose to maximize the mutual information between node representations from different meta-path-based graphs. However, pre-defined meta-paths in these methods generally require expert knowledge and prohibitive computation costs \cite{zhang2022simple}. Therefore, adaptive-graph-based methods are proposed to learn the adaptive graph structures to capture the relationships among nodes that possess the same label, instead of using meta-paths.
For example, recently, HERO \cite{mo2024selfsupervised} made the first attempt to learn an adaptive self-expressive matrix to capture the homophily in the heterogeneous graph, thus avoiding meta-paths.

Although existing SHGL methods (especially the adaptive-graph-based methods) have achieved impressive performance in several tasks, the learned graph structure cannot be guaranteed optimal. As a result, the learned graph structure may contain noisy connections from different classes to affect the message-passing process and weaken the discriminative information in node representations.

\subsection{Spectral Clustering}
\label{related2}

Spectral clustering partitions data points into clusters based on a similarity matrix derived from the data \cite{wang2018spectral,tang2022unified, CCGC}. Owing to its proficiency in identifying clusters with complex shapes and handling non-linearly separable data, spectral clustering is widely used in many scenarios  \cite{zhou2020multi,liu2023deep, liu2023dink, DealMVC}. 

The spectral clustering methods can be broadly classified into two groups, \ie traditional spectral clustering and deep spectral clustering. Traditional spectral clustering methods aim to group data points that are similar to each other while being dissimilar to points in other clusters by eigendecomposition \cite{nie2014clustering, nie2020self}. For example, CAN \cite{nie2014clustering} proposes to learn the data similarity matrix and clustering structure simultaneously with the eigendecomposition.
SWCAN \cite{nie2020self} further assigns weights for different features to learn the similarity graph and partition samples into clusters simultaneously. Despite its effectiveness, traditional spectral clustering generally requires expensive computation costs, especially for large datasets. To alleviate this issue, deep spectral clustering methods have been proposed in recent years. For example, DSC \cite{yang2019deep} employs an encoder and two decoders to train the network, thus
obtaining discriminative representations for clustering and implementing the cluster assignment via the neural network. DSCL \cite{law2017deep} introduces a novel metric learning framework that leverages spectral clustering principles, thus reducing complexity to linear levels. SpectralNet \cite{shaham2018spectralnet} proposes to learn a mapping function via the orthogonalization network to address the out-of-sample-extension and scalability problems.

The above methods conduct spectral clustering explicitly. Surprisingly, recent research shows that some popular self-supervised methods also implicitly conduct spectral clustering \cite{ haochen2021provable, tan2023contrastive}. For example, \cite{haochen2021provable} demonstrates contrastive learning performs spectral clustering on the population augmentation graph by replacing the standard InfoNCE \cite{oord2018representation} with its proposed spectral contrastive loss. \cite{tan2023contrastive} demonstrates that contrastive learning with the standard InfoNCE loss is equivalent to spectral clustering on the similarity graph. Although these methods make efforts to connect previous self-supervised methods with spectral clustering, they cannot be easily transferred to SHGL. First, these methods are almost based on the augmentation graph, which assumes that different augmentations of the same sample connect each other and thus form a graph. In contrast, in SHGL, there is no augmentation, and the graph is constructed by connecting different samples. Second, compared to the above methods, SHGL incorporates the message-passing process, which makes it more complex. Therefore, connecting SHGL methods with the spectral clustering remains challenging.

\section{Algorithm and Complexity Analysis}
This section provides the pseudo-code of the proposed method in Section \ref{algo11}, and the complexity analysis of our method in Section \ref{complexity}.
\subsection{Algorithm}
\label{algo11}
\begin{algorithm}[H]
\caption{The pseudo-code of the proposed method.}
\label{algo1}
\begin{algorithmic}[1]
\REQUIRE Heterogeneous graph $\mathbf{G}=(\mathcal{V}, \mathcal{E}, \mathbf{X}, \mathcal{T}, \mathcal{R})$, non-negative parameters $\beta$, $\gamma$, $\eta$, $\mu$ and $\delta$;\\
\ENSURE Encoders $g_{\phi}$, $f_{\theta}$;

\STATE Initialize  parameters;
\WHILE{not converge}
\STATE Obtain semantic representations $\mathbf{H}$ with encoder $g_{\phi}$;\\
\STATE Obtain the closed-form solution of the affinity matrix $\mathbf{S}$  by Eq. (\ref{eq10});\\
\STATE Obtain the orthogonal cluster assignment matrix $\mathbf{Y}$  by Eq. (\ref{eq11}) and Eq. (\ref{eq12});\\
 \STATE Conduct the spectral loss based on $\mathbf{Y}$ and $\mathbf{S}$  by Eq. (\ref{eq13});\\
\STATE Obtain node representations $\mathbf{Z}$ by $\mathbf{Z} = \mathbf{S}\mathbf{H}$;\\
\STATE Obtain heterogeneous representations $\Tilde{\mathbf{Z}}$ with encoder $f_{\theta}$;\\
\STATE Project node and heterogeneous representations into a latent space to obtain $\mathbf{Q}$ and $\tilde{\mathbf{Q}}$;\\
 \STATE Conduct the node-level consistency constraint between $\mathbf{Q}$ and $\tilde{\mathbf{Q}}$  by Eq. (\ref{eq15});\\
 \STATE Obtain cluster representations $\hat{\mathbf{Q}}$ by Eq. (\ref{eq16});\\
\STATE Conduct the cluster-level consistency constraint between $\tilde{\mathbf{Q}}$ and $\hat{\mathbf{Q}}$  by Eq. (\ref{eq17});\\
\STATE Compute the objective function $\mathcal{J}$ by Eq. (\ref{eq18});\\
\STATE Back-propagate $\mathcal{J}$ to  update  model  weights;\\
\ENDWHILE\\
\end{algorithmic}
\end{algorithm}

\subsection{Complexity Analysis}
\label{complexity}
Based on the Algorithm \ref{algo1} above, we then analyze the time complexity of the proposed method. Recalling Eq. (\ref{eq10}) in the main text:
\begin{equation}
    \mathbf{s}_{ij} = (-\frac{1}{2 \bm{\alpha}} \mathbf{d}_{i} + \lambda )_+,
    \label{eqap10}
\end{equation}
where $\mathbf{d}_{ij} = \|\mathbf{h}_{i}-\mathbf{h}_{j}\|_{2}^{2} + \beta\left\|\mathbf{f}_{i}-\mathbf{f}_{j}\right\|_{2}^{2}$, where $\mathbf{H} \in \mathbb{R}^{n \times d}$ and $\mathbf{F} \in \mathbb{R}^{n \times c}$ are semantic representations and eigenvector matrix, $d$ and $c$ indicate number of dimensions and classes, and $n$ indicates the number of nodes. To reduce the computation costs, the proposed method proposes to only calculate $\mathbf{s}_{ij}$ between node $v_i$ and its $k$ nearest neighbors. Therefore, the time complexity of Eq. (\ref{eqap10}) is $\mathcal{O}(nk)$. Moreover, the proposed method proposes to replace the eigendecomposition with a projection head and orthogonalization layer  to further reduce the time complexity. Specifically, the time complexity of the orthogonal process for $\mathbf{H}$ and $\mathbf{Y}$ with the QR decomposition is $\mathcal{O}(nd^2)$ and $\mathcal{O}(nc^2)$, respectively. The time complexity of the inversion process in Eq. (\ref{eq12}) for $\mathbf{H}$ and $\mathbf{F}$ is $\mathcal{O}(d^3)$ and $\mathcal{O}(c^3)$. Moreover,  the time complexity of the spectral loss is $\mathcal{O}(nkc)$ and the time complexity of $\mathbf{Z} = \mathbf{S}\mathbf{H}$ is $\mathcal{O}(nkd)$. In addition, the time complexity of node-level and cluster-level consistency constraints are $\mathcal{O}(nd^2)$ and $\mathcal{O}(n)$, respectively. Therefore, the overall complexity of the proposed  method is $\mathcal{O}(nd^2 + nc^2 + nkd + nkc +d^3 + c^3)$ in each epoch, where $d^2, c^2 <n$, thus is scaled linearly with the sample size.

\section{Proofs of Theorems}
\label{proofs}
This section provides definition, detailed proofs of Theorems, and derivation process in Section \ref{method}, including the proofs of Theorem \ref{thm1} in Section \ref{proof22}, the proofs of Theorem \ref{thm2} in Section \ref{proof23}, the proofs of Theorem \ref{thm3} in Section \ref{proof25}, the proofs of Theorem \ref{thm4} in Section \ref{proof26}, the derivation  of Eq. (\ref{eq7}) in Section \ref{derivationeq7}, the derivation of the closed-form solution and parameters in Section \ref{closedform}, and the derivation of the orthogonalization in Section \ref{orthgo}.

\subsection{Proof of Theorem \ref{thm1}}
\label{proof22}

\begin{theorem}
{\rm(Restating Theorem \ref{thm1} in the main
text).}
Assume the learned representations $\mathbf{H}$ are orthogonal, optimizing previous meta-path-based  and adaptive-graph-based SHGL methods is equivalent to performing spectral clustering with additional regularization, i.e.,
\begin{equation}
    \min _{\mathbf{H}}\mathcal{L}_{SHGL}\cong \min _{\mathbf{H}} \operatorname{Tr}(\mathbf{H}^{T} \hat{\mathbf{L}} \mathbf{H}) + R(\mathbf{H}) \text { s.t., } \mathbf{H}^{T} \mathbf{H}=\mathbf{I} \text {, }
\end{equation}
where $R(\cdot)$ indicates the regularization term, $\hat{\mathbf{L}}$ indicates the Laplacian matrix of the meta-path-based graph or the adaptive graph structure.
\label{apthm1}
\end{theorem}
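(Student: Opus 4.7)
The plan is to begin by writing down a unified form of the SHGL objective that both meta-path-based and adaptive-graph-based methods can be expressed in, which is essentially a contrastive-style loss summing a positive-pair alignment term over edges of the meta-path/adaptive graph together with a negative-pair (uniformity or decorrelation) term. Concretely, for nodes $v_i,v_j$ connected with weight $\hat{\mathbf{w}}_{ij}$ in the meta-path-based or adaptive graph $\hat{\mathbf{W}}$, the positive part of $\mathcal{L}_{SHGL}$ can be written as a weighted similarity $-\sum_{i,j}\hat{\mathbf{w}}_{ij}\langle \mathbf{h}_i,\mathbf{h}_j\rangle$, possibly after a log-softmax normalization, while the negative part involves pairwise inner products across all nodes.

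Next I would manipulate the positive-pair term. Using the identity $\tfrac{1}{2}\|\mathbf{h}_i-\mathbf{h}_j\|_2^2 = \tfrac{1}{2}(\|\mathbf{h}_i\|_2^2+\|\mathbf{h}_j\|_2^2)-\langle \mathbf{h}_i,\mathbf{h}_j\rangle$ together with the fact that $\sum_{i,j}\hat{\mathbf{w}}_{ij}(\|\mathbf{h}_i\|_2^2+\|\mathbf{h}_j\|_2^2)=2\operatorname{Tr}(\mathbf{H}^T\hat{\mathbf{D}}\mathbf{H})$, I can rewrite $-\sum_{i,j}\hat{\mathbf{w}}_{ij}\langle \mathbf{h}_i,\mathbf{h}_j\rangle$ as $\tfrac{1}{2}\sum_{i,j}\hat{\mathbf{w}}_{ij}\|\mathbf{h}_i-\mathbf{h}_j\|_2^2 - \operatorname{Tr}(\mathbf{H}^T\hat{\mathbf{D}}\mathbf{H})=\operatorname{Tr}(\mathbf{H}^T\hat{\mathbf{L}}\mathbf{H}) - \operatorname{Tr}(\mathbf{H}^T\hat{\mathbf{D}}\mathbf{H})$, so the first piece is exactly the spectral clustering objective. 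Under the orthogonality assumption $\mathbf{H}^T\mathbf{H}=\mathbf{I}$, the term $\operatorname{Tr}(\mathbf{H}^T\hat{\mathbf{D}}\mathbf{H})$ becomes either a bounded quantity controlled by the degree distribution, or (after row-normalization of $\hat{\mathbf{W}}$) an exact constant, so it can be absorbed into the residual regularizer $R(\mathbf{H})$.

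The negative-pair piece I would handle by the same expansion applied to the full node set with uniform weights; under $\mathbf{H}^T\mathbf{H}=\mathbf{I}$ this reduces to a function of $\|\mathbf{H}^T\mathbf{H}\|_F^2$ or of the spectrum of $\mathbf{H}\mathbf{H}^T$, i.e.\ it is independent of the meta-path/adaptive structure and depends only on $\mathbf{H}$ itself. Collecting this contribution together with the leftover degree terms from the previous step defines $R(\mathbf{H})$, yielding the claimed decomposition $\mathcal{L}_{SHGL}\cong \operatorname{Tr}(\mathbf{H}^T\hat{\mathbf{L}}\mathbf{H})+R(\mathbf{H})$.

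The main obstacle I anticipate is establishing the first step in a genuinely \emph{unified} way: different SHGL works use different losses (InfoNCE with temperature, BYOL-style alignment, mutual-information lower bounds, self-expressive losses, etc.), so the positive-pair piece does not always appear as a bare inner product. I would handle this by either (i) using a first-order Taylor expansion of the log-softmax around the uniform point, which is standard in contrastive-spectral analyses and is legitimate under the orthogonality assumption because it bounds the pairwise inner products, or (ii) observing that each of these losses is an $f$-divergence surrogate whose leading term is precisely $-\sum_{i,j}\hat{\mathbf{w}}_{ij}\langle\mathbf{h}_i,\mathbf{h}_j\rangle$ up to a term depending only on $\mathbf{H}^T\mathbf{H}$. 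Either route reduces all variants to the same Laplacian quadratic form plus a representation-only regularizer, completing the argument.
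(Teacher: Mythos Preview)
Your route is genuinely different from the paper's, and the difference exposes a real gap in your first step.

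The paper does \emph{not} begin from a unified contrastive form $-\sum_{i,j}\hat{\mathbf w}_{ij}\langle \mathbf h_i,\mathbf h_j\rangle$. Instead it treats the two families separately and concretely. For meta-path-based SHGL it takes two meta-path views, writes the post-aggregation representations as $\mathbf z_i^{(r)}=\mathbf h_i+\sum_{j\in\mathcal N_r(i)}\mathbf h_j$, and expands the \emph{view-consistency} loss $\sum_i\|\mathbf z_i^{(1)}-\mathbf z_i^{(2)}\|_2^2$. Algebraic regrouping of the resulting neighbor sums, together with the identity $\operatorname{Tr}(\mathbf H^T\mathbf L\mathbf H)=\tfrac12\sum_{j,k}\mathcal G_{jk}\|\mathbf h_j-\mathbf h_k\|_2^2$, produces $2\operatorname{Tr}(\mathbf H^T\mathbf L\mathbf H)$ on the \emph{union} graph $\mathcal G=\mathcal G^{(1)}\cup\mathcal G^{(2)}$ plus a cross-term $2\sum_{i,j,k}\mathcal G_{ij}\mathcal G_{ik}\langle \mathbf h_i-\mathbf h_j,\mathbf h_i-\mathbf h_k\rangle$ that is declared the regularizer $R(\mathbf H)$. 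For adaptive-graph-based SHGL it starts from the self-expressive objective $\|\mathbf H-\mathbf S\mathbf H\|_F^2+\alpha\sum d_{ij}\mathbf s_{ij}+\beta\sum\mathbf s_{ij}^2$ and uses the constraint $\mathbf h_i=\sum_{j\neq i}\mathbf s_{ij}\mathbf h_j$ together with $\mathbf s_i^T\mathbf 1=1$ to rewrite the first term as $\sum_{i,j}\mathbf s_{ij}\|\mathbf h_i-\mathbf h_j\|_2^2=2\operatorname{Tr}(\mathbf H^T\mathbf L_{\mathbf S}\mathbf H)$.

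Your plan short-circuits exactly the part that carries the content. In meta-path SHGL the positive pair is $(\mathbf z_i^{(1)},\mathbf z_i^{(2)})$, i.e.\ the \emph{same} node in two aggregated views, not an edge pair $(\mathbf h_i,\mathbf h_j)$ weighted by $\hat{\mathbf w}_{ij}$; the graph enters only through the aggregation inside each $\mathbf z_i^{(r)}$. Your proposed remedies (Taylor-expanding a log-softmax, $f$-divergence surrogates) address the shape of the contrastive nonlinearity but not this structural mismatch: they still presuppose that the alignment term already sums $\langle \mathbf h_i,\mathbf h_j\rangle$ over graph edges, which it does not until you unwind the message passing. Likewise, for the adaptive case $\|\mathbf H-\mathbf S\mathbf H\|_F^2=\operatorname{Tr}(\mathbf H^T(\mathbf I-\mathbf S)^T(\mathbf I-\mathbf S)\mathbf H)$ is not of your assumed form either; the paper needs the self-expressive identity and the simplex constraint on $\mathbf s_i$ to land on the Laplacian quadratic. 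So the manipulation you propose in your second paragraph is fine once the alignment term is $\sum_{i,j}\hat{\mathbf w}_{ij}\langle\mathbf h_i,\mathbf h_j\rangle$, but getting there from the actual SHGL losses is the nontrivial step, and it requires the message-passing expansion (meta-path case) or the self-expressive algebra (adaptive case) that the paper carries out explicitly.
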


\begin{proof}
First, we prove the connection between previous meta-path-based SHGL methods and spectral clustering. To do this, take a heterogeneous graph with two meta-paths as an example, we let $\mathcal{G}= \{\mathcal{G}^{(1)}\cup  \mathcal{G}^{(2)}\}$ indicates the union of all meta-path-based graph views. Moreover, we denote the representations of previous methods before the message-passing as $\mathbf{H}$ (generally obtained by linear mapping from original node features). In addition, we denote the node representations of different graph views after the message-passing as $\mathbf{Z}^{(r)}$, respectively, where $r=1, 2$, \ie
    \begin{equation}
        \mathbf{z}_i^{(r)}=\mathbf{h}_i + \{\mathbf{h}_j, v_j\in \mathcal{N}(v_i)^{(r)}\},
        \label{apeq1}
    \end{equation}
where $\mathcal{N}(v_i)^{(r)}$ indicates the one-hop neighbors of node $v_i$ in the $r$-th meta-path-based graph.
    
Based on the node representations $\mathbf{Z}^{(r)}$ of each graph, previous meta-path-based SHGL methods generally propose to extract the invariant information among node representations from different meta-path-based graphs. Here, we take the Mean Squared Error (MSE) loss as a simple example to extract the invariance and then conduct an analysis of previous meta-path-based SHGL methods. Therefore, the objective function of previous meta-path-based SHGL methods can be formulated as:
    \begin{equation}
        \min_{\theta } \sum_{i}^{n}   || \mathbf{z}^{(1)}_i- \mathbf{z}^{(2)}_i||_2^2.
        \label{eq2}
    \end{equation}
    Based on Eq. (\ref{apeq1}), we can rewrite Eq. (\ref{eq2}) as:
\begin{equation}
\begin{aligned}
&~~~~~\min_{\theta } \sum_{i}^{n}   || \mathbf{z}^{(1)}_i- \mathbf{z}^{(2)}_i||_2^2 \\
&=\min_{\theta } \sum_{i}^{n}   || \mathbf{h}_i + \{\mathbf{h}_j, v_j\in \mathcal{N}(v_i)^{(1)}\}- \mathbf{h}_i - \{\mathbf{h}_k, v_k\in \mathcal{N}(v_i)^{(2)}\}||_2^2
\\
&=\min_{\theta } \sum_{i}^{n}   || \mathbf{h}_i  - \{\mathbf{h}_k, v_k\in \mathcal{N}(v_i)^{(2)}\} + \{\mathbf{h}_j, v_j\in \mathcal{N}(v_i)^{(1)}\}- \mathbf{h}_i||_2^2\\
&=  \min_{\theta } \sum_{i}^{n}   || \mathbf{h}_i  - \{\mathbf{h}_k, v_k\in \mathcal{N}(v_i)^{(2)}\}||_2^2 + ||\{\mathbf{h}_j, v_j\in \mathcal{N}(v_i)^{(1)}\}- \mathbf{h}_i||_2^2 \\  
&~~~~ +2\sum_{i}^{n} \langle  (\mathbf{h}_i  - \{\mathbf{h}_k, v_k\in \mathcal{N}(v_i)^{(2)}\})\cdot(\{\mathbf{h}_j, v_j\in \mathcal{N}(v_i)^{(1)}\}- \mathbf{h}_i) \rangle \\
&= \min_{\theta } \sum_{i}^{n}\sum_{k}^{n}  \mathcal{G}^{(2)}_{i,k} ||\mathbf{h}_i-\mathbf{h}_k||_2^2 + \sum_{i}^{n}\sum_{j}^{n}  \mathcal{G}^{(1)}_{i,j} ||\mathbf{h}_i-\mathbf{h}_j||_2^2 \\
&~~~~ +2\sum_{i}^{n} \langle  (\mathbf{h}_i  - \{\mathbf{h}_k, v_k\in \mathcal{N}(v_i)^{(2)}\})\cdot(\{\mathbf{h}_j, v_j\in \mathcal{N}(v_i)^{(1)}\}- \mathbf{h}_i) \rangle \\
&= \min_{\theta } \sum_{i}^{n}\sum_{l}^{n} \mathcal{G}_{i,l} ||\mathbf{h}_i-\mathbf{h}_l||_2^2 +2\sum_{i}^{n} \langle  (\{\mathbf{h}_i  - \{\mathbf{h}_k, v_k\in \mathcal{N}(v_i)^{(2)}) \\ 
&~~~~\cdot(\{\mathbf{h}_j, v_j\in \mathcal{N}(v_i)^{(1)}\}- \mathbf{h}_i) \rangle .
\end{aligned}  
\label{eq3}
\end{equation}
Denote $\mathbf{D}$ as the degree matrix of $\mathcal{G}$, denote $\mathbf{L}=\mathbf{D}-\mathcal{G}$ as the graph laplacian, and  denote $\mathbf{h}^i \in \mathbb{R}^{n}$ $\mathbf{h}_j \in \mathbb{R}^{d}$ is the $i$-th column and $j$-th row of $\mathbf{H}$, according to the spectral graph analysis in \cite{von2007tutorial}, we further have
\begin{equation}
    \begin{aligned}
(\mathbf{h}^i )^T\mathbf{L}(\mathbf{h}^i )&=(\mathbf{h}^i )^T\mathbf{D}(\mathbf{h}^i )^T-(\mathbf{h}^i )^T\mathcal{G}(\mathbf{h}^i )^T\\
&=\sum_{j=1}^{n} \mathbf{d}_{jj} (\mathbf{h}^{ij})^{2}-\sum_{j, k=1}^{n} \mathbf{h}^{ij} \mathbf{h}^{ik} \mathcal{G}_{j k}\\
&=\frac{1}{2} (\sum_{j=1}^{n} \mathbf{d}_{jj} (\mathbf{h}^{ij})^{2}-2\sum_{j, k=1}^{n}\mathbf{h}^{ij} \mathbf{h}^{ik} \mathcal{G}_{j k}+\sum_{k=1}^{n} \mathbf{d}_{kk} (\mathbf{h}^{ik})^{2})\\
&=\frac{1}{2} \sum_{j, k=1}^{n}\mathcal{G}_{j k}(\mathbf{h}^{ij} - \mathbf{h}^{ik})^2.
\end{aligned}
\label{eq77}
\end{equation}
Therefore, we further have 
\begin{equation}
    \begin{aligned}
\sum_{i=1}^{d} (\mathbf{h}^i )^T\mathbf{L}(\mathbf{h}^i )=\frac{1}{2} \sum_{j, k=1}^{n}\mathcal{G}_{j k}\sum_{i=1}^{d}(\mathbf{h}^{ij} - \mathbf{h}^{ik})^2.
\end{aligned}
\label{eq88}
\end{equation}
That is
\begin{equation}
    \text{Tr}(\mathbf{H}^T\mathbf{L}\mathbf{H}) = \frac{1}{2} \sum_{j, k=1}^{n}\mathcal{G}_{j k}||\mathbf{h}_{j} - \mathbf{h}_{k}||^2_2.
    \label{eq99}
\end{equation}

where $\text{Tr}(\cdot)$ indicates the matrix trace.
Therefore, based on Eq. (\ref{eq3}) and Eq. (\ref{eq99}), we can obtain
\begin{equation}
    \begin{aligned}
&~~~~~\min_{\theta } \sum_{i}^{n}   || \mathbf{z}^{(1)}_i- \mathbf{z}^{(2)}_i||_2^2 \\
&= \min_{\theta }2\text{Tr}(\mathbf{H}^T\mathbf{L}\mathbf{H})+2\sum_{i}^{n} \langle  (\mathbf{h}_i  - \{\mathbf{h}_k, v_k\in \mathcal{N}(v_i)^{(2)}\})\cdot(\mathbf{h}_j, v_j\in \mathcal{N}(v_i)^{(1)}\}- \mathbf{h}_i\}) \rangle\\
&=  \min_{\theta }2\text{Tr}(\mathbf{H}^T\mathbf{L}\mathbf{H})+2\sum_{i,j,k}^{n}\mathcal{G}_{i,j}\mathcal{G}_{i,k}\langle  (\mathbf{h}_i  -\mathbf{h}_j)\cdot (\mathbf{h}_i  -\mathbf{h}_k)\rangle.
\end{aligned}
\end{equation}
Based on the assumption that $\mathbf{H}^T\mathbf{H} = \mathbf{I}$, we can conclude that previous meta-path-based SHGL methods, which extract the invariance among different graphs, equals the known spectral clustering with additional regularization. Note that the MSE loss in the above example can be replaced by other contrastive or non-contrastive loss (\eg InfoNCE \cite{oord2018representation}), and we can easily obtain similar results.

After that, we further prove the connection between recent adaptive-graph-based SHGL methods \cite{mo2024selfsupervised} and the spectral clustering. Denote the self-expressive matrix in \cite{mo2024selfsupervised} as $\mathbf{S}$, and denote the representations after projection by linear transformation as $\mathbf{H}$. Moreover, denote $\mathbf{D}_\mathbf{S}$ as the degree matrix of $\mathbf{S}$ and denote $\mathbf{L}_\mathbf{S}=\mathbf{D}_\mathbf{S}-\mathbf{S}$ as the graph Laplacian. Given that the self-expressive matrix is symmetrical and non-negative, the objective function of previous adaptive-graph-based SHGL methods can be formulated as:
\begin{equation}
\begin{aligned}
    &~~~~~\min_{\theta,\mathbf{S}}\| \mathbf{H}-\mathbf{S}\mathbf{H}\|_F^2+\alpha\sum_{i,j=1}^{n}d_{ij}\mathbf{s}_{ij}+\beta\sum_{i,j=1}^{n}\mathbf{s}_{ij}^2 ,
      \label{apeq6}
\end{aligned}
\end{equation}
where $\alpha$ and $\beta$ are non-negative parameters, and $d_{ij}$ indicates the distance among nodes based on $\mathbf{H}$ or original node features.
Based on the self-expressive constraint in the first term of Eq. (\ref{apeq6}), we have  
\begin{equation}
    \mathbf{h}_{i}=\sum_{j=1, j \neq i}^{n} \mathbf{s}_{i j} \mathbf{h}_{j}, \forall 1\le i\le n.
    \label{apeq7}
\end{equation}
Therefore, for any $\mathbf{h}_i$ where  $i \in [1, n]$, we further have
\begin{equation}
    \mathbf{h}_{i}^T\mathbf{h}_{i}=\sum_{j=1, j \neq i}^{n} \mathbf{s}_{i j} \mathbf{h}_{i}^T\mathbf{h}_{j}.
\end{equation}
Based on the constraint $\mathbf{s}_i^T\mathbf{1}=1$, we obtain
\begin{equation}
    (\sum_{j=1}^{n} \mathbf{s}_{ij}+\sum_{j=1}^{n} \mathbf{s}_{i j}) \mathbf{h}_{i}^{T} \mathbf{h}_{i}=2 \sum_{j=1, i \neq j} \mathbf{s}_{i j} \mathbf{h}_{i}^{T} \mathbf{h}_{j} .
\end{equation}
Therefore, the constraint in Eq. (\ref{apeq7}) can be transformed as:
\begin{equation}
    (\sum_{j=1}^{n} \mathbf{s}_{ij}+\sum_{j=1}^{n} \mathbf{s}_{i j}) \mathbf{h}_{i}^{T} \mathbf{h}_{i}-2 \sum_{j=1, i \neq j} \mathbf{s}_{i j} \mathbf{h}_{i}^{T} \mathbf{h}_{j}=0 .
\end{equation}
In addition, we further have
\begin{equation}
    \sum_{i=1}^{n}((\sum_{j=1}^{n} \mathbf{s}_{ij}+\sum_{j=1}^{n} \mathbf{s}_{i j}) \mathbf{h}_{i}^{T} \mathbf{h}_{i}-2 \sum_{j=1, i \neq j} \mathbf{s}_{i j} \mathbf{h}_{i}^{T} \mathbf{h}_{j}) =\sum_{i=1}^{n} \sum_{j=1}^{n}\left\|\mathbf{h}_{i}-\mathbf{h}_{j}\right\|^{2} \mathbf{s}_{i j} .
\end{equation}

Similar to the proof above, we can also rewrite Eq. (\ref{apeq6}) as:
\begin{equation}
\begin{aligned}
    &~~~~~\min_{\theta,\mathbf{S}}2\text{Tr}(\mathbf{H}^T\mathbf{L}\mathbf{H})+\alpha\sum_{i,j=1}^{n}d_{ij}\mathbf{s}_{ij}+\beta\sum_{i,j=1}^{n}\mathbf{s}_{ij}^2 .
      \label{apeq12}
\end{aligned}
\end{equation}

Based on the assumption that $\mathbf{H}^T\mathbf{H} = \mathbf{I}$, we can also conclude that previous adaptive-graph-based SHGL methods are equal to the known spectral clustering with additional regularization. Therefore, we complete the proof.

\end{proof}

\subsection{Proof of Theorem \ref{thm2}}
\label{proof23}
To prove Theorem \ref{thm2}, we first give the definition of the graph-cut algorithm as follows.
\begin{definition} (Graph-Cut)
For a given number k of subsets, the
mincut approach simply consists in choosing a partition $V_1, ...,V_d$ which minimizes
\begin{equation}
\begin{aligned}
&\operatorname{Cut}\left(V_{1}, \ldots, V_{d}\right):=\frac{1}{2} \sum_{i=1}^{d} \mathbf{W}\left(V_{i}, \bar{V}_{i}\right),\\
&\operatorname{RatioCut}\left(V_{1}, \ldots, V_{d}\right):=\frac{1}{2} \sum_{i=1}^{d}  \frac{\mathbf{W}\left(V_{i}, \bar{V}_{i}\right)}{\left|V_{i}\right|}= \sum_{i=1}^{d}  \frac{\operatorname{Cut}\left(V_{i}, \bar{V}_{i}\right)}{\left|V_{i}\right|},
\end{aligned}
\end{equation}
where $\mathbf{W}(V_a, V_b):=\sum_{i \in V_a, j \in V_b} w_{i j}$ indicates the weight between different subsets, and $\bar{V}$ is the complement of $V$. 

\end{definition}

\begin{theorem}
{\rm(Restating Theorem \ref{thm2} in the main
text).}
Under the same assumption in Theorem \ref{thm1}, optimizing previous meta-path-based  and adaptive-graph-based SHGL methods is approximate to performing the $\operatorname{RatioCut}\left(V_{1}, \ldots, V_{d}\right)$  algorithm that divides the learned representations into $d$ partitions $\{V_{1}, \ldots, V_{d}\}$, i.e.,
\begin{equation}
        \min _{\mathbf{H}}\mathcal{L}_{SHGL}\cong \min _{\mathbf{H}} \operatorname{RatioCut}\left(V_{1}, \ldots, V_{d}\right),
\end{equation}
where $d$ indicates the dimension of representations $\mathbf{H}$.
\label{apthm2}
\end{theorem}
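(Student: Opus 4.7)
The plan is to chain the conclusion of Theorem \ref{apthm1} with the standard trace-relaxation characterization of RatioCut. Theorem \ref{apthm1} already reduces the SHGL objective to the spectral clustering problem $\min_{\mathbf{H}^T\mathbf{H}=\mathbf{I}} \operatorname{Tr}(\mathbf{H}^T\hat{\mathbf{L}}\mathbf{H})+R(\mathbf{H})$, so the remaining task is to argue that the leading trace term is, up to a combinatorial relaxation, exactly the RatioCut objective applied to a $d$-way partition of the nodes.

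First I would introduce partition indicator vectors in the standard way: for any partition $\{V_1,\ldots,V_d\}$ of $\mathcal{V}$, define $\mathbf{H}\in\mathbb{R}^{n\times d}$ by $\mathbf{h}_{ij}=1/\sqrt{|V_j|}$ if $v_i\in V_j$ and $0$ otherwise. A direct computation (identical to the identity \eqref{eq99} in the proof of Theorem \ref{apthm1}) yields $\operatorname{Tr}(\mathbf{H}^T\hat{\mathbf{L}}\mathbf{H})=\sum_{j=1}^{d}\frac{\operatorname{Cut}(V_j,\bar V_j)}{|V_j|}=\operatorname{RatioCut}(V_1,\ldots,V_d)$, and it is immediate that such indicator matrices satisfy $\mathbf{H}^T\mathbf{H}=\mathbf{I}$. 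Conversely, relaxing the discrete-valued constraint to allow arbitrary real-valued orthogonal $\mathbf{H}\in\mathbb{R}^{n\times d}$ gives exactly the spectral clustering problem appearing in Theorem \ref{apthm1}, and the discrete partition can be recovered from the rows of $\mathbf{H}$ by any rounding scheme (e.g. $k$-means on the rows). This is the classical Rayleigh--Ritz/relaxation argument from von Luxburg's tutorial on spectral clustering.

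Next I would reconcile the dimensions: since the learned representation $\mathbf{H}$ lives in $\mathbb{R}^{n\times d}$, the relaxed spectral problem admits $d$ orthonormal optimizers, which is exactly the number of partitions the RatioCut relaxation produces. Hence $d$ in the statement is forced to be the representation dimension, and the partitioning granularity of SHGL is governed by $d$ rather than by the number of true classes $c$, which is the point the authors exploit afterwards. The regularization term $R(\mathbf{H})$ from Theorem \ref{apthm1} can be absorbed into the ``$\cong$'' (approximate equivalence), since it does not depend on the cut structure in a leading-order way and only perturbs the optimum within the orthogonal Stiefel manifold; this is why the theorem claims approximation rather than exact equivalence.

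The main obstacle I anticipate is not the trace-to-RatioCut identity, which is textbook, but making rigorous the word \emph{approximate} in the theorem, i.e., quantifying the gap introduced by (i) dropping the discreteness of the indicator matrix and (ii) ignoring the regularizer $R(\mathbf{H})$. I would handle this by stating explicitly that the claim is a tight continuous relaxation of the combinatorial RatioCut, in line with the standard treatment, so that the equivalence symbol $\cong$ should be read in the same relaxational sense used throughout the spectral clustering literature. With this reading, the chain $\min_{\mathbf{H}}\mathcal{L}_{SHGL}\cong\min_{\mathbf{H}}\operatorname{Tr}(\mathbf{H}^T\hat{\mathbf{L}}\mathbf{H})+R(\mathbf{H})\cong\min\operatorname{RatioCut}(V_1,\ldots,V_d)$ closes the argument.
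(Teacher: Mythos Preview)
Your proposal is correct and follows essentially the same route as the paper: both invoke Theorem \ref{apthm1} to reduce $\mathcal{L}_{SHGL}$ to a trace minimization, introduce the normalized partition-indicator matrix $\mathbf{H}$ with $\mathbf{h}_{ij}=1/\sqrt{|V_j|}$, verify $\mathbf{H}^T\mathbf{H}=\mathbf{I}$ and $\operatorname{Tr}(\mathbf{H}^T\hat{\mathbf{L}}\mathbf{H})=\operatorname{RatioCut}(V_1,\ldots,V_d)$, and then appeal to the von~Luxburg relaxation argument to pass from the discrete problem to the real-valued orthogonal one. Your treatment of the regularizer $R(\mathbf{H})$ and of the meaning of ``$\cong$'' is in fact more explicit than the paper's, which simply drops these points after establishing the trace identity.
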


\begin{proof}
Given a partition of $V$ with $n$ samples into $d$ sets $V_1, ...,V_d$, we first define $d$ indicator vectors
$\mathbf{h}_j = (\mathbf{h}_{1,j}, ..., \mathbf{h}_{n,j})'$ by
\begin{equation}
    \mathbf{h}_{i, j}=\left\{\begin{array}{ll}
1 / \sqrt{\left|{V}_{j}\right|} & \text { if } v_{i} \in {V}_{j} \\
0 & \text { otherwise }
\end{array} \quad(i=1, \ldots, n ; j=1, \ldots, d)\right.
\label{eq14}
\end{equation}
Then we set the matrix $\mathbf{H} \in \mathbb{R}^{n \times d}$ as the matrix containing those $d$ indicator vectors as columns. Observe that the columns in $\mathbf{H}$ are orthonormal to each other, \ie $\mathbf{H}^T\mathbf{H} = \mathbf{I}$, where $\mathbf{I}$ is the identity matrix. Denote $\mathbf{L}$ as the unnormalized graph Laplacian, according to \cite{von2007tutorial}, we can obtain
\begin{equation}
    \begin{aligned}
\mathbf{h}_i^{T} \mathbf{L} \mathbf{h}_i & =\frac{1}{2}   \sum_{j, k=1}^{|V_i \cup \bar{V}_i|} w_{j k}(\mathbf{h}_{j}-\mathbf{h}_{k})^{2} \\
& =\frac{1}{2} \sum_{j \in {V}_i, k \in \bar{{V_i}}} w_{j k}\left(\sqrt{\frac{|\bar{V_i}|}{|V_i|}}+\sqrt{\frac{|V_i|}{|\bar{V_i}|}}\right)^{2}+\frac{1}{2} \sum_{j \in \bar{V_i}, k \in V_i} w_{jk}\left(-\sqrt{\frac{|\bar{V_i}|}{|V_i|}}-\sqrt{\frac{|V_i|}{|\bar{V_i}|}}\right)^{2} \\
& =  \operatorname{cut}(V_i, \bar{V_i})\left(\frac{|\bar{V_i}|}{|V_i|}+\frac{|V_i|}{|\bar{V_i}|}+2\right) \\
& = \operatorname{cut}(V_i, \bar{V_i})\left(\frac{|V_i|+|\bar{V_i}|}{|V_i|}+\frac{|V_i|+|\bar{V_i}|}{|\bar{V_i}|}\right) \\
& =  |V_i| \cdot \operatorname{RatioCut}(V_i, \bar{V_i}) .
\end{aligned}
\end{equation}

Moreover, we have $\mathbf{h}_i^{T} \mathbf{L}\mathbf{h}_i = (\mathbf{H}^T\mathbf{L}\mathbf{H})_{ii}$. Therefore, we have
\begin{equation}
\operatorname{Tr}(\mathbf{H}^T\mathbf{L}\mathbf{H}) = \sum_{i =1}^{d}(\mathbf{H}^T\mathbf{L}\mathbf{H})_{ii}=\sum_{i =1}^{d}\mathbf{h}_i^{T} \mathbf{L} \mathbf{h}_i=\operatorname{RatioCut}\left(V_{1}, \ldots, V_{d}\right),
\end{equation}
where $\operatorname{Tr}(\cdot)$ indicates the trace of a matrix. Therefore, minimizing the $\operatorname{RatioCut}\left(V_{1}, \ldots, V_{k}\right)$ can be transferred to 
\begin{equation}
    \min_{V_{1}, \ldots, V_{d}}\operatorname{Tr}(\mathbf{H}^T\mathbf{L}\mathbf{H}) ~~ \operatorname{s.t.,} \mathbf{H}^T\mathbf{H}=\mathbf{I}, \mathbf{H} \operatorname{as ~~ defined~~ in ~~Eq. (\ref{eq14})}.
\end{equation}
 Then we consider relaxing the constraints of the problem by allowing the entries of the matrix $\mathbf{H}$ to assume arbitrary real values. As a result, the problem is transformed into a relaxed version:
\begin{equation}
    \min_{\mathbf{H}\in \mathbb{R}^{n \times d}}\operatorname{Tr}(\mathbf{H}^T\mathbf{L}\mathbf{H}) ~~ \operatorname{s.t.,} \mathbf{H}^T\mathbf{H}=\mathbf{I}.
\end{equation}
This is the standard spectral clustering, as we mentioned above. Therefore, we can obtain that conducting spectral clustering is approximating to conducting the RatioCut algorithm. which divide the learned representations into $d$ partitions, where $d$ indicates the dimension of representations. Thus, we complete the proof.

\end{proof}

\subsection{Proof of Theorem \ref{thm3}}
\label{proof25}
\begin{theorem}
{\rm(Restating Theorem \ref{thm3} in the main
text).}
    Optimizing the spectral loss $\mathcal{L}_{sp}$ leads to performing the spectral clustering  based on the affinity matrix $\mathbf{S}$ with $c$ connected components and conducting RatioCut ($V_1,\ldots, V_{c}$) algorithm to divide the learned representations into $c$ partitions, i.e.,
\begin{equation}
\min\mathcal{L}_{sp}\Rightarrow\min\operatorname{Tr} (\mathbf{Y}^T\mathbf{L}_\mathbf{S}\mathbf{Y})\Rightarrow\min \operatorname{RatioCut}(V_{1},\ldots,V_{c}).
\end{equation}
    \label{apthm3}
\end{theorem}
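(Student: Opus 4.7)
The plan is to chain together two standard spectral-graph-theory reductions, each of which has effectively already appeared in the proofs of Theorem~\ref{thm1} and Theorem~\ref{thm2}. The first arrow, $\min\mathcal{L}_{sp}\Rightarrow\min\operatorname{Tr}(\mathbf{Y}^{T}\mathbf{L}_{\mathbf{S}}\mathbf{Y})$, is obtained by recognizing the classical Laplacian-quadratic identity
\begin{equation}
\tfrac{1}{2}\sum_{i,j=1}^{n}\mathbf{s}_{ij}\|\mathbf{y}_{i}-\mathbf{y}_{j}\|_{2}^{2}=\operatorname{Tr}(\mathbf{Y}^{T}\mathbf{L}_{\mathbf{S}}\mathbf{Y}),
\end{equation}
which was already verified in Eqs.~(\ref{eq77})--(\ref{eq99}) of the proof of Theorem~\ref{thm1}. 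Substituting into Eq.~(\ref{eq13}) shows that the first term of $\mathcal{L}_{sp}$ equals, up to the constant $2/n^{2}$, the Laplacian quadratic form $\operatorname{Tr}(\mathbf{Y}^{T}\mathbf{L}_{\mathbf{S}}\mathbf{Y})$. The orthogonality condition $\mathbf{Y}^{T}\mathbf{Y}=n\mathbf{I}$ is enforced exactly by the QR-based orthogonal layer of Eq.~(\ref{eq12}), while the entropy term $-\gamma H(\mathbf{Y})$ acts as a regularizer that excludes the trivial collapsed minimizer associated with the constant nullspace vector of $\mathbf{L}_{\mathbf{S}}$. Hence minimizing $\mathcal{L}_{sp}$ under the orthogonality constraint reduces to the standard spectral clustering problem $\min_{\mathbf{Y}^{T}\mathbf{Y}=n\mathbf{I}}\operatorname{Tr}(\mathbf{Y}^{T}\mathbf{L}_{\mathbf{S}}\mathbf{Y})$.

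For the second arrow, $\min\operatorname{Tr}(\mathbf{Y}^{T}\mathbf{L}_{\mathbf{S}}\mathbf{Y})\Rightarrow\min\operatorname{RatioCut}(V_{1},\ldots,V_{c})$, I would invoke Lemma~\ref{lem1} together with the rank constraint imposed in Eq.~(\ref{eq6}). Because the optimal $\mathbf{S}$ satisfies $\operatorname{rank}(\mathbf{L}_{\mathbf{S}})=n-c$, the graph induced by $\mathbf{S}$ decomposes into exactly $c$ connected components, so the nullspace of $\mathbf{L}_{\mathbf{S}}$ has dimension $c$ and is spanned by piecewise-constant vectors supported on those blocks. This places us precisely in the setting of the relaxation argument rehearsed in the proof of Theorem~\ref{thm2}, but with $d$ replaced by $c$. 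Defining $c$ indicator vectors as in Eq.~(\ref{eq14}) with $j=1,\ldots,c$ and repeating the block-wise computation yields
\begin{equation}
\operatorname{Tr}(\mathbf{Y}^{T}\mathbf{L}_{\mathbf{S}}\mathbf{Y})=\operatorname{RatioCut}(V_{1},\ldots,V_{c}),
\end{equation}
so minimizing the trace over orthogonal $\mathbf{Y}\in\mathbb{R}^{n\times c}$ is exactly the continuous relaxation of the $c$-way RatioCut problem.

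The main obstacle is justifying that the partition count is pinned at $c$ rather than some generic column dimension: this rests on pairing the rank constraint Eq.~(\ref{eq6}) with Lemma~\ref{lem1} to force exactly $c$ connected components in $\mathbf{S}$, combined with the architectural choice $p_{\varphi}\in\mathbb{R}^{d_{1}\times c}$ which fixes $\mathbf{Y}\in\mathbb{R}^{n\times c}$; only under both ingredients does the reduction from the trace problem to an exact $c$-way RatioCut go through. A secondary subtlety is verifying that the entropy regularizer does not perturb the RatioCut optimum: at any balanced block-constant $\mathbf{Y}$ the entropy $H(\mathbf{Y})$ is already maximal, so the $-\gamma H(\mathbf{Y})$ term is inactive at the non-trivial optimum and serves only to exclude the degenerate minimizer that would otherwise survive the relaxation. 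With these two observations, the chain of implications in the statement follows.
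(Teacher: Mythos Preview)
Your proposal is correct and follows essentially the same route as the paper's proof: rewrite the first term of $\mathcal{L}_{sp}$ as $\tfrac{2}{n^{2}}\operatorname{Tr}(\mathbf{Y}^{T}\mathbf{L}_{\mathbf{S}}\mathbf{Y})$ via the Laplacian quadratic identity (the paper phrases this as an application of Ky Fan's Theorem), use the orthogonality of $\mathbf{Y}$ together with the rank constraint and Lemma~\ref{lem1} to pin down $c$ connected components, and then reuse the argument of Theorem~\ref{thm2} with $d$ replaced by $c$ to obtain the $c$-way RatioCut. Your discussion of the entropy term's inactivity at the block-constant optimum and of the architectural choice $p_{\varphi}\in\mathbb{R}^{d_{1}\times c}$ goes slightly beyond what the paper spells out, but does not alter the approach.
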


\begin{proof}
    According to Ky Fan’s Theorem \cite{fan1949theorem}, the spectral loss $\mathcal{L}_{sp}$ can be written as:
    \begin{equation}
\begin{aligned}
\mathcal{L}_{sp}&=\frac{1}{n^{2}} \sum_{i, j=1}^{n} \mathbf{s}_{i j}\left\|{\mathbf{y}}_{i}-{\mathbf{y}}_{j}\right\|^{2} - \gamma H(\mathbf{Y})\\
&=\frac{2}{n^{2}} \operatorname{Tr}(\mathbf{Y}^T\mathbf{L}_\mathbf{S}\mathbf{Y}) -\gamma H(\mathbf{Y}).
\end{aligned}    
\end{equation}

Therefore, optimizing the proposed method with $\mathcal{L}_{sp}$  is equivalent to performing the spectral clustering with additional regularization. 

Note that, under the orthogonal constraint, the minimum of $\mathcal{L}_{sp}$ is attained when the column space of $\mathbf{Y}$ is the subspace of the $c$ eigenvectors corresponding to the smallest $c$ eigenvalues
of $\mathbf{L}_\mathbf{S}$. In other words, the learned $\mathbf{Y}$ can perfectly fit the eigenvectors when the minimum of $\mathcal{L}_{sp}$ is attained. Recall the objective function in the main text, \ie 
\begin{equation}
        \begin{array}{l}
\min _{\mathbf{S},\mathbf{F}} \sum_{i, j=1}^{n}(\|\mathbf{h}_{i}-\mathbf{h}_{j}\|_{2}^{2} \mathbf{s}_{i j}+\alpha\mathbf{s}_{i j}^{2}+\beta\left\|\mathbf{f}_{i}-\mathbf{f}_{j}\right\|_{2}^{2}\mathbf{s}_{i j}) \\
\text { s.t., } \forall i, \mathbf{s}_{i}^{T} \mathbf{1}=1,0 \leq \mathbf{s}_{i} \leq 1,\mathbf{F}^T\mathbf{F}=\mathbf{I}.
\end{array}
\end{equation}
Therefore, when the minimum of $\mathcal{L}_{sp}$ is attained, the constraints in the above function can be satisfied, \ie $\operatorname{rank}(\mathbf{L}_\mathbf{S}) = n-c$ holds. As a result, we can obtain the  affinity matrix $\mathbf{S}$ with exactly $c$ connected components.

Moreover, according to the Theorem \ref{apthm2}, 
we have
\begin{equation}
\operatorname{Tr}(\mathbf{Y}^T\mathbf{L}\mathbf{Y}) = \sum_{i =1}^{c}(\mathbf{Y}^T\mathbf{L}_\mathbf{S}\mathbf{Y})_{ii}=\sum_{i =1}^{c}\mathbf{y}_i^{T} \mathbf{L}_\mathbf{S}\mathbf{y}_i=\operatorname{RatioCut}\left(V_{1}, \ldots, V_{c}\right).
\end{equation}
 That is, the proposed method divides the learned representations into $c$ partitions, where $c$ indicates the number of classes. Thus, we complete the proof.

\end{proof}

\subsection{Proof of Theorem \ref{thm4}}
We first follow previous works \cite{natekar2020representation} to define the Complexity Measure to evaluate the generalization ability of neural networks based on the Davies Bouldin Index.
\begin{definition} (Complexity Measure)
    The complexity measure of neural networks can be defined as:
    \begin{equation}
        C=\frac{1}{k} \sum_{i=0}^{k-1} \max _{i \neq j} \frac{S_{i}+S_{j}}{M_{i, j}},
        \label{eq20}
    \end{equation}
where 
\begin{equation}
\begin{array}{c}
S_{i}=\left(\frac{1}{n_{i}} \sum_{\tau}^{n_{i}}\left|O^{i}_{\tau}-\mu_{{i}}\right|^{p}\right)^{1 / 2} \text { for } i=1 \cdots k \\
M_{i, j}=\left\|\mu_{{i}}-\mu_{{j}}\right\|_{2} \quad \text { for } i, j=1 \cdots k,
\end{array}
\end{equation}
$i$ and $j$  are indices of two different classes, $O_{i}^{(\tau)}$ is the output representation of the $\tau$-th sample belonging to class $i$ for the given model, $\mu_i$ is the cluster centroid of the representations of class $i$, $S_{i}$ is a measure of scatter within representations of class $i$, and $M_{i, j}$ is a
measure of separation between representations of classes $i$ and $j$.
\end{definition}

Moreover, we further follow previous works \cite{natekar2020representation, jiang2021methods} to define the generalization bound $G$ of a model based on the model complexity, \ie
\begin{definition} (Generalization Bound)
For any $\delta \in [0, 1]$, with probability at least $1-\delta$,  the generalization bound $G$ of a model follows the inequality, \ie 
\begin{equation}
G \le \frac{1}{n} \sum_{i=1}^{n} \ell(f(\mathbf{x}_{i}), \mathbf{y}_{i})+\sqrt{\frac{C}{n}}+\mathcal{O}(\sqrt{\frac{\log (1 / \delta)}{n}}),
\end{equation}
where $(\mathbf{x}_{i}, \mathbf{y}_{i})$ is a pair of labeled data, $f$ is the model, $l$ is the loss function, $n$ is the number of labeled data, $C$ is the model complexity measure.
\end{definition}
Based on the Definitions above, we can derive the Theorem as follows.
\label{proof26}
\begin{theorem}
{\rm(Restating Theorem \ref{thm4} in the main
text).}
The proposed method with dual  consistency constraints achieves a lower boundary of the model complexity $C$ and a higher generalization ability boundary $G$ than previous SHGL with the node-level consistency constraint only, i.e.,
\begin{equation}
\inf(C_{SCHOOL}) < \inf(C_{SHGL}),\quad
\sup(G_{SCHOOL}) > \sup(G_{SHGL}),
\end{equation}
    where $\inf(\cdot)$ and $\sup(\cdot)$ indicates lower bound and upper bound, respectively.
    \label{apthm4}
\end{theorem}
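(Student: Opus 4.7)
The plan is to establish both inequalities by tracing how each loss term in $\mathcal{J}$ affects the two building blocks of the complexity measure, namely the intra-class scatter $S_i$ and the inter-class separation $M_{i,j}$, and then to propagate these bounds through the generalization inequality. Throughout, I will invoke Theorem \ref{thm3}, which guarantees that SCHOOL partitions the representations into exactly $c$ clusters aligned with the class structure, so the cluster centroids $\hat{\mathbf{q}}_{\mathbf{y}_i}$ produced by Eq. (\ref{eq16}) can be identified (in the optimal regime) with the class centroids $\mu_i$ appearing in the definition of $C$.

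First, I would analyze the effect of the cluster-level consistency $\mathcal{L}_{cc}$ on $S_i$. Since Eq. (\ref{eq17}) directly minimizes $\sum_i \|\tilde{\mathbf{q}}_i - \hat{\mathbf{q}}_{\mathbf{y}_i}\|_2^2$, and since by Theorem \ref{thm3} the indices $\mathbf{y}_i$ recovered from the rank-constrained partition respect the class structure, the infimum of
\begin{equation}
S_i = \Bigl(\tfrac{1}{n_i}\sum_{\tau}^{n_i}\bigl|O^i_\tau - \mu_i\bigr|^{p}\Bigr)^{1/2}
\end{equation}
attained by SCHOOL is upper bounded by the corresponding quantity obtained with $\mathcal{L}_{nc}$ alone, with a strict gap whenever classes are non-degenerate. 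This is because $\mathcal{L}_{nc}$ only aligns representations of the same node across two views, whereas $\mathcal{L}_{cc}$ additionally contracts every representation toward a common centroid.

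Second, I would analyze the inter-class separation $M_{i,j} = \|\mu_i - \mu_j\|_2$. By Lemma \ref{lem1} together with Eq. (\ref{eq8}), the affinity matrix $\mathbf{S}$ used in the message-passing step $\mathbf{Z} = \mathbf{S}\mathbf{H}$ has exactly $c$ connected components, so no inter-class edges survive. In contrast, previous SHGL methods aggregate over meta-path or adaptive graphs that contain inter-class noise, which pulls class means toward one another during propagation. Hence $\sup M^{SCHOOL}_{i,j} \ge \sup M^{SHGL}_{i,j}$. Combining this monotonicity with the scatter bound yields
\begin{equation}
\inf_{SCHOOL}\frac{S_i + S_j}{M_{i,j}} < \inf_{SHGL}\frac{S_i + S_j}{M_{i,j}},
\end{equation}
and averaging over $i$ as in Eq. (\ref{eq20}) gives $\inf(C_{SCHOOL}) < \inf(C_{SHGL})$.

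Finally, I would plug the above into the generalization-bound inequality. Since the bound is monotone increasing in $C$ through the $\sqrt{C/n}$ term while all other terms are independent of the representation learner, a strictly smaller infimum of $C$ translates to a strictly tighter upper bound, which, in the paper's convention of labelling the attainable bound as the generalization-ability boundary $G$, yields $\sup(G_{SCHOOL}) > \sup(G_{SHGL})$. The main obstacle is the step that identifies $\hat{\mathbf{q}}_{\mathbf{y}_i}$ with the true class centroid $\mu_i$: this requires arguing that Theorem \ref{thm3} holds not just up to the RatioCut relaxation but after the additional optimization of $\mathcal{L}_{cc}$, and that the resulting assignment $\hat{\mathbf{Y}} = \operatorname{argmax}(\mathbf{Y})$ matches the ground-truth partition up to permutation; a mild separability assumption on the underlying class distributions, together with the fact that the rank constraint already enforces $c$ disjoint components, should be enough to make this identification rigorous and the inequalities strict.
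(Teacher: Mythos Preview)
Your overall strategy---show that $\mathcal{L}_{cc}$ shrinks the intra-class scatter $S_i$ and then push this through the monotone map $C\mapsto\sqrt{C/n}$---is the right skeleton, and the final step to $G$ matches the paper. However, the paper's argument differs from yours in two substantive ways, and your version leaves a gap precisely where the paper does the real work.

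First, the paper does \emph{not} compare $M_{i,j}$ between SCHOOL and SHGL at all. It specializes to the binary case, writes out $M_{0,1}$ explicitly for the heterogeneous encoder, and then treats it as a common denominator; the entire comparison happens in the numerator. Your additional claim that the rank constraint enlarges $M_{i,j}$ is extra and, while plausible, is not needed and is not what the paper proves.

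Second, and more importantly, the paper does not simply assert that $\mathcal{L}_{cc}$ lowers $S_i$. It decomposes $S_i^2$ for the heterogeneous representation as a mixture $P_i^2\sigma_i^2+(1-P_i)^2\sigma_{\text{other}}^2$ (one term for same-type features, one for neighbor-type features), and then proves the elementary inequality
\[
a^2 b+(1-a)^2 c\ \ge\ \frac{bc}{b+c},\qquad 0\le a\le 1,\ b,c\ge 0,
\]
to obtain an explicit \emph{lower bound} on $S_i^2$ that is monotone increasing in $\sigma_i$. Since $\mathcal{L}_{cc}$ is precisely $\sum_i\|\tilde{\mathbf q}_i-\hat{\mathbf q}_{\mathbf y_i}\|_2^2$, minimizing it drives $\sigma_i$ down and therefore drives this explicit lower bound on $C$ down. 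That is how the paper obtains $\inf(C_{SCHOOL})<\inf(C_{SHGL})$ without ever having to compare optima of ratios directly. Your proposal, by contrast, argues ``numerator smaller, denominator larger, hence infimum of the ratio smaller,'' which is not a valid inference about infima without exactly this kind of uniform lower-bound control; you would need to supply something playing the role of the auxiliary inequality above. The identification of $\hat{\mathbf q}_{\mathbf y_i}$ with the class centroid via Theorem~\ref{thm3} is a reasonable justification that the paper leaves implicit, so that part of your plan is fine.
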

\begin{proof}
    We take the binary classification as an example, Eq. (\ref{eq20}) can be rewritten as $\frac{S_{0}+S_{1}}{M_{0,1}}$. Then, for the heterogeneous representations $\Tilde{\mathbf{Z}}$ learned by the heterogeneous encoder, we can obtain its cluster centroid $\mu_0$:
    \begin{equation}
        \begin{aligned}
\mu_{{0}} & =\mathbb{E}[\tilde{\mathbf{z}}_{i}^{0}]=\mathbb{E}[\mathbf{W}(\mathbf{X}_i +\sum_{j \in \mathcal{N}\left(v_{i}\right)} \frac{1}{d} \mathbf{X}_{j})] \\
& =\mathbf{W}\left(P_{0} \cdot \mu_{\mathbf{X}_{0}}+\left(1-P_{0}\right) \cdot \mu_{\mathbf{X}_{j}}\right),
\end{aligned}
    \end{equation}
    where $\mathcal{N}(v_{i})$ indicates the neighbors of $v_i$ from other type of nodes, $\mathbf{W}$ indicates the parameters of the heterogeneous encoder, $\mu_{\mathbf{X}_{0}}$ indicates the cluster centroid of the node features of class $i$, $\mu_{\mathbf{X}_{j}}$ indicates the cluster centroid of the node features of other types of nodes.
Similarly, we further have:
\begin{equation}
    \mu_{{1}} = \mathbf{W}\left(P_{1} \cdot \mu_{\mathbf{X}_{1}}+\left(1-P_{1}\right) \cdot \mu_{\mathbf{X}_{k}}\right),
\end{equation}
where $\mu_{\mathbf{X}_{k}}$ indicates the cluster centroid of the node features of other types of nodes.

Therefore, we can obtain 
\begin{equation}
    \begin{aligned}
M_{0,1} & =\left\|\mu_{{0}}-\mu_{{1}}\right\| \\
& =\left\|\mathbf{W}\left(P_{0} \cdot \mu_{\mathbf{X}_{0}}+\left(1-P_{0}\right) \cdot \mu_{\mathbf{X}_{j}}-P_{1} \cdot \mu_{\mathbf{X}_{1}}-\left(1-P_{1}\right) \cdot \mu_{\mathbf{X}_{k}}\right)\right\|. 
\end{aligned}
\end{equation}
Moreover, we have
\begin{equation}
    \begin{aligned}
S_{0}^{2} & =\mathbb{E}[\left\|O_{i}^{0}-\mu_{{0}}\right\|^{2}]=\mathbb{E}\left[<O_{\tau}^{0}-\mu_{{0}}, O_{\tau}^{0}-\mu_{{0}}>\right] \\
& =P_{0}^{2} \mathbb{E}[\left\|\mathbf{W}\left(\mathbf{X}_{i}^{0}-\mu_{\mathbf{X}_{0}}\right)\right\|^{2}]+\left(1-P_{0}\right)^{2} \mathbb{E}[\|\mathbf{W}(\mathbf{X}_{i}^{j}-\mu_{\mathbf{X}_{j}})\|^{2}],
\end{aligned}
\label{eq52}
\end{equation}
where $<\cdot,\cdot > $ is inner production. To rewrite the above function, we first derive the following inequality, \ie
\begin{equation}
a^2b+(1-a)^2c\ge\frac{bc}{b+c}, 
\label{eq53}
\end{equation}
where $0\le a\le1$, and $0\le b,c$. To prove the inequaility in Eq. (\ref{eq53}), we construct function $f(a) = a^2b+(1-a)^2c-\frac{bc}{b+c}$.
We then take the derivative of $f(a)$, i.e.,
\begin{equation}
f'(a)=2ab-2c+2ac.
\end{equation}
Then we let $f'(a)=0$, and have $a=\frac{c}{b+c}$.
We have 
\begin{equation}
f(\frac{c}{b+c})=\frac{c^2b}{(b+c)^2}+\frac{b^2c}{(b+c)^2}-\frac{bc}{b+c}=0.
\end{equation}
In addition, we take the second-order derivative of $f(a)$ and obtain
\begin{equation}
f''(a)=2(b+c)\ge0.
\end{equation}
Therefore, $f(a)$ is decreasing when $a<\frac{c}{b+c}$ and increasing  when $a>\frac{c}{b+c}$, and reaches its minimum 0 at $\frac{c}{b+c}$. As a result, $f(a)\ge0$ always holds for $0\le a\le1$. Thus, we prove the inequality in Eq. (\ref{eq53}).

Given the above inequality in Eq. (\ref{eq53}), for Eq. (\ref{eq52}), we let $\sigma_{0}^{2}=\mathbb{E}[\|\mathbf{W}(\mathbf{X}_{0}^{(i)}-\mu_{\mathbf{X}_{0}})\|^{2}]$, $\sigma_{1}^{2}=\mathbb{E}[\|\mathbf{W}(\mathbf{X}_{1}^{(i)}-\mu_{\mathbf{X}_{1}})\|^{2}]$, $\sigma_{j}^{2}=\mathbb{E}[\|\mathbf{W}(\mathbf{X}_{j}^{(i)}-\mu_{\mathbf{X}_{j}})\|^{2}]$, and $\sigma_{k}^{2}=\mathbb{E}[\|\mathbf{W}(\mathbf{X}_{k}^{(i)}-\mu_{\mathbf{X}_{k}})\|^{2}]$. Moreover, we replace $a$, $b$, and $c$ in Eq. (\ref{eq53}) with $P_0$, $\sigma_0^2$, and $\sigma_j^2$, respectively. 
Then Eq. (\ref{eq52}) can be rewritten as: 
\begin{equation}
    S_{0}^{2}=P_{0}^{2} \sigma_{0}^{2}+\left(1-P_{0}\right)^{2} \sigma_{j}^{2} \geq \frac{\sigma_{0}^{2} \sigma_{j}^{2}}{\sigma_{0}^{2}+\sigma_{j}^{2}}.
    \label{eq26}
\end{equation}
Similarly, we can also reach the following inequality:  
\begin{equation}
    S_{1}^{2}=P_{1}^{2} \sigma_{1}^{2}+\left(1-P_{1}\right)^{2} \sigma_{k}^{2} \geq \frac{\sigma_{1}^{2} \sigma_{k}^{2}}{\sigma_{1}^{2}+\sigma_{k}^{2}}.
    \label{eq27}
\end{equation}
Therefore, the  complexity measure $C$ can calculated by:
\begin{equation}
    C=\frac{\sqrt{S_{0}^{2}}+\sqrt{S_{1}^{2}}}{M_{0,1}} \geq \frac{\frac{\sigma_{0} \sigma_{j}}{\sqrt{\sigma_{0}^{2}+\sigma_{j}^{2}}} + \frac{\sigma_{1} \sigma_{k}}{\sqrt{\sigma_{1}^{2}+\sigma_{k}^{2}}}}{ M_{0,1}}.
\end{equation}
Note that the cluster-level consistency constraint minimizes the first term in the 
$S_{0}^{2}$ and $S_{1}^{2}$ (\ie $\sigma_{0}^2$ and $\sigma_{1}^2$). Moreover, we can observe that Eq. (\ref{eq26}) and Eq. (\ref{eq27}) are the increasing function with respect to $\sigma_{0}$ and $\sigma_{1}$. Therefore, minimizing the cluster-level consistency constraint is equivalent to minimizing the lower bound of the model complexity. Therefore, the lower bound of complexity measure $C$ of the model with the dual consistent constraints is less than the model without it, \ie $\inf(C_{SCHOOL}) < \inf(C_{SHGL})$. As a result, according to \cite{natekar2020representation}, we can conclude that the representations learned by the dual consistent constraints have a higher bound of generalization ability than previous methods with instance-level constraint only, \ie $\sup(G_{SCHOOL}) > \sup(G_{SHGL})$ thus we complete the proof. 
\end{proof}

\subsection{Derivation of Eq. (\ref{eq7}).}
\label{derivationeq7}
Recalling Eq. (\ref{eq7}), \ie
\begin{equation}
     \sum_{i=1}^{c} \tau _{i}\left(\mathbf{L}_\mathbf{S}\right) 
    = \min _{\mathbf{F}^{T} \mathbf{F}=\mathbf{I}} \operatorname{Tr}(\mathbf{F}^{T} \mathbf{L}_\mathbf{S} \mathbf{F})
    = \min _{\mathbf{F}^{T} \mathbf{F}=\mathbf{I}} \frac{1}{2} \sum_{i j} \mathbf{s}_{i j}\left\|\mathbf{f}_{i}-\mathbf{f}_{j}\right\|_{2}^{2},
    \label{eq36}
\end{equation}
where $\mathbf{F} \in \mathbb{R}^{n \times c}$ is the eigenvector (\ie $\mathbf{F}^{T} \mathbf{F}=\mathbf{I}$) of $\mathbf{L}_\mathbf{S}$ corresponding to the $c$ eigenvalues.
We first derive the first equation. The eigendecomposition of the symmetric  $\mathbf{L}_\mathbf{S}$ can be written as: $\mathbf{L}_\mathbf{S} = \mathbf{B}\Lambda\mathbf{B}^T$, where $\mathbf{B}$ is the eigenvector matrix and $\Lambda$ is the diagonal matrix whose diagonal elements are the eigenvalues of  $\mathbf{L}_\mathbf{S}$. 
We have:
\begin{equation}
\begin{array}{l}
\operatorname{Tr}(\mathbf{F}^{T} \mathbf{L}_\mathbf{S} \mathbf{F})=
\operatorname{Tr}(\left(\begin{array}{c}
\mathbf{f}_{1}^{T} \\
\mathbf{f}_{2}^{T} \\
\vdots \\
\mathbf{f}_{c}^{T}
\end{array}\right) \mathbf{L}_\mathbf{S}\left(\begin{array}{llll}
\mathbf{f}_{1} & \mathbf{f}_{2} & \cdots & \mathbf{f}_{c})
\end{array}\right) \\
=\operatorname{Tr}(\left(\begin{array}{cccc}
\mathbf{f}_{1}^{T} \mathbf{L}_\mathbf{S} \mathbf{f}_{1} & \mathbf{f}_{1}^{T} \mathbf{L}_\mathbf{S} \mathbf{f}_{2} & \cdots & \mathbf{f}_{1}^{T} \mathbf{L}_\mathbf{S} \mathbf{f}_{c} \\
\mathbf{f}_{2}^{T} \mathbf{L}_\mathbf{S} \mathbf{f}_{1} & \mathbf{f}_{2}^{T} \mathbf{L}_\mathbf{S} \mathbf{f}_{2} & \cdots & \mathbf{f}_{2}^{T} \mathbf{L}_\mathbf{S} \mathbf{f}_{c} \\
\vdots & \vdots & \ddots & \vdots \\
\mathbf{f}_{c}^{T}\mathbf{L}_\mathbf{S} \mathbf{f}_{1} & \mathbf{f}_{c}^{T} \mathbf{L}_\mathbf{S} \mathbf{f}_{2} & \cdots & \mathbf{f}_{c}^{T} \mathbf{L}_\mathbf{S} \mathbf{f}_{c}
\end{array}\right)) \\
=\sum_{i=1}^{c}\Lambda_{i},
\end{array}
\end{equation}
where $\sum_{i=1}^{c}\Lambda_{i}$ indicates the sum of any $c$ eigenvalues of $\mathbf{L}_\mathbf{S}$. Obviously, $\min _{\mathbf{F}^{T} \mathbf{F}=\mathbf{I}} \operatorname{Tr}(\mathbf{F}^{T} \mathbf{L}_\mathbf{S} \mathbf{F})$ achieves its minimization when $\mathbf{F}$ is the eigenvectors corresponding to $c$ smallest eigenvalues.
Therefore we have $\min_\mathbf{S} \sum_{i=1}^{c} \tau _{i}\left(\mathbf{L}_\mathbf{S}\right) = \min _{\mathbf{F}^{T} \mathbf{F}=\mathbf{I}} \operatorname{Tr}(\mathbf{F}^{T} \mathbf{L}_\mathbf{S} \mathbf{F})$ and the first equation in Eq. (\ref{eq36}) is proved. Moreover, based on Eq. (\ref{eq77})-Eq. (\ref{eq99}), we can further complete the proof the second equation in Eq. (\ref{eq36}).

\subsection{Derivation of the Closed-Form Solution and Parameters.}
\label{closedform}
We first obtain the Lagrangian function of the objective function in Eq. (9) in the main text:
\begin{equation}
    \mathcal{L}(\mathbf{s}_i,\lambda,\varepsilon )= \|\mathbf{s}_{i}+\frac{1}{2 {\alpha}} \mathbf{d}_{i}\|_{2}^{2}-\lambda (\mathbf{s}_i^T-1)-\varepsilon_i^T\mathbf{s}_i,
\end{equation}
where $\lambda$ and $\varepsilon_i \ge \mathbf{0}$ are the Lagrangian multipliers. Based on the KKT condition \cite{boyd2006convex}, we can obtain the closed-form solution of the above Lagrangian function, \ie 
\begin{equation}
    \mathbf{s}_{ij} = (-\frac{1}{2 {\alpha_i}} \mathbf{d}_{ij} + \lambda_i )_+,
    \label{eq37}
\end{equation}
where $(\cdot)_+$ indicates $\max \{\cdot, 0\}$. For the sparse affinity matrix $\mathbf{S}$, each vector $\mathbf{s}_i$ contains $k$ nonzero elements only. Therefore, we have $\mathbf{s}_{ik} \ge 0$ and  $\mathbf{s}_{i,k+1} = 0$. That is, $-\frac{1}{2 {\alpha_i}} \mathbf{d}_{ik} + \lambda_i > 0$ and $-\frac{1}{2 {\alpha_i}} \mathbf{d}_{i,k+1} + \lambda_i \le 0$. Then based on Eq. (\ref{eq37}) and the constraint $\mathbf{s}_i^T\mathbf{1} = 1$, we further have  
\begin{equation}
    \sum_{j=1}^{k}( -\frac{1}{2 {\alpha_i}} \mathbf{d}_{ij} + \lambda_i)=1.
\end{equation}
Therefore, we obtain $\lambda_i=\frac{1}{k}+\frac{1}{2 k \alpha_{i}} \sum_{j=1}^{k} \mathbf{d}_{i j}$.
Moreover, we have the following inequality for $\alpha_i$, \ie
\begin{equation}
    \frac{k}{2} \mathbf{d}_{i k}-\frac{1}{2} \sum_{j=1}^{k} \mathbf{d}_{i j}<\alpha_{i} \leq \frac{k}{2} \mathbf{d}_{i, k+1}-\frac{1}{2} \sum_{j=1}^{k} \mathbf{d}_{i j}.
\end{equation}
Hence, to achieve an optimal solution $\mathbf{s}_i$ contain precisely $k$ non-zero values, we can set $\alpha_i$ as:
\begin{equation}
    \alpha_i=\frac{k}{2} \mathbf{d}_{i, k+1}-\frac{1}{2} \sum_{j=1}^{k} \mathbf{d}_{i j}.
\end{equation}
Then the overall $\alpha$ could be set to the mean of $\alpha_1, \alpha_2, ..., \alpha_n$. Then $\alpha$ can be obtained by:
\begin{equation}
    \alpha=\frac{1}{n} \sum_{i=1}^{n}(\frac{k}{2} \mathbf{d}_{i, k+1}-\frac{1}{2} \sum_{j=1}^{k} \mathbf{d}_{i j}).
\end{equation}

\subsection{Derivation of the Orthogonalization}
\label{orthgo}
Recalling Eq. (\ref{eq12}) in the main text:
\begin{equation}
    \mathbf{Y}=\sqrt{n} \mathbf{P}\left(\mathbf{R}^{-1}\right),
    \label{eq42}
\end{equation}
where $\mathbf{P}$ is the cluster assignment matrix, and $\mathbf{R}$ is a upper triangular matrix obtained from the QR decomposition $\mathbf{P} = \mathbf{E}\mathbf{R}$ and $\mathbf{E}^T\mathbf{E}=\mathbf{I}$.
Then we have 
\begin{equation}
    \mathbf{P}\left(\mathbf{R}^{-1}\right) =  \mathbf{E}.
\end{equation}
We further have
\begin{equation}
\begin{aligned}
    \mathbf{Y}^T\mathbf{Y} &= n \mathbf{E}^{T}\mathbf{E}\\
    &=n\mathbf{I}.
    \end{aligned}
\end{equation}
Therefore, we can obtain that $\mathbf{Y}$ is orthogonal.

\section{Experimental Settings}
\label{settings}
This section  provides detailed experimental settings in Section Experiments, including the description of all datasets in Section \ref{desriptiondata}, summarization of all comparison methods in Section \ref{descriptionmethods}, evaluation protocol in Section \ref{descriptionevaluation}, model architectures and settings in Section \ref{descriptionmodel}, and computing resource details in Section \ref{details}.

\begin{table}[!t]
\small
\centering
\setlength\tabcolsep{4pt}
\begin{threeparttable}[b]
\caption{Statistics of all datasets.}
\begin{tabular}{cccccccccc}
\toprule
Datasets &Type &\#Nodes  & \#Node Types & \#Edges & \#Edge Types &Target Node &\#Training & \#Test \\ \midrule 
ACM& Heter & 8,994 & 3 & 25,922 & 4 & Paper &600 &2,125\\
 \midrule
 Yelp& Heter & 3,913 & 4 & 72,132 & 6 & Bussiness &300 &2,014\\
 \midrule
  DBLP& Heter &  18,405 & 3 &  67,946 & 4 & Author &800 &2,857\\
 \midrule
   Aminer& Heter &  55,783 & 3 &  153,676 & 4 & Paper &80 &1,000\\
 \midrule
     Photo& Homo &  7,650 & 1 &  238,162 & 2 & Photo &765 &6,120\\ 
 \midrule
      Computers& Homo &  13,752 & 1 &  491,722 & 2 & Computer &1,375 &11,002\\ 
 \bottomrule
\end{tabular}
\label{dataset}
\end{threeparttable}
\end{table}

\subsection{Datasets}
\label{desriptiondata}
We use four public heterogeneous graph datasets and two public homogeneous graph datasets from various domains.  Heterogeneous graph datasets include three academic datasets (\ie ACM \cite{WangJSWYCY19}, DBLP \cite{WangJSWYCY19}, and Aminer \cite{kddHuFS19}), and one business dataset (\ie Yelp \cite{ZhaoWSHSY21}). Homogeneous graph datasets include two sale datasets (\ie Photo and Computers \cite{shchur2018pitfalls}). Table \ref{dataset} summarizes the data statistics. We list the details of the datasets as follows.

\begin{itemize}
\item \textbf{ACM}  is an academic heterogeneous graph dataset. It contains three types of nodes (paper (P), author (A), subject (S)), four types of edges (PA, AP, PS, SP), and categories of papers as labels.
\item \textbf{Yelp} is a business heterogeneous graph dataset. It contains four types of nodes (business (B), user (U), service (S), level (L)), six types of edges (BU, UB, BS, SB, BL, LB), and categories of businesses as labels.

\item \textbf{DBLP} is an academic heterogeneous graph dataset. It contains three types of nodes (paper (P), authors (A), conference (C)), four types of edges (PA, AP, PC, CP), and research areas of authors as labels.

\item \textbf{Aminer} is an academic heterogeneous graph dataset. It contains three types of nodes (paper (P), author (A), reference (R)), four types of edges (PA, AP, PR, RP), and categories of papers as labels.


\item \textbf{Photo} and \textbf{Computers} are two co-purchase homogeneous graph datasets. They are two networks extracted from Amazon’s co-purchase data. Nodes are products, and edges denote that these products were often bought together. Products are categorized into several classes by the product category.

\end{itemize}

\begin{table*}[ht]
\small
\centering
\setlength\tabcolsep{8pt}
\caption{The characteristics of all comparison methods.}
\begin{tabular}{rccccccccccc}
\hline
Methods &Hetero &Homo &Semi-sup  &Self-sup/unsup &Meta-path &Adaptive \\\hline
  DeepWalk (2014)& &  $\checkmark$&  &$\checkmark$ & \\\hline
  GCN (2017)&  &  $\checkmark$& $\checkmark$ & & \\ \hline
 GAT (2018)&  &  $\checkmark$& $\checkmark$ & & \\\hline
 DGI (2019)&  &  $\checkmark$&  &$\checkmark$ & \\\hline
 GMI (2020)&   &  $\checkmark$&  & $\checkmark$& \\\hline
 MVGRL (2020)&   &  $\checkmark$&  & $\checkmark$& \\\hline
 GRACE (2020)&  &  $\checkmark$&  &$\checkmark$ & \\\hline
 GCA (2021)&  &  $\checkmark$&  & $\checkmark$& \\\hline
 G-BT (2022)& &  $\checkmark$&  &$\checkmark$ & \\\hline
 COSTA (2022)&  &  $\checkmark$&  & $\checkmark$& \\\hline
 DSSL (2022)&  &  $\checkmark$&  & $\checkmark$&\\\hline
 LRD (2023)&  &  $\checkmark$&  & $\checkmark$& \\\hline
Mp2vec (2017)&$\checkmark$  &  &  &$\checkmark$ &$\checkmark$ \\\hline
 HAN (2019)&  $\checkmark$&  &$\checkmark$  & &$\checkmark$ \\\hline
   HGT (2020)&  $\checkmark$&  &$\checkmark$  & &\\\hline
 DMGI (2020)&$\checkmark$  &  &  & $\checkmark$&$\checkmark$ \\\hline
 DMGIattn (2020)&$\checkmark$  &  &  &$\checkmark$ &$\checkmark$ \\\hline
 HDMI (2021)&$\checkmark$  &  &  &$\checkmark$ &$\checkmark$ \\\hline
 HeCo (2021)&$\checkmark$  &  &  &$\checkmark$ &$\checkmark$ \\\hline
 HGCML (2023)&$\checkmark$  &  &  &$\checkmark$ &$\checkmark$ \\\hline
 CPIM (2023)&$\checkmark$  &  &  & $\checkmark$&$\checkmark$ \\\hline
  HGMAE (2023)&$\checkmark$  &  &  &$\checkmark$ &$\checkmark$ \\\hline
 HERO (2024) &$\checkmark$  &  &  &$\checkmark$&& $\checkmark$ \\ \hline
  SCHOOL (ours) &$\checkmark$  &  &  &$\checkmark$&& $\checkmark$ \\ \hline
\end{tabular}
\label{tabcomparison}
\end{table*}

\subsection{Comparison Methods}
\label{descriptionmethods}
The comparison methods include eleven heterogeneous graph methods and twelve homogeneous graph methods. Heterogeneous graph methods include Mp2vec \cite{metapath2veckddDongCS17}, HAN \cite{WangJSWYCY19}, HGT \cite{wwwHuDWS20}, DMGI \cite{DMGIParkK0Y20}, DMGIattn \cite{DMGIParkK0Y20}, HDMI \cite{jing2021hdmi}, HeCo \cite{WangLHS21}, HGCML \cite{wang2023heterogeneous}, CPIM \cite{CPIM}, HGMAE \cite{HGMAE}, and HERO \cite{mo2024selfsupervised}. Homogeneous graph methods include GCN \cite{kipf2017semisupervised}, GAT \cite{velickovic2018graph}, DeepWalk \cite{perozzi2014deepwalk}, DGI \cite{VelickovicFHLBH19}, GMI \cite{peng2020graph}, MVGRL \cite{hassani2020contrastive}, GRACE \cite{zhu2020deep}, GCA \cite{zhu2021graph}, G-BT \cite{GBT_KBS}, COSTA \cite{zhang2022costa}, DSSL \cite{xiaodecoupled}, and LRD \cite{yang2024self}. The characteristics of all methods are listed in Table \ref{tabcomparison}, where ``Hetero'' and ``Homo'' indicate the methods designed for the heterogeneous graph and homogeneous graph, respectively. ``Semi-sup",  and ``Self-sup/unsup" indicate that the method conducts semi-supervised learning, and self-supervised/unsupervised learning, respectively. ``Meta-path'' indicates that the method requires pre-defined meta-paths during the training process. ``Adaptive'' indicates that the method learns an adaptive graph structure instead of traditional meta-paths.

\subsection{Evaluation Protocol}
\label{descriptionevaluation}
We follow the evaluation in previous works \cite{jing2021hdmi, pan2021multi,CKD00010YCLF022}  to conduct node classification and node clustering
as semi-supervised and unsupervised downstream tasks, respectively. Specifically, we first train models with unlabeled data in a self-supervised manner and output learned node representations. After that, the resulting representations can be used for different downstream tasks. For the node classification task, we train a simple logistic regression classifier with a fixed iteration number, and then evaluate the effectiveness of all methods with Micro-F1 and Macro-F1 scores. For the node clustering task, we conduct clustering and split the learned representations into $c$ clusters with the K-means algorithm, then calculate the normalized mutual information (NMI) and average rand index (ARI) to evaluate the performance of node clustering.

\begin{table*}[t]
\centering
\setlength\tabcolsep{14pt}
\caption{ Settings for the dimensions of encoders (i.e., $g_{\phi}\in\mathbb{R}^{f\times d_1}$ and $f_\theta\in\mathbb{R}^{f\times d_1}$) and projection heads (i.e., $p_{\varphi}\in\mathbb{R}^{d_1\times c}$ and  $q_\gamma\in\mathbb{R}^{d_1\times d_2}$) on all datasets.}
\begin{tabular}{lcccccccccl}
\toprule
 Settings& ACM  & Yelp & DBLP & Aminer & Photo & Computers\\ \midrule
$f$ & 1,902 & 82 & 334 & 128 & 745 & 767 \\
$d_1$ & 512 & 256  & 128 & 256 & 1024 & 1024 \\
$d_2$ & 64 & 256 & 256 & 256 & 256 & 256 \\
$c$ & 3 & 3 & 4 & 4 & 8 & 10 \\
 \bottomrule
\end{tabular}
\label{tabsettings}
\end{table*}

\begin{table*}[t]
\small
\setlength{\tabcolsep}{4.5pt}
\centering
\caption{Clustering performance (\ie NMI and ARI) of all methods on  heterogeneous graph datasets.}
\begin{tabular}{lccccccccccccc}
\toprule
\multirow{2}{*}{\textbf{Method}}&\multicolumn{2}{c}{\textbf{ACM}}& \multicolumn{2}{c}{\textbf{Yelp}}& \multicolumn{2}{c}{\textbf{DBLP}}& \multicolumn{2}{c}{\textbf{Aminer}} \\
\cmidrule(r){2-3} \cmidrule(r){4-5} \cmidrule(r){6-7} \cmidrule(r){8-9}
&NMI&ARI&NMI&ARI &NMI&ARI &NMI&ARI\\
\midrule
DeepWalk& 41.6$\pm$0.5 & 35.3$\pm$0.6  & 35.1$\pm$0.8 &37.7$\pm$1.1 & 69.0$\pm$0.2  & 73.3$\pm$0.3 & 26.2$\pm$0.3&22.4$\pm$0.4 \\
Mp2vec& 21.4$\pm$0.7 & 21.1$\pm$0.5 &  38.9$\pm$0.6 & 39.5$\pm$0.5 &  73.5$\pm$0.4 & 77.7$\pm$0.6 & 30.4$\pm$0.4&25.5$\pm$0.6\\
\midrule
DMGI &67.8$\pm$0.9 & 70.2$\pm$1.0 & 36.8$\pm$0.6 &34.4$\pm$0.7  &72.2$\pm$0.8 &72.8$\pm$0.9&27.3$\pm$0.9&23.1$\pm$0.8\\
DMGIattn&\textbf{70.2$\pm$0.3} & 72.5$\pm$0.6  & 38.1$\pm$0.8 &40.2$\pm$0.6 & 69.6$\pm$0.6 &73.9$\pm$0.4 & 28.3$\pm$0.3&25.5$\pm$0.5\\
HDMI&69.5$\pm$0.5 & 72.3$\pm$0.7 & 38.9$\pm$0.6 &40.7$\pm$0.8  &73.1$\pm$0.3 &74.4$\pm$0.4 &33.5$\pm$0.4&28.9$\pm$0.5\\
HeCo& 67.8$\pm$0.8 & 70.5$\pm$0.7 & 39.3$\pm$0.6 &42.1$\pm$0.8  &74.5$\pm$0.8 &80.1$\pm$0.9 &32.2$\pm$1.1&28.6$\pm$1.0\\
HGCML&69.1$\pm$0.7 & 71.6$\pm$0.8 & 37.4$\pm$0.6 &39.5$\pm$0.8  &74.5$\pm$0.9 &75.1$\pm$1.1 &35.9$\pm$0.6&31.1$\pm$0.5\\
CPIM&68.6$\pm$0.3 &70.8$\pm$0.5 & 40.1$\pm$0.8 &42.1$\pm$0.9 & 73.7$\pm$0.5 &78.0$\pm$0.3 & 35.8$\pm$0.5 &30.1$\pm$0.7\\
HGMAE&69.7$\pm$0.8 & 72.6$\pm$0.6 & 40.3$\pm$0.9 &42.4$\pm$0.8  &76.9$\pm$0.6 &82.3$\pm$0.7 &41.1$\pm$0.8 &38.3$\pm$0.9\\
HERO&68.8$\pm$0.6 &71.8$\pm$0.6  & 38.6$\pm$0.8&40.6$\pm$0.9 & 74.1$\pm$0.7 &79.3$\pm$0.7 & 36.8$\pm$0.7 &35.3$\pm$0.9\\
\textbf{SCHOOL}& 69.6$\pm$0.7 &\textbf{72.7$\pm$0.5}  & \textbf{41.2$\pm$0.9} &\textbf{43.5$\pm$0.6} & \textbf{77.1$\pm$0.6} &\textbf{82.5$\pm$0.5} & \textbf{42.4$\pm$0.6} &\textbf{38.8$\pm$0.8}\\
\bottomrule
\end{tabular}
\label{tabclu}
\end{table*}

\subsection{Model Architectures and Settings}
\label{descriptionmodel}
As described in Section \ref{method}, the proposed method employs the MLP (\ie $g_\phi$) and the closed-form solution of the affinity matrix $\mathbf{S}$ to obtain node representations $\mathbf{Z}$. Moreover, the proposed method employs the heterogeneous encoder (\ie $f_\theta$) to obtain heterogeneous representations $\widetilde{\mathbf{Z}}$. In addition, the proposed method employs the projection head $p_\varphi$ to obtain the cluster assignment matrix $\mathbf{P}$. After that, the proposed method employs projection head $q_\gamma$ to map the node representations and heterogeneous representations into latent spaces. In the proposed method, projection head $p_\varphi$ and $q_\gamma$  are simply implemented by the linear layer, followed by the ReLU activation. We report the settings for the dimensions of encoders in Table \ref{tabsettings}.  Finally, In the proposed method,  all parameters were optimized by the Adam optimizer \cite{kingma2015adam} with an initial learning rate. Moreover, We use early
stopping with a patience of 30 to train the proposed SHGL model. In all experiments, we repeat the experiments five times for all methods and report the average results. 


\subsection{Computing Resource Details}
\label{details}
All experiments were implemented in PyTorch and conducted on a server with 8 NVIDIA GeForce 3090 (24GB memory each). Almost every experiment can be done on an individual 3090, and the training time of all comparison methods as well as our method, is less than 1 hour.

\section{Additional Experiments}
\label{additional_results}
This section provides some additional experimental results to support the proposed method, including experiments on the effectiveness of the affinity matrix in Section \ref{effectiveness_affinity}, visualization of the learned representations in Section \ref{vis_node}, parameter analysis in Section \ref{PA}, experimental results on the node clustering task in Table \ref{tabclu}, and experimental results on homogeneous graph datasets in Table \ref{singleviewdatasets}.

\begin{table*}[t]
\footnotesize
\centering
\caption{Classification performance (\ie Macro-F1 and Micro-F1) on  homogeneous graph datasets.}
\begin{tabular}{lccccccccccccc}
\toprule
\multirow{2}{*}{\textbf{Method}}&\multicolumn{2}{c}{\textbf{Photo}}& \multicolumn{2}{c}{\textbf{Computers}}&  \\
\cmidrule(r){2-3} \cmidrule(r){4-5} 
&Macro-F1&Micro-F1&Macro-F1&Micro-F1\\
\midrule
DeepWalk& 87.4$\pm$0.5 &89.7$\pm$0.3  &84.0$\pm$0.3 &85.6$\pm$0.4 \\
GCN & 90.5$\pm$0.3 &92.5$\pm$0.2  &84.0$\pm$0.4 &86.4$\pm$0.3  \\
GAT & 90.2$\pm$0.5 &91.8$\pm$0.4  &83.2$\pm$0.2 &85.7$\pm$0.4 \\
\midrule
DGI&89.3$\pm$0.2 & 91.6$\pm$0.3  &79.3$\pm$0.3 &83.9$\pm$0.5 \\
GMI& 89.3$\pm$0.4 &90.6$\pm$0.2  &80.1$\pm$0.4 &82.2$\pm$0.4 \\
MVGRL& 90.1$\pm$0.3 &91.7$\pm$0.4  &84.6$\pm$0.6 &86.9$\pm$0.5 
\\
GRACE& 90.3$\pm$0.5 &91.9$\pm$0.3  &84.2$\pm$0.3 &86.8$\pm$0.5 
\\
GCA& 91.1$\pm$0.4 &92.4$\pm$0.4 &85.9$\pm$0.5 &87.7$\pm$0.3 
\\
COSTA&91.3$\pm$0.4 &92.5$\pm$0.3 &86.4$\pm$0.3 &88.3$\pm$0.4
\\
DSSL&90.6$\pm$0.2 &92.1$\pm$0.3  &85.6$\pm$0.3 &87.3$\pm$0.4 
\\
LRD&91.1$\pm$0.5 &92.8$\pm$0.7  &\textbf{86.6$\pm$0.3} &88.6$\pm$0.6 
\\
\textbf{SCHOOL}& \textbf{91.9$\pm$0.4} &\textbf{93.1$\pm$0.3}  & 85.9$\pm$0.6 &\textbf{88.7$\pm$0.5}\\
\bottomrule
\end{tabular}
\label{singleviewdatasets}
\end{table*}

\begin{figure*}[t]
\centering
        \subfigure[HDMI (SIL: 0.36)]{\scalebox{0.2}{\includegraphics{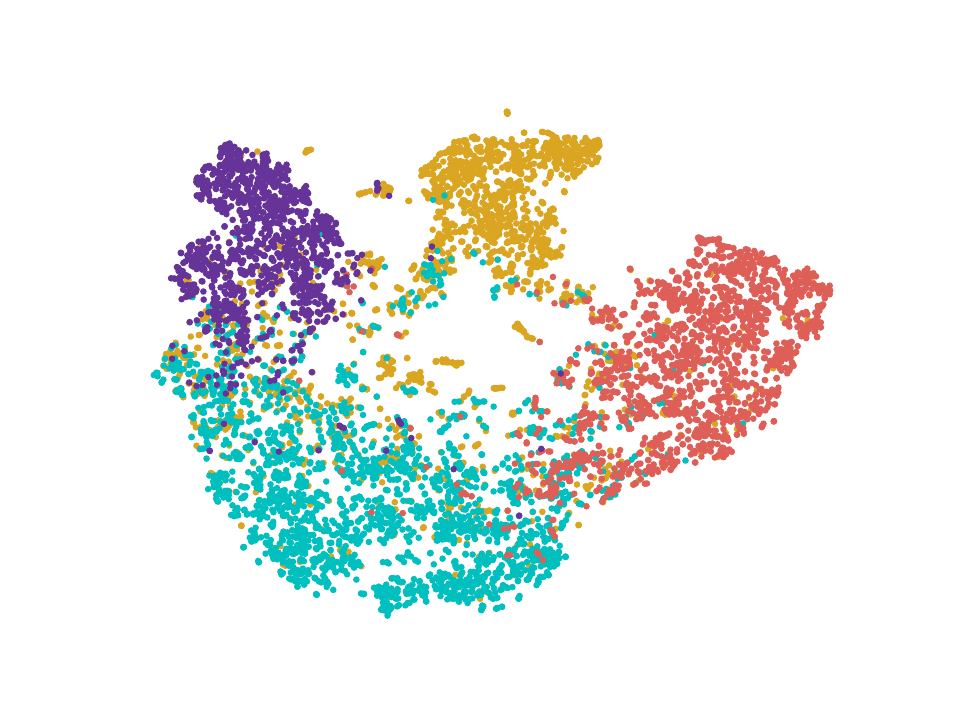}  
        }\label{hdmi}}
        \subfigure[HeCo (SIL: 0.33)]{\scalebox{0.2}
{\includegraphics{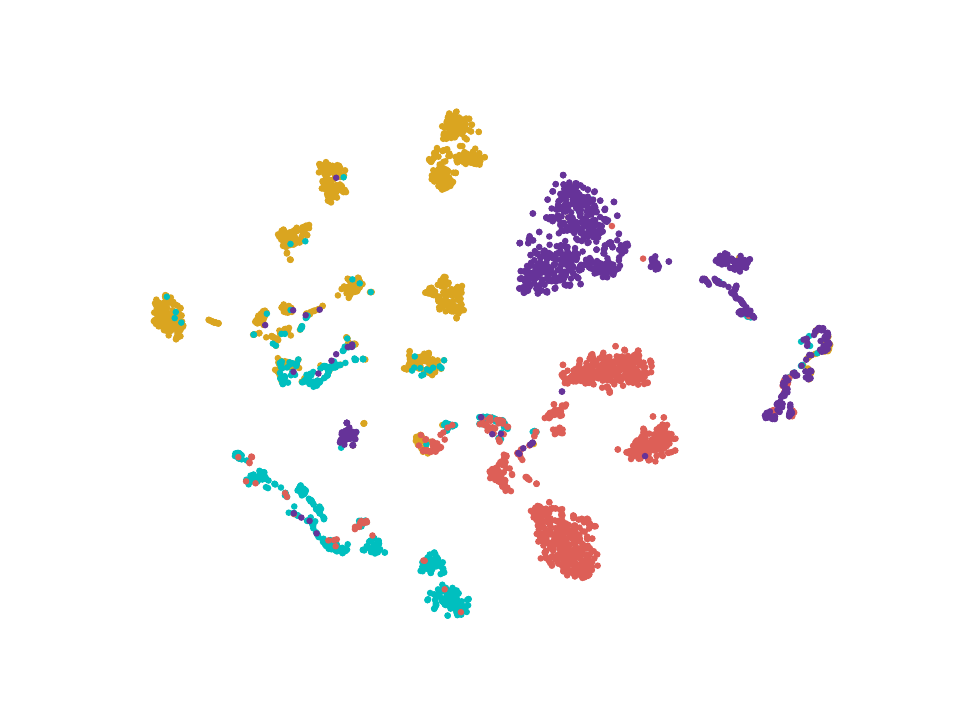}
        }\label{heco}}
        \subfigure[HERO (SIL: 0.34)]{\scalebox{0.2}
        {\includegraphics{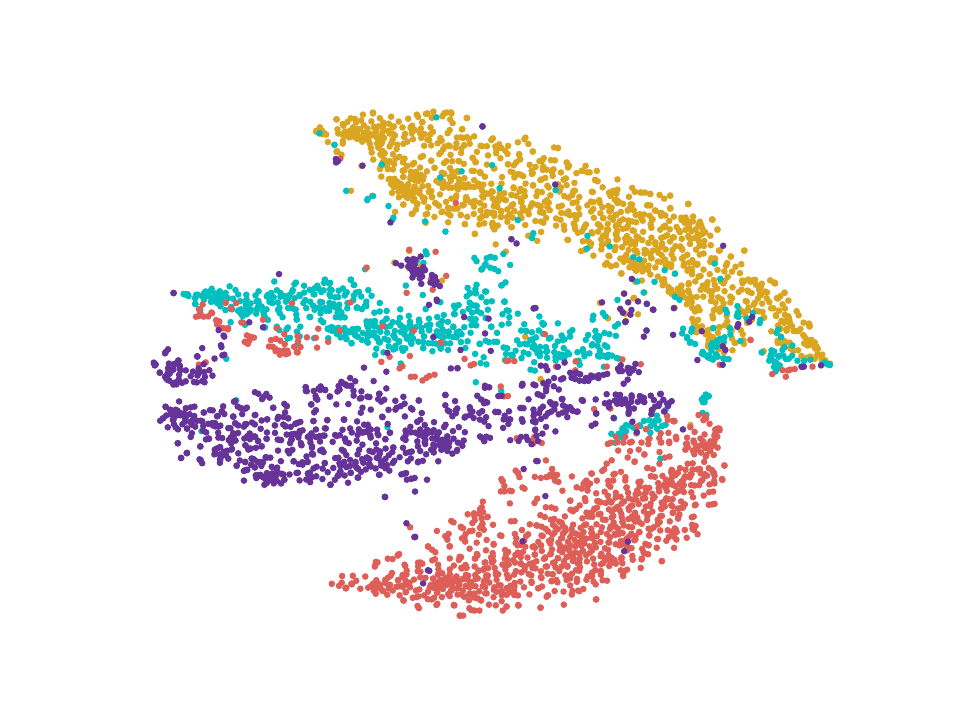}
        }\label{hero}}
        \subfigure[SCHOOL (SIL: 0.41)]{\scalebox{0.2}
{\includegraphics{DBLP_tsne_SIL_0.4502212405204773.pdf}
        }\label{schgl}}
		\caption{Visualization plotted by t-SNE and the corresponding silhouette scores (SIL) of node representations of the proposed SCHOOL and other SHGL comparison methods on the DBLP dataset. }
 \label{vistsne}
\end{figure*}

\subsection{Effectiveness of the Rank-Constrained Affinity Matrix}
\label{effectiveness_affinity}
The proposed method proposes to learn a rank-constrained affinity matrix with exact $c$ components to capture the connections within the same class while mitigating the connections from different classes. This actually shares part of a similar idea with the self-attention mechanism, which aims to assign weights for all sample pairs. To further verify the effectiveness of the rank-constrained affinity matrix, we investigate the performance of the variants methods with the cosine similarity, the affinity matrix, the self-attention mechanism, and report the results in Table \ref{tab_attentionandexpressive}.

Obviously, the proposed method with the affinity matrix obtains superior performance than the cosine similarity and the self-attention mechanism on all datasets. The reason can be attributed to the fact that the affinity matrix in the proposed method is constrained to contain exactly $c$ components to mitigate noisy connections from different classes. In contrast, although either the cosine similarity or self-attention mechanisms may assign small weights for node pairs from different classes, it inevitably introduces noise during the message-passing process to affect the quality of node representations. As a result, the effectiveness of the rank-constrained affinity matrix is verified.

\subsection{Effectiveness of the Node-level Consistency Constraint}
To verify the effectiveness of the node-level consistency constraint, we conducted experiments to replace the proposed node-level consistency constraint with the InfoNCE loss and reported the results in Table \ref{tab_atten}. From Table \ref{tab_atten}, we can find that the variant method with InfoNCE loss obtains a similar performance to the proposed method. However, the InfoNCE loss generally requires the time complexity of $\mathcal{O}(n^2)$, where $n$ is the number of nodes. This may introduce large computation costs during the training process. In contrast, the proposed method simply designs the node-level consistency constraint in Eq. (15) to capture the invariant information with the time complexity of $\mathcal{O}(nd^2)$, where $d$ is the representation dimension and generally $d^2<n$. 

\begin{table*}[t]
\small
\centering
\setlength\tabcolsep{3pt}
\caption{Classification performance (\ie Macro-F1 and Micro-F1) of variant methods with the affinity matrix, cosine similarity and, self-attention mechanisms on heterogeneous graph datasets.}
\begin{tabular}{lccccccccccccc}
\toprule
\multirow{2}{*}{\textbf{Method}}&\multicolumn{2}{c}{\textbf{ACM}}& \multicolumn{2}{c}{\textbf{Yelp}}& \multicolumn{2}{c}{\textbf{DBLP}}& \multicolumn{2}{c}{\textbf{Aminer}} \\
\cmidrule(r){2-3} \cmidrule(r){4-5} \cmidrule(r){6-7} \cmidrule(r){8-9}
&Macro-F1&Micro-F1&Macro-F1&Micro-F1 &Macro-F1&Micro-F1 &Macro-F1&Micro-F1\\
\midrule
cosine similarity& 85.3$\pm$0.9 &85.1$\pm$1.1  &88.2$\pm$0.4 &87.7$\pm$0.7 &89.3$\pm$0.7 &90.5$\pm$0.8&67.6$\pm$0.5 &75.4$\pm$0.6\\
self-attention& 88.7$\pm$0.8 & 88.4$\pm$0.7 & 92.0$\pm$0.5 &91.7$\pm$0.6  &91.2$\pm$0.4 &92.1$\pm$0.6 &73.2$\pm$0.7&82.1$\pm$0.6\\
affinity matrix& \textbf{92.7$\pm$0.6} & \textbf{92.6$\pm$0.5} & \textbf{93.0$\pm$0.7} &\textbf{92.8$\pm$0.4}  &\textbf{94.0$\pm$0.3} &\textbf{94.7$\pm$0.4} &\textbf{77.5$\pm$0.9} &\textbf{86.8$\pm$0.7}\\
\bottomrule
\end{tabular}
\label{tab_attentionandexpressive}
\end{table*}

\begin{table*}[t]
\small
\centering
\setlength\tabcolsep{4.4pt}
\caption{Classification performance (\ie Macro-F1 and Micro-F1) of the affinity matrix and self-attention mechanisms on heterogeneous graph datasets.}
\begin{tabular}{lccccccccccccc}
\toprule
\multirow{2}{*}{\textbf{Method}}&\multicolumn{2}{c}{\textbf{ACM}}& \multicolumn{2}{c}{\textbf{Yelp}}& \multicolumn{2}{c}{\textbf{DBLP}}& \multicolumn{2}{c}{\textbf{Aminer}} \\
\cmidrule(r){2-3} \cmidrule(r){4-5} \cmidrule(r){6-7} \cmidrule(r){8-9}
&Macro-F1&Micro-F1&Macro-F1&Micro-F1 &Macro-F1&Micro-F1 &Macro-F1&Micro-F1\\
\midrule
InfoNCE& 91.3$\pm$1.1 & 91.2$\pm$0.8 & 92.4$\pm$0.7 &92.0$\pm$0.8  &94.1$\pm$0.6 &94.6$\pm$0.5 &76.8$\pm$0.4&85.4$\pm$0.3\\
Proposed& \textbf{92.7$\pm$0.6} & \textbf{92.6$\pm$0.5} & \textbf{93.0$\pm$0.7} &\textbf{92.8$\pm$0.4}  &\textbf{94.0$\pm$0.3} &\textbf{94.7$\pm$0.4} &\textbf{77.5$\pm$0.9} &\textbf{86.8$\pm$0.7}\\
\bottomrule
\end{tabular}
\label{tab_atten}
\end{table*}

\begin{figure*}[ht]
\centering
		{\includegraphics[scale=0.4]{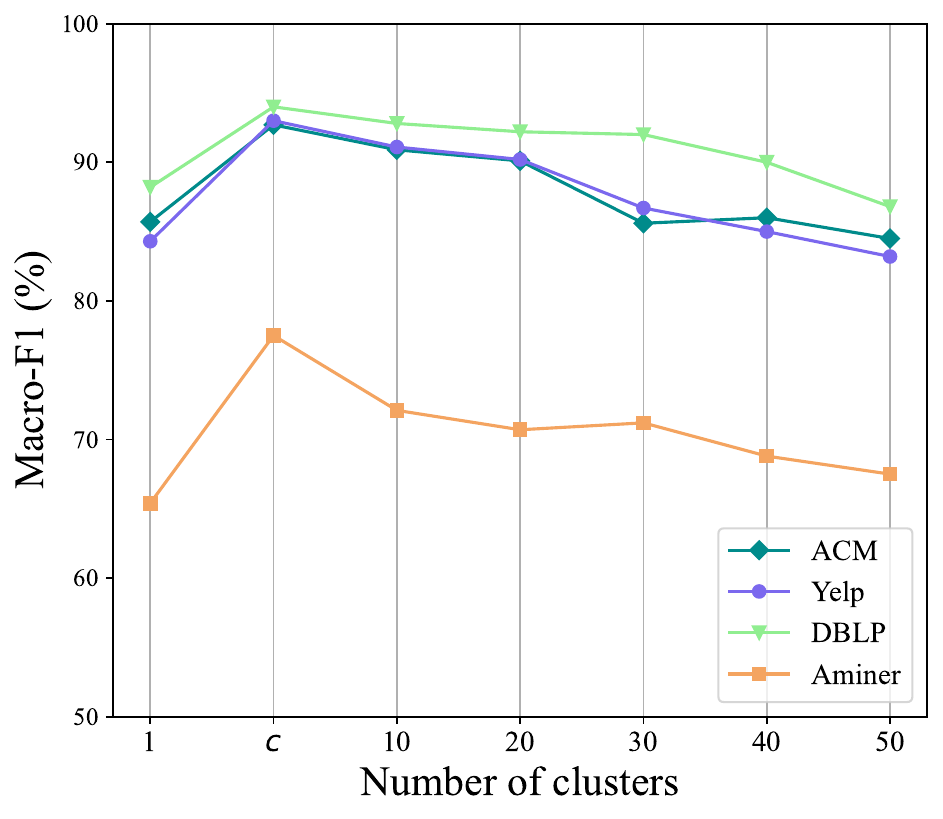}}
		\caption{Classification performance (i.e., Macro-F1) of the proposed method under different clusters.
  }
		\label{figframe}
\end{figure*}

\subsection{Effectiveness of Different Cluster Numbers}
The proposed method divides the learned representations into several clusters. Generally, the number of clusters equals to $c$ obtains better results because, in downstream tasks, it is easier to distinguish $c$ clusters than a larger number of clusters. To verify it, we changed the number of clusters and reported the results in Figure \ref{figframe}. Obviously, the proposed method obtains the best results when the number of clusters equals to $c$ and decreases as the number of classes increases. This is reasonable because when the number of classes increases, the nodes within the same class may be assigned to different clusters, thus making it difficult to classify them correctly.

\subsection{Visualization of the Learned Representations}
\label{vis_node}
To further verify the effectiveness of the learned representations, we visualize node representations of the proposed SCHOOL and other SHGL comparison methods on the DBLP dataset and report the results and corresponding silhouette scores (SIL) in Figure \ref{vistsne}. Obviously, in the visualization, the node representations learned by the proposed method exhibit better clustering status, \ie nodes with different class labels are more widely separated. Moreover,  the representations learned by the proposed method obtain the best silhouette score, compared to other SHGL comparison methods (\ie HDMI, HeCo, and HERO). The reason can be attributed to the fact that the proposed method conducts spectral clustering explicitly, and cuts the learned graph into $c$ components as well as further utilizes the clustering information to facilitate downstream tasks.

\subsection{Parameter Analysis}
\label{PA}
In the proposed method, we employ non-negative parameters (\ie $\mu$, and $\delta$) to achieve a trade-off between different terms of the final objective function $\mathcal{J}$. To investigate the impact of $\mu$, and $\delta$ with different settings, we conduct the node classification on all heterogeneous graph datasets by varying the value of parameters in the range of $[10^{-3}$,$10^{3}]$ and reporting the results in Figure \ref{parafig}. From Figure \ref{parafig}, we can find that if the values of parameters are too small (\eg $10^{-3}$), the proposed method cannot achieve satisfactory performance. This verifies that both node-level and cluster-level consistency constraints are significant for the proposed method.


\begin{figure*}[t]
\centering
        \subfigure[ACM]{\scalebox{0.195}{\includegraphics{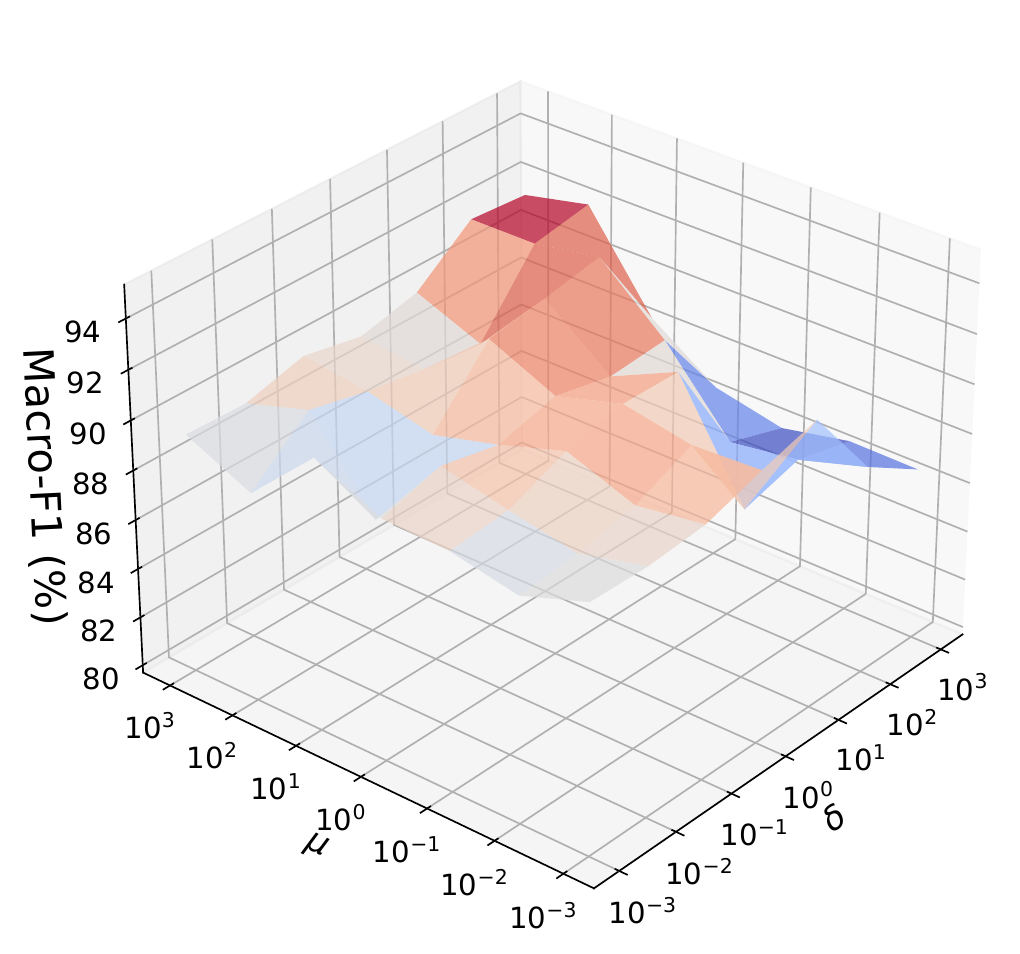}  
        }\label{para_1a}}
        \subfigure[Yelp]{\scalebox{0.195}
{\includegraphics{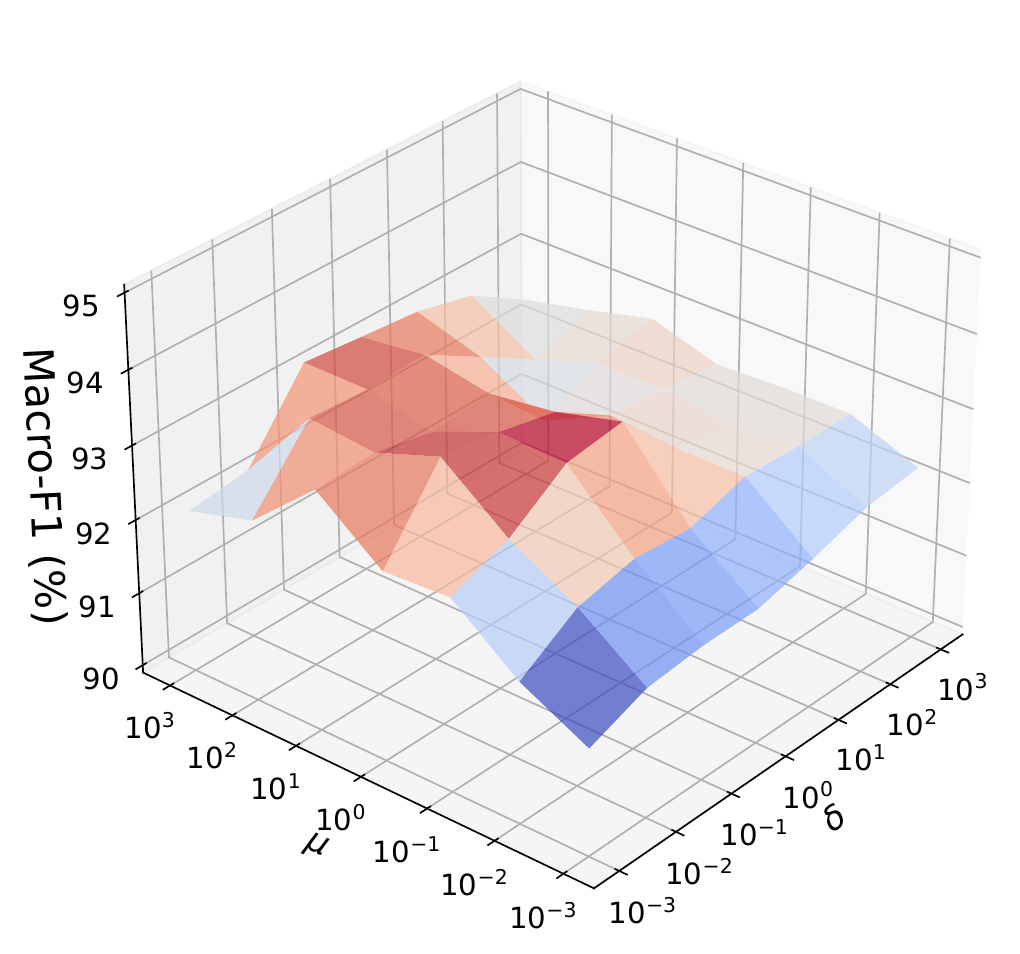}
        }\label{para_1b}}
        \subfigure[DBLP]{\scalebox{0.195}
        {\includegraphics{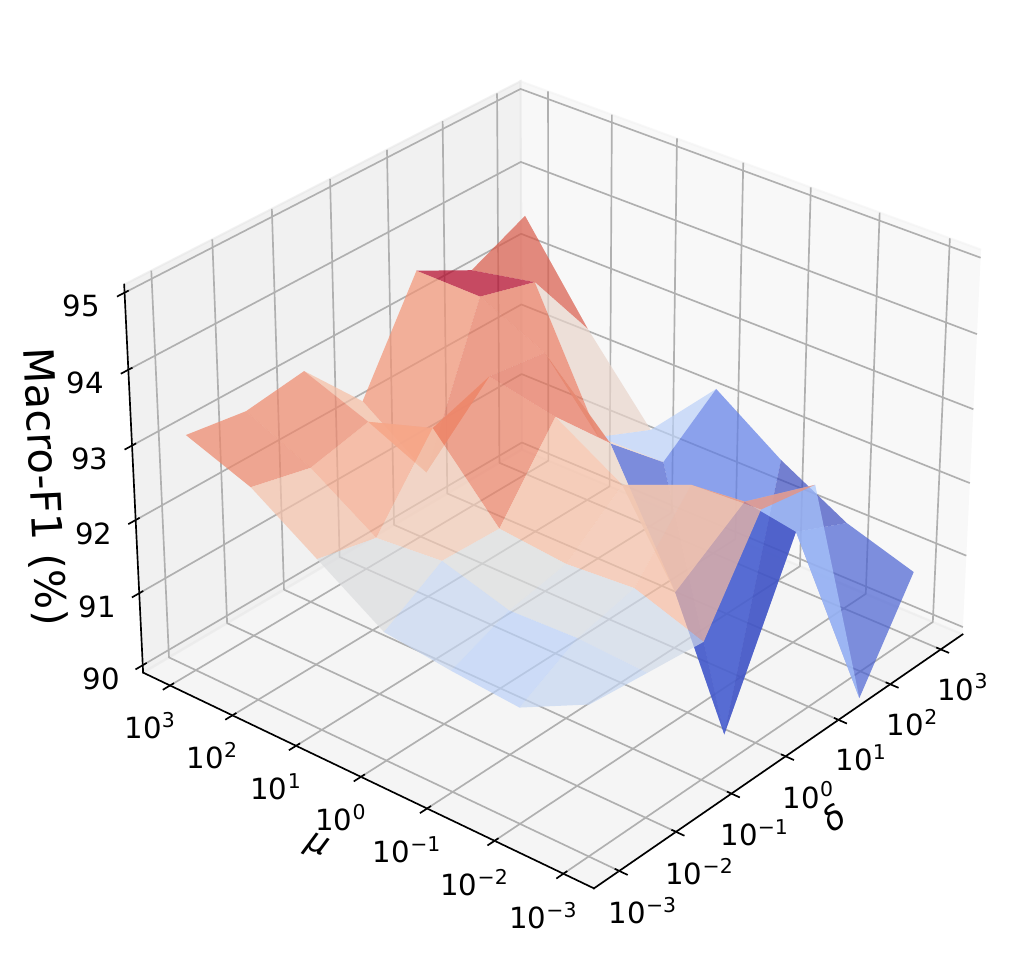}
        }\label{para_1c}}
        \subfigure[Aminer]{\scalebox{0.195}
{\includegraphics{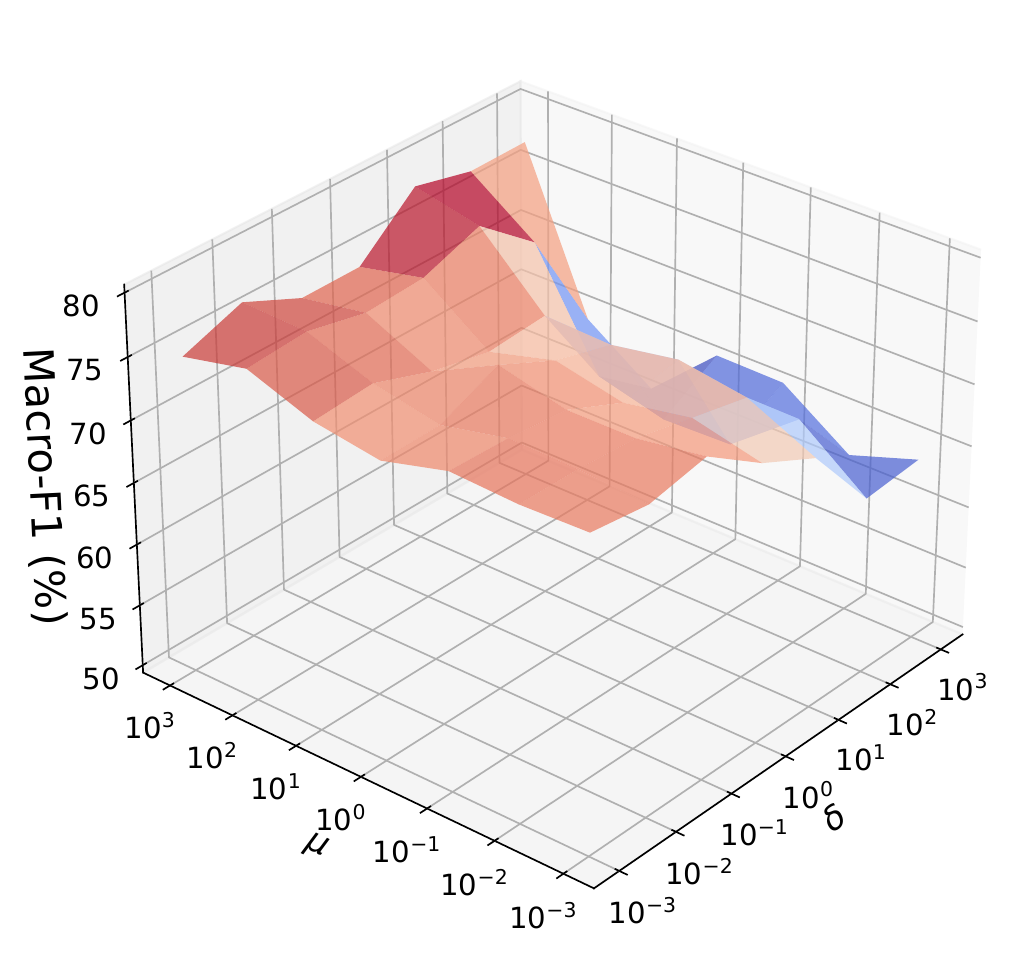}
        }\label{para_1d}}
		\caption{The classification performance of the proposed method at different parameter settings (\ie $\mu$, and $\delta$) on all heterogeneous graph datasets. }
 \label{parafig}
\end{figure*}

\clearpage
\section*{NeurIPS Paper Checklist}
\begin{enumerate}

\item {\bf Claims}
    \item[] Question: Do the main claims made in the abstract and introduction accurately reflect the paper's contributions and scope? 
    \item[] Answer: \answerYes{} 

\item {\bf Limitations}
    \item[] Question: Does the paper discuss the limitations of the work performed by the authors?
    \item[] Answer: \answerYes{} We discuss the limitations of the work in Section \ref{limitations}.

\item {\bf Theory Assumptions and Proofs}
    \item[] Question: For each theoretical result, does the paper provide the full set of assumptions and a complete (and correct) proof?
    \item[] Answer: \answerYes{} We provide the assumptions and complete proof in Appendix \ref{proofs}. 

    \item {\bf Experimental Result Reproducibility}
    \item[] Question: Does the paper fully disclose all the information needed to reproduce the main experimental results of the paper to the extent that it affects the main claims and/or conclusions of the paper (regardless of whether the code and data are provided or not)?
    \item[] Answer: \answerYes{} See Section \ref{experiments} and Appendix \ref{settings}.

\item {\bf Open access to data and code}
    \item[] Question: Does the paper provide open access to the data and code, with sufficient instructions to faithfully reproduce the main experimental results, as described in supplemental material?
    \item[] Answer: \answerYes{} We released codes and data at \url{https://github.com/YujieMo/SCHOOL}.

\item {\bf Experimental Setting/Details}
    \item[] Question: Does the paper specify all the training and test details (e.g., data splits, hyperparameters, how they were chosen, type of optimizer, etc.) necessary to understand the results?
    \item[] Answer: \answerYes{} We specify all the training and test details in Appendix \ref{settings}. 

\item {\bf Experiment Statistical Significance}
    \item[] Question: Does the paper report error bars suitably and correctly defined or other appropriate information about the statistical significance of the experiments?
    \item[] Answer: \answerYes{} 

\item {\bf Experiments Compute Resources}
    \item[] Question: For each experiment, does the paper provide sufficient information on the computer resources (type of compute workers, memory, time of execution) needed to reproduce the experiments?
    \item[] Answer: \answerYes{} We list the details of experiments compute resources in Appendix \ref{details}.
    
\item {\bf Code Of Ethics}
    \item[] Question: Does the research conducted in the paper conform, in every respect, with the NeurIPS Code of Ethics \url{https://neurips.cc/public/EthicsGuidelines}?
    \item[] Answer: \answerYes{} 

\item {\bf Broader Impacts}
    \item[] Question: Does the paper discuss both potential positive societal impacts and negative societal impacts of the work performed?
    \item[] Answer: \answerYes{}  We discuss broder impacts in Section \ref{limitations}.

\item {\bf Safeguards}
    \item[] Question: Does the paper describe safeguards that have been put in place for responsible release of data or models that have a high risk for misuse (e.g., pretrained language models, image generators, or scraped datasets)?
    \item[] Answer: \answerNA{} 

\item {\bf Licenses for existing assets}
    \item[] Question: Are the creators or original owners of assets (e.g., code, data, models), used in the paper, properly credited and are the license and terms of use explicitly mentioned and properly respected?
    \item[] Answer: \answerYes{} 

\item {\bf New Assets}
    \item[] Question: Are new assets introduced in the paper well documented and is the documentation provided alongside the assets?
    \item[] Answer: \answerYes{} 

\item {\bf Crowdsourcing and Research with Human Subjects}
    \item[] Question: For crowdsourcing experiments and research with human subjects, does the paper include the full text of instructions given to participants and screenshots, if applicable, as well as details about compensation (if any)? 
    \item[] Answer: \answerNA{} 

\item {\bf Institutional Review Board (IRB) Approvals or Equivalent for Research with Human Subjects}
    \item[] Question: Does the paper describe potential risks incurred by study participants, whether such risks were disclosed to the subjects, and whether Institutional Review Board (IRB) approvals (or an equivalent approval/review based on the requirements of your country or institution) were obtained?
    \item[] Answer: \answerNA{} 
\end{enumerate}

\end{document}